\def\math#1{$#1$}
\def\mand#1{$$#1$$}
\def\mandc#1{\mand{\abovedisplayskip=3pt plus 1pt minus 1pt%
\abovedisplayshortskip=0pt plus 1pt minus 1pt%
\belowdisplayskip=3pt plus 1pt minus 1pt%
\belowdisplayshortskip=0pt plus 1pt minus 1pt%
#1}}
\def\v#1{{\mathbf #1}}
\def\frac#1#2{{#1\over #2}}
\def\mld#1{\begin{equation}
#1
\end{equation}}
\def\mldc#1{\mld{\abovedisplayskip=3pt plus 1pt minus 1pt%
\abovedisplayshortskip=0pt plus 1pt minus 1pt%
\belowdisplayskip=3pt plus 1pt minus 1pt%
\belowdisplayshortskip=0pt plus 1pt minus 1pt%
#1}}
\def\eqar#1{\begin{eqnarray}
#1
\end{eqnarray}}
\def\eqan#1{\begin{eqnarray*}
#1
\end{eqnarray*}}
\DeclareSymbolFont{AMSb}{U}{msb}{m}{n}
\DeclareMathSymbol{\N}{\mathbin}{AMSb}{"4E}
\DeclareMathSymbol{\Z}{\mathbin}{AMSb}{"5A}
\DeclareMathSymbol{\R}{\mathbin}{AMSb}{"52}
\DeclareMathSymbol{\Q}{\mathbin}{AMSb}{"51}
\DeclareMathSymbol{\I}{\mathbin}{AMSb}{"49}
\DeclareMathSymbol{\C}{\mathbin}{AMSb}{"43}
\def\choose#1#2{\left({{#1}\atop{#2}}\right)}
\def\cl#1{{\cal #1}}
\def\aa{{\mathbf a}}
\def\bb{{\mathbf b}}
\def\cc{{\mathbf c}}
\def\ee{{\mathbf e}}
\def\uu{{\mathbf u}}
\def\vv{{\mathbf v}}
\def\ww{{\mathbf w}}
\def\xx{{\mathbf x}}
\def\yy{{\mathbf y}}
\def\zz{{\mathbf z}}
\def\Z{{\mathbf Z}}
\def\exp{\mathop{\rm exp}}
\def\rank{\mathop{\rm rank}}
\def\norm#1{{\|#1\|}}
\def\trace{\mathop{\rm trace}}
\def\floor#1{{\left\lfloor\,#1\,\right\rfloor}}
\def\r#1{{(\ref{#1})}}
\newtheorem{theorem}{\bf Theorem}[section]
\newtheorem{lemma}[theorem]{Lemma}
\newtheorem{definition}[theorem]{Definition}
\def\Atilde{{\tilde\matA}}
\def\matA{{\mat{A}}}
\def\matB{{\mat{B}}}
\def\matC{{\mat{C}}}
\def\matD{{\mat{D}}}
\def\matG{{\mat{G}}}
\def\matH{{\mat{H}}}
\def\matI{{\mat{I}}}
\def\matP{{\mat{P}}}
\def\matQ{{\mat{Q}}}
\def\matR{{\mat{R}}}
\def\matS{{\mat{S}}}
\def\matU{{\mat{U}}}
\def\matV{{\mat{V}}}
\def\matW{{\mat{W}}}
\def\matX{{\mat{X}}}
\def\matZ{{\mat{Z}}}
\newcounter{exercisenum}
\def\dotfil{\leaders\hbox to 1.5mm{.}\hfill}
\newcounter{rmnum}
\def\RN#1{\setcounter{rmnum}{#1}\uppercase\expandafter{\romannumeral\value{rmnum}}}
\def\rn#1{\setcounter{rmnum}{#1}\expandafter{\romannumeral\value{rmnum}}}
\newcommand{\qedsymb}{\hfill{\rule{2mm}{2mm}}}
\def\Prob{\mathbb{P}}
\def\Exp{\mathbb{E}}
\def\rvX{\mathbf{X}}
\def\rvC{\mathbf{C}}
\newcommand{\mat}[1]{{\rm#1}}
\newcommand{\remove}[1]{}
\newcommand{\transp}{^{\textsc{t}}}
\title{Fast Fixed Dimension \math{\ell_2}-Subspace Embeddings
  of Arbitrary Accuracy, With Application to \math{\ell_1}
  and~\math{\ell_2} Tasks}
\author{Malik Magdon-Ismail\\RPI CS Department,\\ Troy, NY 12180. \and Alex Gittens\\RPI CS Department,\\ Troy, NY 12180.}
\begin{document}

\maketitle

\begin{abstract}
  \noindent
  We give a fast oblivious \math{\ell_2}-embedding
  of \math{\matA\in\R^{n\times d}} to \math{\Atilde\in\R^{r\times d}} satisfying
  \mandc{
      (1-\varepsilon)\norm{\matA\xx}_2^2\le\norm{\Atilde\xx}_2^2\le 
    (1+\varepsilon)\norm{\matA\xx}_2^2.
  }
  Our embedding dimension \math{r} equals \math{d}, a constant independent
  of the distortion~\math{\varepsilon}. We use as a
  black-box any \math{\ell_2}-embedding \math{\Pi\transp\matA} and inherit
  its runtime and accuracy, effectively decoupling the dimension \math{r}
  from runtime and accuracy, allowing 
  downstream machine learning applications to benefit
  from both a low dimension and high accuracy (in prior
  embeddings higher accuracy means higher dimension). We give
  applications of our \math{\ell_2} embedding to regression, PCA and
  statistical leverage scores. We also give applications to \math{\ell_1}: 
  (\rn{1})
  An oblivious \math{\ell_1} embedding with
  dimension \math{d+O(d\ln^{1+\eta} d)} and distortion
  \math{O((d\ln d)/\ln\ln d)}, with application to constructing well
  -conditioned bases;
  (\rn{2}) Fast approximation of \math{\ell_1} Lewis weights
  using our \math{\ell_2} embedding to quickly approximate
  \math{\ell_2}-leverage
  scores.

\end{abstract}

\section{Introduction}
Sketching via
a random projection is a staple tool in modern big-data machine
learning \cite{W2014} because many
algorithms are based on metric properties of the data
(SVM, PCA, Regression, Nearest-Neighbor-Rule, etc.). So, given a big-data
matrix \math{\matA\in\R^{n\times d}} and a sketch \math{\Atilde\in\R^{r\times d}}
which approximates the metric properties of \math{\matA}, one can
approximately recover
the learning from \math{\matA} by performing the actual learning
much more efficiently on \math{\Atilde}. The general property required of
\math{\Atilde} is that it be an isometry for \math{\matA},
\mld{
  (1-\varepsilon)\norm{\matA\xx}_2^2\le\norm{\Atilde\xx}_2^2\le 
  (1+\varepsilon)\norm{\matA\xx}_2^2,\label{eq:isometry}}
where \math{\varepsilon} is the distortion and \math{\norm{\cdot}_2} is
the Euclidean norm.
The typical focus has been on linear sketches,
\math{\Atilde=\Pi\transp\matA}, where \math{\Pi\in\R^{n\times r}} is
a random draw from a distribution on \math{n\times r} matrices.
If \math{\Pi\transp} projects the columns of \math{\matA} onto a random
\math{r}-dimensional subspace, then \math{\Atilde} is an isometry
with distortion \math{\varepsilon\in \Theta(\sqrt{d/r})}.
Projecting onto a random subspace is slow, taking time
\math{\Omega(nr^2)},
and several fast approximations have evolved:
\begin{itemize}\itemsep0pt
\item
\emph{Gaussian Random Projection: \math{\Atilde=\Pi_\textsc{g}\transp\matA}.}
The entries in \math{\Pi_\textsc{g}} are chosen independently 
as Gaussians with zero mean and variance \math{1/\sqrt{r}}.
\math{\Atilde} is an isometry with \math{\varepsilon
  \in \Theta(\sqrt{d/r})}, almost as good as a random subspace.
(The Gaussians can be replaced by random signs \cite{A2001}.)
The intuition is that a matrix of independent Gaussians is an approximately
orthonormal random basis.
Computing
\math{\Atilde} takes \math{O(ndr)} time, which is still slow when
\math{n} and \math{r} are large.
\item
  \emph{Fast Subsampled Random Hadamard Transform:}
  \math{\Atilde=\Pi_\textsc{h}\transp\matA=\text{sample}_r(\matH\matD\matA/\sqrt{r})}, which
  is a uniformly random sample of
  \math{r} rescaled-rows of \math{\matH\matD\matA}, where \math{\matH}
  is an orthogonal \math{n\times n}
  Hadamard matrix consisting of \math{\pm1}s
  and \math{\matD} is a diagonal matrix of random signs.
  The intuition is that \math{\matD\matA} ``looks'' random 
  with respect to the fixed orthonormal basis \math{\matH}.
  The
  time to compute \math{\Atilde} is \math{O(nd\log_2 r)} \cite{AL2013}
  and the distortion \math{\varepsilon\in\Theta(\sqrt{(d\ln d)/r})}.
  Unfortunately, the
  \math{\sqrt{\ln d}}-factor increase in distortion is unavoidable, see
  \cite{T2011}.
For \math{d\ll n \ll e^d},
the Hadamard transform is asymptotically faster than
Gaussian random projection, but requires a slightly larger \math{r} to obtain
a comparable embedding.
\item \emph{Sparse Projections:}
  In a further effort to improve runtime,
  it is convenient for the sketching matrix \math{\Pi\transp}
  to be sparse (having few non-zeros in each column).
  The \textsf{CountSketch} projection
  \math{\Pi_\textsc{c}\transp}~\cite{WW2018}
  has one non-zero in each
  column, a random sign at a random position.
  Computing the \textsf{CountSketch} projection has runtime
  \math{\tilde O(\textsc{nnz}(\matA))+\text{poly}(d,\varepsilon)} and gives
  distortion
    \math{\varepsilon\in\Theta(\sqrt{d^2/r})}. Similarly,
    the \textsf{OSNAP} projection \math{\Pi_\textsc{o}}~\cite{NN2013}
    has \math{s=O(\log_Bd)} non-zeros in each column,
    a random sign scaled by \math{1/\sqrt{s}} at \math{s}
    random positions. The runtime for the \textsf{OSNAP} projection is also
    \math{\tilde O(\textsc{nnz}(\matA))+\text{poly}(d,\varepsilon)}
    and the distortion is \math{\varepsilon\in\Theta(\sqrt{(B\cdot d\ln d)/r})}.
\end{itemize}  
Choosing \math{r}, the dimension of the embedding, is a double-edged sword.
Larger \math{r} gives
lower distortion, improving the
accuracy of the downstream machine learning which uses \math{\Atilde} instead
of \math{\matA} (\math{\Atilde} is a coreset of \math{r} data points
obtained from \math{\matA}).
But, larger \math{r} means it takes longer to compute the embedding. 
More importantly, larger \math{r} means
slower downstream runtime, which,
depending on the application, can be super-linear in \math{r}.
We take \math{r} as fixed by feasibility considerations for
the downstream application.
The task now is to
obtain the best possible accuracy (smallest distortion) given \math{r},
as quickly as possible.

A particularly bad data matrix for the randomized Hadamard is
\mld{
  \matA\transp
  =
  \left[
    \begin{matrix}
      \ee_1\ee_1\transp\\
      \ee_2\ee_2\transp\\
      \vdots\\
      \ee_d\ee_d\transp
    \end{matrix}
  \right]
  \in\R^{d\times d^2},
\label{eq:badA}}
which, via a coupon collecting argument, realizes the
\math{\sqrt{\ln d}}-factor increase in distortion~\cite{T2011}.
After adding a little random noise to each entry of
\math{\matA}, we show how the 
runtime and distortion for the fast Hadamard transform
depend on the embedding dimension \math{r} in
Figure~\ref{fig:distortion} (Figure~\ref{fig:distortion}(a) is runtime
and Figure~\ref{fig:distortion}(b) is distortion. The red curves
compare speed up and distortion of the fast Hadamard 
to the Gaussian random projection):
there is a considerable speedup even for small \math{r},
however the distortion is about 50\% larger for the
fast Hadamard as compared to the Gaussian. There are fast ways to recover
a more Gaussian-like embedding using fast Hadamard transforms,
\emph{without increasing the embedding dimension}. One approach is to iterate
the Hadamard transform a number of times~\cite{AR2014,KW2011},
for example \math{\Atilde=\text{sample}_r(\matH\matD_2\matH\matD_1\matA/\sqrt{r})}
where \math{\matD_1, \matD_2} are independent diagonal matrices of random
signs. With each application of the Hadamard one gets ``closer'' to projection
onto a random basis. The results are the green curves in
Figure~\ref{fig:distortion}. The speedup is halved with two
Hadamard steps, but the distortion is
considerably improved, even slightly better than the Gaussian embedding.
This is because the Gaussian embedding is not quite projection onto a random
orthonormal basis (there are both scaling and orthogonality discrepancies).

Our contribution is to give a simple, \emph{non-linear}, fast oblivious
embedding of
\math{\matA} to a \emph{fixed}-dimension \math{d} with distortion
\math{\varepsilon}, and its applications to some standard machine learning
applications: \math{\ell_2}-regression, low rank matrix reconstruction (PCA),
fast estimation of \math{\ell_2}-leverage scores, oblivious
\math{\ell_1}-embedding
and corresponding \math{\ell_1}-applications.
Our \math{\ell_2}-embedding is based the simple observation when you hit
\math{\matA} with a Gaussian random projection
you obtain a random matrix
\math{\Pi_{n\times r}\transp\matA}, which has the \emph{same} distribution as
\math{\Pi_{d\times r}\transp(\matA\transp\matA)^{1/2}},
\mld{
  \Pi_{n\times r}\transp\matA\sim \Pi_{d\times r}\transp(\matA\transp\matA)^{1/2}.
  }%
\begin{figure}[t]
{\tabcolsep0pt
  \begin{tabular}{p{0.5\textwidth}p{0.5\textwidth}}
\scalebox{0.5}{
%
\begin{tikzpicture}

\begin{axis}[%
width=4.521in,
height=3.566in,
at={(0.758in,0.481in)},
scale only axis,
xmin=2,
xmax=30,
xtick={10,20,30},
xlabel style={font=\color{white!15!black}},
xlabel={Embedding Dimension $r/d$},
ymin=0,
ymax=3.4,
ytick={1,2,3},
yticklabels={{$\times1$},{$\times2$},{$\times3$}},
ylabel style={font=\color{white!15!black}},
ylabel={Speedup w.r.t. Gaussian},
axis background/.style={fill=white},
font=\huge,xlabel style={font=\color{black},scale=1.9},ylabel style={font=\color{black},scale=1.9},ticklabel style={font=\scshape,scale=1.7},
]
\addplot [color=red, line width=3.0pt, forget plot]
  table[row sep=crcr]{%
2	0.254693377682592\\
14	1.74588259362477\\
18	2.24090494406281\\
20	2.48727082801068\\
24	2.98174223644046\\
28	3.47529200849877\\
};
\addplot [color=green, line width=3.0pt, forget plot]
  table[row sep=crcr]{%
2	0.125262287347933\\
18	1.10307856398505\\
30	1.83271637415552\\
};
\addplot [color=blue, line width=3.0pt, forget plot]
  table[row sep=crcr]{%
2	0.251931234451359\\
4	0.495298388804034\\
6	0.736766851414934\\
8	0.975430494725213\\
10	1.21178082081796\\
12	1.4458802069452\\
14	1.67757243307166\\
16	1.90433069091529\\
18	2.1330600303493\\
20	2.35541605765288\\
22	2.57920009960828\\
26	3.01640839463379\\
28	3.23204470453444\\
30	3.44420779432512\\
};
\node[right, align=left]
at (axis cs:20.5,2.3) {This Paper};
\node[right, align=left]
at (axis cs:10,3) {$\text{sample}_r(\mat{RHT}(\matA))$};
\node[right, align=left]
at (axis cs:15,0.75) {$\text{sample}_r(\mat{RHT}^2(\matA))$};
\end{axis}
\end{tikzpicture}
&
\scalebox{0.5}{
%
\begin{tikzpicture}

\begin{axis}[%
width=4.521in,
height=3.566in,
at={(0.758in,0.481in)},
scale only axis,
xmin=2,
xmax=30,
xtick={10, 20, 30},
xlabel style={font=\color{white!15!black}},
xlabel={Embedding Dimension $r/d$},
ymin=0,
ymax=1.5,
ytick={  0, 0.5,   1, 1.5},
ylabel style={font=\color{white!15!black}},
ylabel={Distortion ${\varepsilon/\varepsilon_{\textsc{g}}}$\raisebox{-13pt}{}},
axis background/.style={fill=white},
font=\huge,xlabel style={font=\color{black},scale=1.9},ylabel style={font=\color{black},scale=1.9},ticklabel style={font=\scshape,scale=1.7},
]
\addplot [color=red, line width=3.0pt, forget plot]
  table[row sep=crcr]{%
2	1.40121322998538\\
4	1.41882831467383\\
6	1.42817202635428\\
8	1.43132795493838\\
10	1.43208562634321\\
12	1.42948677951255\\
14	1.43130991450781\\
16	1.42969322700985\\
18	1.42650328093548\\
20	1.42783897579698\\
24	1.42575740742399\\
28	1.42346473025427\\
30	1.42045311356539\\
};
\addplot [color=green, line width=3.0pt, forget plot]
  table[row sep=crcr]{%
2	0.99601453016734\\
6	0.990305466824505\\
20	0.973971328996264\\
30	0.962407496893018\\
};
\addplot [color=blue, line width=3.0pt, forget plot]
  table[row sep=crcr]{%
2	0.372211664825663\\
4	0.386552994242503\\
6	0.389337262264061\\
8	0.388408266209105\\
10	0.384899626503127\\
12	0.379897863734246\\
14	0.372922695531408\\
16	0.365265099366766\\
18	0.356861764970006\\
22	0.337691723202667\\
24	0.327045951106708\\
26	0.315764279647265\\
30	0.29126600630179\\
};
\addplot [color=blue, line width=3.0pt, forget plot]
  table[row sep=crcr]{%
2	1.10197829555148\\
4	1.08333438585097\\
6	1.07306995542074\\
8	1.06638131522799\\
10	1.06079749593511\\
14	1.05211417184308\\
24	1.03488274462226\\
30	1.02649058143451\\
};
\addplot [color=black, line width=2.0pt, forget plot]
  table[row sep=crcr]{%
2	1\\
30	1\\
};
\node[right, align=left]
at (axis cs:7,1.14) {(\rn{1}) $\matA\mapsto\Pi\transp_{d\times r}(\Atilde\transp\Atilde)^{1/2}$};
\node[right, align=left]
at (axis cs:7,0.47) {(\rn{2}) This Paper: $\matA\mapsto(\Atilde\transp\Atilde)^{1/2}$};
\node[right, align=left]
at (axis cs:7,1.33) {$\text{sample}_r(\mat{RHT}(\matA))$};
\node[right, align=left]
at (axis cs:7,0.85) {$\text{sample}_r(\mat{RHT}^2(\matA))$};
\end{axis}
\end{tikzpicture}
\\
\multicolumn{1}{c}{(a) Speedup over Gaussian embedding}&
\multicolumn{1}{c}{(b) \math{\varepsilon=\text{distortion}},
  \math{\varepsilon_\textsc{g}=\text{Gaussian distortion}}.}
  \end{tabular}
  }
\caption{Comparison of various fast embeddings with the Gaussian embedding.
  We use the matrix \math{\matA} as given in \r{eq:badA} with \math{d=2^9}.
  In (b), the distortion of \math{\Atilde} is computed as
  \math{\varepsilon
    =
    \norm{\matI-(\matA\transp\matA)^{-1/2}
      \Atilde\transp\Atilde
      (\matA\transp\matA)^{-1/2}}_2}.
  \label{fig:distortion}}
\end{figure}
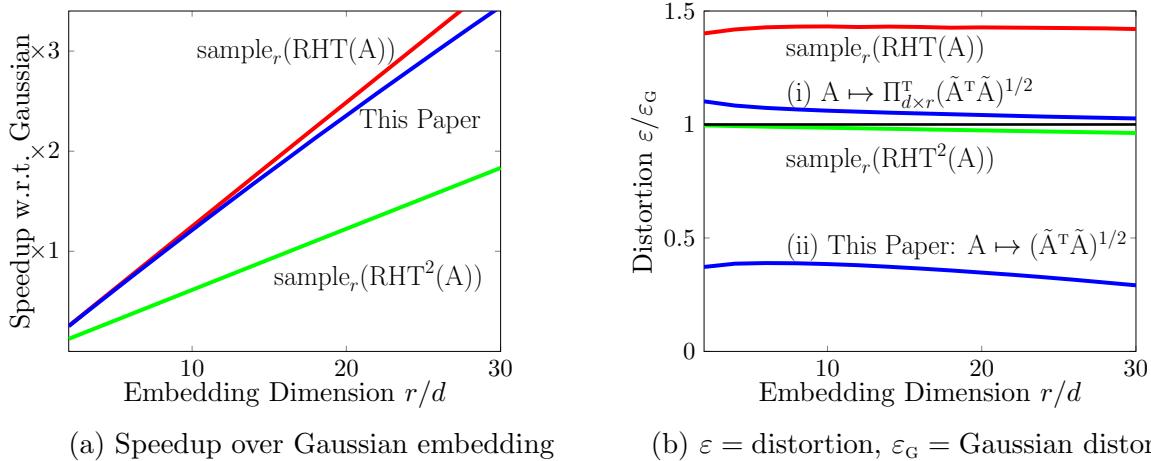%
Both the LHS and the RHS give an embedding with
\emph{identical} statistical properties. To see this, observe
that for both the LHS and RHS, each row is an independent identically
distributed Gaussian vector, so it suffices to compare covariance
matrices. A simple calculation shows that the row-covariance matrix
for both constructions is \math{\matA\transp\matA}.
However the LHS takes time
\math{O(nrd)} to compute using \math{nr} draws from the standard normal
distribution, while the RHS takes time
\math{O(nd^2+d^3+rd^2)} using \math{dr} draws from the standard normal
distribution.\footnote{Here, for simplicity of exposition,
  we quote runtimes using
  naive matrix multiplication. One can obtain further speedups using
  faster matrix multiplication algorithms with exponent \math{\omega}:
  the \math{(r\times n)}times\math{(n\times d)} product can be computed in
  time \math{O(nrd^{\omega-2})}, where \math{d\le r\le n}.
  The current best exponent is \math{\omega\approx 2.373} \cite{W2012}.}
In a typical application,
\math{d\ll r\ll n}, so we already have significant
computational gains from this simple observation.
We can further
improve the runtime by approximating \math{\matA\transp\matA} rather
than computing exactly,
\mld{
  \Atilde= \Pi_{\textsc{g},d\times r}\transp(\matA\transp\Pi_{n\times r_1}\Pi_{n\times r_1}\transp\matA)^{1/2}
  \qquad\qquad
  \text{(Figure~\ref{fig:distortion}b(\rn{1}))
  \label{eq:first}},
}%
where \math{\Pi_{n\times r_1}} can be any of the fast random projections
(Hadamard, \textsf{CountSketch}, \textsf{OSNAP}). The dimension
of the random projection \math{r_1} which is used to approximate
\math{\matA\transp\matA} can now be chosen \emph{independently} of the dimension
of the final embedding \math{r}.
In Figure~\ref{fig:distortion} we use a fast Hadamard transform
with \math{r_1=10 r} to compute the approximation
\math{\Atilde=\Pi_{\textsc{g},r\times d}
  (\matA\transp\Pi_{\textsc{h}}\Pi_{\textsc{h}}\transp\matA)^{1/2}}.
One more simple observation results in our final embedding. There is no
longer any need to have the Gaussian projection because the dimension is
already small. In fact, the additional Gaussian projection just in
\r{eq:first} just makes things worse by adding distortion.
Thus our final \math{\ell_2}-embedding is
\mld{
  \Atilde=(\matA\transp\Pi_{n\times r_1}\Pi_{n\times r_1}\transp\matA)^{1/2}
  \qquad\qquad
  \text{(Figure~\ref{fig:distortion}b(\rn{2}))
  \label{eq:second}},
}%
which is a simple, fast, nonlinear oblivious
embedding into a fixed dimension \math{d}.
Note, the significantly better distortion in 
Figure~\ref{fig:distortion}b(\rn{2}) is because we may choose
the inner-dimension \math{r_1} to optimize speed and accuracy of the
approximation to \math{\matA\transp\matA}. We can do this because the embedding
dimension for downstream machine learning is fixed at \math{d}
(the smallest possible that can  preserve norms). We have
\emph{decoupled} the efficiency of downstream machine learning (controled
by \math{d}) from
the accuracy of the embedding (controled by \math{r_1}).

\subsection{Notation}

Throughout, the target matrix \math{\matA} is a fixed
\math{n\times d} real-valued matrix, which we take to be full rank
(\math{d\ll n}).
Uppercase roman (\math{\matA,\matB,\matC,\matX\ldots}) are 
matrices, and
lowercase bold  (\math{\aa,\bb,\cc,\xx,\yy,\zz,\ldots}) are vectors.
The standard Euclidean
basis vectors are \math{\ee_1,\ee_2,\ldots} (the dimension will usually be 
clear from the context). We use the shorthand
\math{[k]} for the set \math{\{1,\ldots,k\}}.

The
singular value decomposition (SVD)
allows us to write
\math{\matA=\matU\Sigma\matV\transp}, where
the columns of \math{\matU\in\R^{n\times d}} are the
left singular vectors, the columns of 
\math{\matV\in\R^{d\times d}} are the
\math{\rho} right singular vectors, and \math{\Sigma\in\R^{d\times d}} 
is a diagonal matrix of positive singular values \math{\sigma_1\ge\cdots\ge
\sigma_d}; \math{\matU} and \math{\matV} are orthonormal,
so \math{\matU\transp\matU=\matV\transp\matV=\matI_d} \cite{GV96}. 
For integer \math{k}, we use 
\math{\matU_k\in\R^{n\times k}} (resp. \math{\matV_k\in\R^{d\times k}}) 
for the first \math{k} left (resp. right)
singular vectors, and \math{\Sigma_k\in\R^{k\times k}} is the 
diagonal matrix of corresponding top-\math{k} singular values.
We can view a matrix as a row of columns. So, 
\math{\matA=[\aa_1,\ldots,\aa_d]}, 
\math{\matU=[\uu_1,\ldots,\uu_\rho]},
\math{\matV=[\vv_1,\ldots,\vv_\rho]},
\math{\matU_k=[\uu_1,\ldots,\uu_k]} and 
\math{\matV_k=[\vv_1,\ldots,\vv_k]}. Similarly,
we may write 
\math{\matA\transp=[\xx_1,\ldots,\xx_n]}, where \math{\xx_i} is the
\math{i}th row of \math{\matA} (the data points).
We also use \math{\matA_{(i)}} and \math{\matA^{(j)}} to refer to the
\math{i}th row and \math{j}th column respectively of the matrix
\math{\matA}.

The Frobenius (Euclidean) \math{\ell_2}-norm of a matrix \math{\matA} is 
\math{\norm{\matA}_F^2=\sum_{ij}\matA_{ij}^2=\trace(\matA\transp\matA)=
\trace(\matA\matA\transp)}. The pseudo-inverse 
\math{\matA^\dagger} of \math{\matA} with 
SVD \math{\matU_\matA\Sigma_\matA\matV_\matA\transp} is  
\math{\matA^\dagger=\matV_\matA\Sigma_\matA^{-1}\matU_\matA\transp};
\math{\matA\matA^\dagger=\matU_\matA\matU_\matA\transp} is a symmetric 
projection operator.
\math{\norm{\matA}_2} is
the operator/spectral norm of \math{\matA} (top singular value), and the
\math{\ell_1}-norm of \math{\matA} is
\math{\norm{\matA}_1=\sum_{i\in[n],j\in[d]}|\matA_{ij}|
  =\sum_{i\in[n]}\norm{\matA_{(i)}}_1=\sum_{j\in[d]}\norm{\matA^{(j)}}_1}.

We use \math{c,c_1,c_2,\ldots} to generically
denote absolute constants whose values may change with each instance in which
they appear.

\section{Our Results}

We contribute two main tools: new low distortion embeddings for
\math{\ell_2} and \math{\ell_1}. The \math{\ell_1} embedding is an application
of the new \math{\ell_2} embedding, and as such can be extended
to an \math{\ell_p} embedding using the techniques in
\cite{WW2018} (we only give details for \math{\ell_1} which is the most
useful in machine learning, specifically \math{\ell_1}-regression, or
robust regression.)

\subsection{Oblivious \math{\ell_2} and \math{\ell_1} Subspace Embeddings}
To state our result for
\math{\ell_2}, we need to define an \math{\ell_2}-subspace embedding for
an orthogonal matrix, which we call an \math{\varepsilon}-JLT.
\begin{definition}[\math{\varepsilon}-JLT]
  An embedding
  matrix \math{\Pi\in\R^{n\times r}} is an \math{\varepsilon}-JLT for an orthogonal
  matrix \math{\matU\in\R^{n\times d}} if
  \mld{
    \norm{\matI-\matU\transp\Pi\Pi\transp\matU}_2\le\varepsilon.
  }
\end{definition}
Our \math{\ell_2} embedding is given by \r{eq:second}, where
\math{\Pi} is an \math{\varepsilon}-JLT for the left singular vectors of
\math{\matA}. 
  \begin{theorem}[Oblivious {\math{\ell_2}-Subspace Embedding into \math{\R^d}}]\label{theorem:L2-subspace}
  Let \math{\matA\in\R^{n\times d}} have SVD \math{\matA=\matU\Sigma\matV\transp},
  and let \math{\Pi\in\R^{n\times r}} be an \math{\varepsilon}-JLT for \math{\matU}.
  Let \math{\Atilde=(\matA\transp\Pi\Pi\transp\matA)^{1/2}}. Then,
  \math{\Atilde} is an isometry for \math{\matA}. That is,
    \mld{\hspace*{-1in}
  \text{for all \math{\xx\in\R^{d}}:}\qquad\qquad
  (1-\varepsilon)\norm{\matA\xx}^2_2\le\norm{\Atilde\xx}^2_2\le
    (1+\varepsilon)\norm{\matA\xx}^2_2.
  }
\end{theorem}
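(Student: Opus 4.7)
The plan is to reduce the claim directly to the $\varepsilon$-JLT hypothesis on $\matU$ by change of variables through the SVD. First, I note that $\matA\transp\Pi\Pi\transp\matA$ is symmetric positive semidefinite (it equals $\matB\matB\transp$ for $\matB = \matA\transp\Pi$), so its principal square root $\Atilde$ is well-defined, symmetric, and satisfies $\Atilde\transp\Atilde = \Atilde^2 = \matA\transp\Pi\Pi\transp\matA$. Consequently
\mldc{
  \norm{\Atilde\xx}_2^2 \;=\; \xx\transp\matA\transp\Pi\Pi\transp\matA\xx,
  \qquad
  \norm{\matA\xx}_2^2 \;=\; \xx\transp\matA\transp\matA\xx,
}
so the claim reduces to the quadratic-form sandwich
$(1-\varepsilon)\,\xx\transp\matA\transp\matA\xx \le \xx\transp\matA\transp\Pi\Pi\transp\matA\xx \le (1+\varepsilon)\,\xx\transp\matA\transp\matA\xx$.

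Next, I would plug in the SVD $\matA = \matU\Sigma\matV\transp$ and introduce the change of variable $\yy = \Sigma\matV\transp\xx \in \R^d$. Since $\matU\transp\matU = \matI_d$, this gives $\xx\transp\matA\transp\matA\xx = \yy\transp\yy = \norm{\yy}_2^2$ and $\xx\transp\matA\transp\Pi\Pi\transp\matA\xx = \yy\transp(\matU\transp\Pi\Pi\transp\matU)\yy$. The desired inequalities are thus equivalent to
\mldc{
  (1-\varepsilon)\norm{\yy}_2^2 \;\le\; \yy\transp(\matU\transp\Pi\Pi\transp\matU)\yy \;\le\; (1+\varepsilon)\norm{\yy}_2^2
  \qquad \text{for all } \yy \in \R^d.
}

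Finally, the $\varepsilon$-JLT hypothesis $\norm{\matI - \matU\transp\Pi\Pi\transp\matU}_2 \le \varepsilon$ says exactly that the symmetric matrix $\matU\transp\Pi\Pi\transp\matU - \matI$ has all eigenvalues in $[-\varepsilon, \varepsilon]$, which by the variational characterization of the spectral norm yields $|\yy\transp(\matU\transp\Pi\Pi\transp\matU - \matI)\yy| \le \varepsilon \norm{\yy}_2^2$ for every $\yy$. Rearranging gives the two-sided bound above, completing the argument after undoing the change of variables. There is essentially no obstacle here: the only subtlety is recognizing that the nonlinear definition $\Atilde = (\matA\transp\Pi\Pi\transp\matA)^{1/2}$ is consistent (the argument is PSD) and that $\Atilde\transp\Atilde$ collapses to $\matA\transp\Pi\Pi\transp\matA$, so all the work is handed off to the JLT property of $\Pi$ on the $d$-dimensional column space of $\matU$ rather than the full $n$-dimensional ambient space.
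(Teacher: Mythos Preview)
Your proof is correct and follows essentially the same route as the paper: collapse \math{\norm{\Atilde\xx}_2^2} to \math{\xx\transp\matA\transp\Pi\Pi\transp\matA\xx}, substitute the SVD so that the difference becomes a quadratic form in \math{\Sigma\matV\transp\xx} against \math{\matI-\matU\transp\Pi\Pi\transp\matU}, and invoke the \math{\varepsilon}-JLT spectral bound. The only cosmetic difference is that you name the change of variables \math{\yy=\Sigma\matV\transp\xx} explicitly, whereas the paper keeps it inline.
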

  \paragraph{Comments.}
  The \math{\ell_2}-subspace embedding is oblivious if \math{\Pi} is oblivious
  (universal, working for any \math{\matA}). It is also
  fast and simple, but nonlinear.
  The embedding dimension is~\math{d}, which is
  optimal since the rank of \math{\matA} must
  be preserved.
  The main impact of Theorem~\ref{theorem:L2-subspace} is that the
  embedding dimension is a constant, independent of
  the distortion \math{\varepsilon}. This means that \math{\varepsilon}
  can be independently optimized, either for runtime or accuracy or both. 
  Computing an accurate \math{\ell_2}-subspace embedding reduces to
  approximating
  the covariance matrix \math{\matA\transp\matA} quickly.
  The runtime has 3 terms:
  \begin{enumerate}
  \item The time to compute \math{\Pi\transp\matA}. Our main focus is dense
    matrices.
    The fast subsampled random Hadamard transform,
  \math{\Pi_\textsc{h}\matA=\text{sample}_r(\matH\matD\matA/\sqrt{r})}, with
  \math{r\ge\frac{5}{12}\varepsilon^{-2}(\sqrt{d}+\sqrt{\ln(3tn)})^2\ln d},
  is an
  oblivious construction which produces an \math{\varepsilon}-JLT for
  \math{\matU} with probability at least \math{1-1/t}
  (Lemma~\ref{lemma:SRHT-tropp}).
    Assuming \math{\ln n\le d}, the runtime is
    \math{O(nd\ln(d/\varepsilon))}. For sparse matrices, one can use
    \textsf{CountSketch}~\cite{CW2013} or \textsf{OSNAP}~\cite{NN2013}
    to get runtimes which depend
    on \math{\textsc{nnz}(\matA)} instead of \math{nd}. \textsf{CountSketch}
    and \textsf{OSNAP} are also oblivious to \math{\matA} and produce
    fast sketches which facilitate distributed and streaming environments.
  \item
    The time to compute \math{\Atilde\transp\Atilde},
    which is \math{O(rd^{\omega-1})} (\math{\omega} is the exponent
    for matrix multiplication).
  \item
    The time to compute the square-root of a \math{d\times d} matrix, which
    is \math{O(d^3)}.
  \end{enumerate}
  The total runtime to compute the subspace embedding is
  in \math{O(nd\ln(d/\varepsilon)+(d^\omega\ln d)/\varepsilon^{2}+d^3)}.
  
  Our first application of the new \math{\ell_2}-embedding is a similar tool
  for \math{\ell_1}, that is an oblivious \math{\ell_1}-subspace embedding.
  We use the \math{\ell_2}-subspace embedding as a black-box for obtaining an
  \math{\ell_1}-subspace embedding using the construction in
  \cite{WW2018}. Fix a parameter \math{t>0} which controls the
  failure probability.
  Let \math{S_1,\ldots,S_r} be random partition of
  \math{[n]} into \math{r} bins, where
  \math{r\sim d\ln d.}
  Define the subspace embedding
  \math{\Pi_2\transp\matA\in\R^{r\times d}} as follows: rescale each
  row \math{\matA_{(i)}} by an
  independent Cauchy random variable \math{\rvC_i} and add the rescaled rows in
  \math{S_j} to get the \math{j}th row of \math{\Pi_2\transp\matA},
  \mld{
    [\Pi_2\transp\matA]_{(j)}=\sum_{i\in S_j}\rvC_i\cdot\matA_{(i)}=
    \sum_{i\in S_j}\rvC_i\cdot\ee_i\transp\matA.
    \label{eq:intro-pi2}
  }
  The embedding \math{\Pi_2\transp\matA} can be computed in time
  \math{O(\textsc{nnz}(\matA))}.
  The \math{\ell_1}-subspace embedding is obtained by concatenating our
  \math{\ell_2}-embedding with \math{\Pi_2\transp\matA}:
  \mld{
  \Atilde=
  \left[
  \begin{matrix}
    \sqrt{d}\ln(td)\cdot\Atilde_1\\
    \Atilde_2
  \end{matrix}
  \right]
  =
  \left[
    \begin{matrix}
    \sqrt{d}\ln (td)\cdot(\matA\Pi_1\Pi_1\transp\matA)^{1/2}\\
    \Pi_2\transp\matA
    \end{matrix}
    \right],
  \label{eq:embedding-def-intro}
}
where \math{\Atilde_1} is our \math{\ell_2}-embedding and \math{\Atilde_2} is
the \math{\ell_1} part of the embedding from \cite{WW2018}. The dimension
of the embedding is \math{d+r}. Our main tool for \math{\ell_1}-subspace
embedding is Theorem~\ref{theorem:main-embedding} which we summarize here.
\begin{theorem}[Oblivious {\math{\ell_1}-Subspace Embedding into \math{\R^{d+r}}}]\label{theorem:L1-subspace}
  Let \math{\matA\in\R^{n\times d}} have SVD \math{\matA=\matU\Sigma\matV\transp},
  and let \math{\Pi_1\in\R^{n\times r_1}} be a \math{\frac12}-JLT for
  \math{\matU}.
  Let \math{\Atilde_1=(\matA\transp\Pi_1\Pi_1\transp\matA)^{1/2}} and let
  \math{\Atilde_2=\Pi_2\transp\matA} as given in \r{eq:intro-pi2}.
  Construct \math{\Atilde} from \math{\Atilde_1} and \math{\Atilde_2} as
  in \r{eq:embedding-def-intro}. Then, with constant probability,
  for all \math{\xx\in\R^{d}},
  \begin{enumerate}[label={(\roman*)}]
  \item Setting  \math{r\in O(d\ln d)} gives a distortion
    \math{O(d\ln d)}:
      \mld{
    \Omega(1)\cdot \norm{\matA\xx}_1\le\norm{\Atilde\xx}_1\le
    O(d\ln d)\cdot \norm{\matA\xx}_1.\label{eq:l1-embedding-1}
  }
    \item Setting \math{r\in O(d\ln^{1+\eta} d)} for any \math{0<\eta\le\frac13}
      gives distortion \math{O((d\ln d)/\ln\ln d)}:
      \mld{
    \Omega(\ln\ln d)\cdot \norm{\matA\xx}_1\le\norm{\Atilde\xx}_1\le
    O(d\ln d) \cdot \norm{\matA\xx}_1.\label{eq:l1-embedding-2}
    }
  \end{enumerate}
\end{theorem}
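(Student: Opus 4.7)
The plan is to analyze the two blocks of $\Atilde$ separately, and within the lower bound to split vectors $\matA\xx$ into a \emph{concentrated} class handled by the $\ell_2$-embedding $\Atilde_1$ and a \emph{well-spread} class handled by the sparse Cauchy block $\Atilde_2$. Applying Theorem~\ref{theorem:L2-subspace} with $\varepsilon = \tfrac{1}{2}$ immediately gives, for every $\xx \in \R^d$,
\begin{equation*}
\tfrac{1}{\sqrt{2}}\,\|\matA\xx\|_2 \;\le\; \|\Atilde_1\xx\|_2 \;\le\; \sqrt{3/2}\,\|\matA\xx\|_2,
\end{equation*}
which, combined with the elementary inequality $\|\vv\|_2 \le \|\vv\|_1 \le \sqrt{d}\,\|\vv\|_2$ for $\vv \in \R^d$, translates into two-sided control on $\|\Atilde_1\xx\|_1$ in terms of $\|\matA\xx\|_2$.

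For the upper bound (used in both (i) and (ii)) I would bound the two blocks separately. The first block satisfies $\sqrt{d}\ln(td)\,\|\Atilde_1\xx\|_1 \le d\ln(td)\sqrt{3/2}\,\|\matA\xx\|_2 \le O(d\ln d)\,\|\matA\xx\|_1$. For the Cauchy block I would invoke the standard sparse Cauchy tail analysis from \cite{WW2018}: by Cauchy stability, the $j$th coordinate of $\Pi_2\transp\matA\xx$ is Cauchy-distributed with scale $\|[\matA\xx]_{S_j}\|_1$, so truncating each half-Cauchy at $O(\ln d)$ yields $O(\ln d)\,\|\matA\xx\|_1$ for a fixed $\xx$, and an $\varepsilon$-net over the $d$-dimensional column space of $\matA$ lifts this to a uniform $O(d\ln d)\,\|\matA\xx\|_1$ bound once $r \in \Omega(d\ln d)$.

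For the lower bound I would fix a threshold $\theta > 0$ and call $\xx$ \emph{heavy} when $\|\matA\xx\|_2 \ge \theta\,\|\matA\xx\|_1$, \emph{light} otherwise. In the heavy case the $\Atilde_1$ block alone produces
\begin{equation*}
\sqrt{d}\ln(td)\,\|\Atilde_1\xx\|_1 \;\ge\; \sqrt{d}\ln(td)\,\|\Atilde_1\xx\|_2 \;\ge\; \tfrac{1}{\sqrt{2}}\,\theta\,\sqrt{d}\ln(td)\,\|\matA\xx\|_1,
\end{equation*}
so $\theta \approx 1/(\sqrt{d}\ln(td))$ yields the $\Omega(1)$ factor of (i) and $\theta \approx \ln\ln d/(\sqrt{d}\ln(td))$ yields the $\Omega(\ln\ln d)$ factor of (ii). In the light case the vector $\matA\xx$ is so spread out that its $\ell_1$-mass is distributed across many bins; by Cauchy stability a typical bin $j$ satisfies $|[\Pi_2\transp\matA\xx]_j| \gtrsim \|[\matA\xx]_{S_j}\|_1$, and summing the typical bins (with truncated tail control on the exceptional ones) gives $\|\Atilde_2\xx\|_1 \ge \Omega(1)\,\|\matA\xx\|_1$; increasing $r$ from $\Theta(d\ln d)$ to $\Theta(d\ln^{1+\eta}d)$ boosts this constant by $\ln\ln d$ to match the strengthened heavy bound.

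The main obstacle is the \emph{uniformity} of the light-case lower bound on $\|\Atilde_2\xx\|_1$. Cauchy tails decay only polynomially, so achieving the per-vector failure probability $e^{-\Omega(d)}$ needed to survive an $e^{\Omega(d)}$-size net cannot be done by naive concentration: one must simultaneously exploit the large number of bins $r$ and the spread-out structure of the light class, essentially mirroring the sparse-Cauchy analysis of \cite{WW2018}. This is also where the passage from (i) to (ii) arises; raising $r$ both enlarges the family of vectors that qualify as light and boosts the light-case lower bound by $\ln\ln d$, and the threshold $\theta$ must be raised in lockstep so that $\Atilde_1$ continues to handle the heavy class.
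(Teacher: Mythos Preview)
Your heavy/light split for the contraction bound is exactly the paper's approach: the paper sets a threshold $\beta$ (your $1/\theta$), handles the ``sparse'' case $\norm{\matA\xx}_2\ge\norm{\matA\xx}_1/\beta$ via the $\ell_2$-block (Lemma~\ref{lemma:sparse}), and the ``dense'' case via negative dependence of the bin masses $\gamma_j$ plus a lower-tail bound on sums of $|\rvC_j|$ (Lemmas~\ref{lemma:negative}, \ref{lemma:lower-cauchy-2}, \ref{lemma:dense-y}), then passes through a $\delta$-net. Your choices of $\theta$ and your explanation of the $\ln\ln d$ gain from the larger $r$ also track the paper's Lemmas~\ref{lemma:choose-parameters} and~\ref{lemma:choose-parameters-2}.

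There is, however, a real gap in your dilation argument. You propose to bound $\norm{\Pi_2\transp\matA\xx}_1$ for each fixed $\xx$ by Cauchy truncation, and then union-bound over an $\varepsilon$-net. This does not work: the \emph{upper} tail of a weighted sum of $|\rvC_i|$'s is only polynomially small (the bound the paper quotes from \cite{malik209} is $\Prob[\sum\gamma_i|\rvC_i|\ge t\sum\gamma_i]\lesssim \ln(t\cdot rd)/t$), so you cannot drive the per-vector failure probability to $e^{-\Omega(d\ln d)}$ while keeping $t=O(d\ln d)$. The paper (and \cite{WW2018}) sidesteps the net entirely on the dilation side by fixing an Auerbach basis $\matU$ for the range of $\matA$ and bounding the \emph{single} random quantity $\norm{\Pi_2\transp\matU}_1$; then for every $\xx$ one has deterministically $\norm{\Pi_2\transp\matA\xx}_1\le\norm{\Pi_2\transp\matU}_1\cdot\norm{\matA\xx}_1$ via $\norm{\matR\xx}_\infty\le\norm{\matU\matR\xx}_1$ (Lemmas~\ref{lemma:upper} and~\ref{lemma:upper-Pi2U}). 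This is the device you are missing.

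A related omission: in the paper the $\delta$-net extension of the contraction bound explicitly \emph{uses} the uniform dilation bound $\kappa$ (Theorem~\ref{theorem:contraction} writes $\yy=\zz+(\yy-\zz)$ and controls $\norm{\Atilde\xx_2}_1\le\kappa\delta$). So the dilation must be established first, uniformly in $\xx$, before the contraction net argument can run; your plan treats them as independent, but they are coupled.
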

\paragraph{Comments.}
The embedding is oblivious as long as \math{\Pi_1} in
\r{eq:embedding-def-intro} is oblivious.
We compare the most recent results in~\cite{WW2018} with
Theorem~\ref{theorem:L1-subspace}. Our construction is
the same as in~\cite{WW2018}, but using our \math{\ell_2}-embedding
which influences the choice of \math{r}. The \math{\ell_1}-embedding
in~\cite{WW2018} is based on the
\textsf{CountSketch} and \textsf{OSNAP} \math{\ell_2}-embeddings.
The \textsf{CountSketch} approach in~\cite{WW2018} embeds into
\math{r=O(d^2)+O(d\log^2d)} dimensions (the first term is from the
\math{\ell_2}-\textsf{CountSketch}-part and the second term is from the
\math{\ell_1} part). The distortion achieved is \math{O(d)}.
The \textsf{OSNAP} approach in~\cite{WW2018} embeds into
\math{r=O(B\cdot d\ln d)} dimensions with distortion
\math{O(d\log_B d)}, where \math{B} is a parameter controling
the accuracy and runtime of the \math{\ell_2}-embedding.
For constant \math{B}, the asymptotic behavior of the
\textsf{OSNAP} approach is comparable to our algorithm
(part (\rn{1}) in Theorem~\ref{theorem:L1-subspace}).
Setting \math{B=\ln d} in the \textsf{OSNAP} approach
gives embedding dimension \math{O(d\ln^2d)} with distortion
\math{O((d\ln d)/\ln\ln d)}, which achieves comparable
distortion to our result
(part (\rn{2}) in Theorem~\ref{theorem:L1-subspace}), but with a
\math{(\ln d)}-factor increase in the embedding dimension.
In summary, 
\begin{enumerate}[label={(\arabic*)}]
\item Our embedding dimension is slightly tighter which can be
  important for applications.
\item The dimension and accuracy of the overall
  \math{\ell_1}-embeding are
  separated from the runtime and dimension of the \math{\ell_2} part of the
  embedding.
  Therefore, any good \math{\ell_2}-embedding can be used.
\end{enumerate}
Regarding point (2) above, our \math{\ell_1}-embedding trully
uses \emph{any} \math{\ell_2} embedding as a black-box.
Whereas, the \textsf{CountSketch} and \textsf{OSNAP} approaches are specific
to those embeddings because those \math{\ell_2}-embeddings, by
construction, preserve \math{\ell_1}-dilation.
For a typical black-box \math{\ell_2}-embedding, this may not be the case,
hence, we need a different proof
to accommodate an arbitrary
\math{\ell_2}-embedding. Though our proof accomodates an arbitrary
\math{\ell_2} embedding, the distortion we achieve depends on the
\math{\ell_2}-embedding dimension, so \r{eq:l1-embedding-1} and
\r{eq:l1-embedding-2} apply for fixed
\math{\ell_2}-embedding dimension \math{d}.

\subsection{\math{\ell_2}-Applications}

Our nonlinear embedding can be used in the standard
\math{\ell_2} applications to obtain near-optimal (in relative error)
low-rank approximation, regularized linear regression and leverage scores. 
Since our embeding is nonlinear, we need to adapt and or re-analyze the 
existing
algorithms which use a linear embedding to quickly approximate the
corresponding tasks.

\subsubsection{Regularized \math{\ell_2}-Regression}

Given \math{\matA\in\R^{n\times d}}
and \math{\bb\in\R^{n\times 1}}, find \math{\xx_*} which minimizes the
regularized \math{\ell_2}-reconstruction error over some domain
\math{\cl D}:
\mld{\xx_*=\mathop{\arg\min}_{\xx\in\cl D}\ \norm{\matA \xx-\bb}_2+\lambda\Phi(\xx).}
The domain \math{\cl D} can be arbitrary, and the regularizer \math{\Phi} can be an abitrary nonnegative function.
The standard approach with a linear embedding \math{\Pi\transp}
is to minimize
\math{\norm{\Pi\transp(\matA\xx-\bb)}_2+\lambda\Phi(\xx)} over \math{\cl D}.
This cannot be implemented with our nonlinear embedding. Instead,
We construct \math{\matX=[\matA,-\bb]} and let
\math{\tilde\matX} be our \math{\ell_2}-subspace embedding for
\math{\matX}.
So, with constant probability, for all \math{\zz\in\R^{d+1}},
\mld{
  (1-\varepsilon)\norm{\matX\zz}_2^2\le\norm{\tilde\matX\zz}_2^2\le 
  (1+\varepsilon)\norm{\matX\zz}_2^2.\label{eq:isometry-X}
}
Write \math{\tilde\matX=[\Atilde,-\tilde\bb]}, and construct
\math{\tilde\xx} by solving the regression problem with
\math{\Atilde, \tilde\bb}:
\mld{\tilde\xx=\mathop{\arg\min}_{\xx\in\cl D}\ \norm{\Atilde \xx-\tilde\bb}_2+\lambda\Phi(\xx).\label{eq:intro-approx-regression}
}
The following theorem follows from a standard
sandwich argument:
\begin{theorem}[Relative-Error Constrained, Regularized  \math{\ell_2}-Regression]
  \label{theorem:L2-regression-intro}
  Construct \math{\Atilde, \tilde\bb} as described above
  from the embedding of \math{\tilde\matX=[\matA,-b]} and
  let \math{\tilde\xx} solve
  \r{eq:intro-approx-regression}. Then, with constant probability,
  for all \math{\xx\in\cl D},
  \mld{\norm{\matA\tilde\xx-\bb}_2^2+\Phi(\tilde\xx)\le \left(\frac{1+\varepsilon}{1-\varepsilon}\right)\cdot \left(\norm{\matA \xx-\bb}_2^2+\Phi(\xx)\right).}
\end{theorem}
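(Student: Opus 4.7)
The plan is to run the standard ``sandwich'' argument, enabled by the augmentation trick of embedding $\matX=[\matA,-\bb]$ rather than $\matA$ alone. Because our embedding is nonlinear, I cannot simply write $\norm{\Pi\transp(\matA\xx-\bb)}_2$ and factor; instead I lean on the fact that Theorem~\ref{theorem:L2-subspace} is applied to the concatenated matrix $\matX\in\R^{n\times(d+1)}$, so the guaranteed output $\tilde\matX=[\Atilde,-\tilde\bb]$ satisfies \r{eq:isometry-X} for every $\zz\in\R^{d+1}$. The key specialization is $\zz_\xx=\left[\begin{matrix}\xx\\ 1\end{matrix}\right]$, for which $\matX\zz_\xx=\matA\xx-\bb$ and $\tilde\matX\zz_\xx=\Atilde\xx-\tilde\bb$. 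Plugging this into \r{eq:isometry-X} transfers the isometry from arbitrary linear combinations to \emph{residuals}:
\mld{(1-\varepsilon)\norm{\matA\xx-\bb}_2^2 \le \norm{\Atilde\xx-\tilde\bb}_2^2 \le (1+\varepsilon)\norm{\matA\xx-\bb}_2^2,\quad\forall\,\xx\in\R^d.\label{eq:plan-residuals}}

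With \r{eq:plan-residuals} in hand, the rest is mechanical. For any competitor $\xx\in\cl D$, the optimality of $\tilde\xx$ in the sketched problem \r{eq:intro-approx-regression} yields $\norm{\Atilde\tilde\xx-\tilde\bb}_2^2+\Phi(\tilde\xx)\le\norm{\Atilde\xx-\tilde\bb}_2^2+\Phi(\xx)$. I would apply the lower half of \r{eq:plan-residuals} (with $\xx=\tilde\xx$) to the left-hand side and the upper half (with the competitor $\xx$) to the right-hand side, obtaining
\mld{(1-\varepsilon)\norm{\matA\tilde\xx-\bb}_2^2+\Phi(\tilde\xx)\le(1+\varepsilon)\norm{\matA\xx-\bb}_2^2+\Phi(\xx).}
Dividing through by $1-\varepsilon$, using $\Phi\ge 0$ and $1/(1-\varepsilon)\le(1+\varepsilon)/(1-\varepsilon)$ to absorb the $\Phi(\xx)$ factor, then gives the claimed relative-error bound with constant $(1+\varepsilon)/(1-\varepsilon)$.

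There is essentially no obstacle here: $\Phi$ and $\cl D$ are never touched by the embedding, so the only substantive move is the augmentation $\matX=[\matA,-\bb]$. The one point I would flag carefully is that Theorem~\ref{theorem:L2-subspace} must be invoked on the $(d+1)$-column matrix $\matX$ so that the isometry holds over all of $\R^{d+1}$ (including vectors with last coordinate $1$); applying it to $\matA$ alone would preserve $\matA\xx$ but not $\matA\xx-\bb$, and the sandwich would not close. This also dictates why $\tilde\bb$ must come from embedding $\matX$ as a single object rather than sketching $\matA$ and $\bb$ separately, which is the qualitatively new ingredient relative to the linear-sketch setting.
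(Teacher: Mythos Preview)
Your proposal is correct and follows essentially the same sandwich argument as the paper. The paper proves a slightly more general matrix version (with \math{\matB\in\R^{n\times p}} in place of \math{\bb}) by setting \math{\matW=[\matA,-\matB]} and \math{\matZ=\left[\begin{smallmatrix}\matX\\\matI\end{smallmatrix}\right]}, then specializes; but the core three-step sandwich (lower isometry on \math{\tilde\xx}, optimality, upper isometry on the competitor, with \math{\Phi\ge0} absorbing the constants) and the augmentation trick \math{\zz_\xx=\left[\begin{smallmatrix}\xx\\1\end{smallmatrix}\right]} are identical to what you wrote.
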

\paragraph{Comments.}
One can extend~\ref{theorem:L2-regression-intro} to the case where
\math{\bb} becomes a matrix \math{\matB\in\R^{n\times q}}, which can be
used to solve the subspace
reconstruction problem.
Theorem~\ref{theorem:L2-regression-intro}
applies to any constraint \math{\cl D} and
any regularizer~\math{\Phi}.
The success probability can be boosted to \math{1-\delta} by independently
repeating the algorithm \math{O(\log(1/\delta))} times and picking the
best solution.
Theorem~\ref{theorem:L2-regression-intro}
does not give an algorithm for solving the
constrained, regularized \math{\ell_2}-regression. Rather it shows how
to \emph{quickly} reduces the
problem to one with a fixed number of \math{d} ``data points''
while getting \math{O(1+\varepsilon)}-relative error accuracy.
Solving the smaller problem should be
considerably more efficient, depending on the nature of the constraints and
regularizer.
The runtime for simple unconstrained regression with dense matrices
is the time to compute
the embedding of \math{\matX} plus \math{O(d^3)} for the small
\math{d\times d} regression, for a total runtime
of
\math{O(nd\ln(d/\varepsilon)+(d^\omega\ln d)/\varepsilon^{2}+d^3)}.
Our algorithm uses \math{(\Atilde\transp\Atilde)^{1/2}} which has a similar
condition number to \math{\matA}.
Using a standard linear embedding approach,
one can achieve a similar runtime by computing
\math{\tilde\xx=(\Atilde\transp\Atilde)^{-1}\Atilde\transp\tilde\bb}, where
\math{\Atilde=\Pi\transp\matA} and \math{\tilde\bb=\Pi\transp\bb}. However,
this is not the method of choice in practice because
the condition number of \math{\Atilde\transp\Atilde} is that of
\math{\Atilde} squared. Hence computing
\math{(\Atilde\transp\Atilde)^{-1}} is succeptible to numerical instability,
and one usually solves the regression problem using a QR-factorization of
\math{\Atilde}, in which case the total runtime becomes
\math{O(nd\ln(d/\varepsilon)+(d^3\ln d)/\varepsilon^{2}+d^3)}.\footnote{There
  are faster ways to perform a QR with runtime \math{o(rd^2)}~\cite{K1995},
  but such
  algorithms are not mainstream, and still slower than matrix multiplication.}
Our approach saves by doing multiplication with the
``large'' \math{(\varepsilon^{-2}d\ln d)\times d} matrix as opposed to a QR.
More complex regressions with complicated constraints
\math{\cl D} and inconvenient regularizers (such as
\math{\ell_0} or \math{\ell_1} regularization) will only
further highlight the
computational benefits of our fixed dimension embedding.

\subsubsection{Low Rank Approximation (PCA)}

Let \math{\Atilde} be our subspace embedding for \math{\matA} satisfying
\r{eq:isometry}, and 
let \math{\Atilde=\tilde\matU\tilde\Sigma\tilde\matV\transp}
be its SVD. Let \math{\tilde\matV_k}
be the top-\math{k} right singular vectors of
\math{\Atilde} (the first \math{k} columns of \math{\tilde\matV_k}).
So,
\math{\Atilde_k=\Atilde\tilde\matV_k\tilde\matV_k} is the best rank-\math{k}
approximation to \math{\Atilde}. We treat \math{\tilde\matV_k} as an
approximate top-\math{k} PCA of \math{\matA} and construct 
\math{\hat\matA_k=\matA\tilde\matV_k\tilde\matV_k} as a
rank-\math{k} approximation to \math{\matA} (\math{\matA_k} is
the best rank-\math{k} approximation to \math{\matA}).
Theorem~\ref{theorem:L2-PCA-intro} gives the quality of approximation of our
approximate PCA.
\begin{theorem}[Relative Error Low Rank Approximation (PCA)]
  \label{theorem:L2-PCA-intro}
  Suppose \math{\Atilde} satisfies \r{eq:isometry}. Then,
  for \math{k\in[d]}, the matrix \math{\hat\matA_k} as constructed above
    satisfies
    \mld{\norm{\matA-\hat\matA_k}_2^2\le\left(\frac{1+\varepsilon}{1-\varepsilon}\right)\cdot
      \norm{\matA-\matA_k}_2^2.}
\end{theorem}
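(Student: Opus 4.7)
The plan is to write the approximation error as a projection applied to $\matA$, use the two-sided isometry~\r{eq:isometry} to transfer that norm onto $\Atilde$, recognize the result as the $(k+1)$-th singular value of $\Atilde$, and finally return to $\matA$ via the isometry in the opposite direction. The isometry is invoked exactly twice, once in each direction, on carefully chosen $(d-k)$-dimensional subspaces.

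First, $\matA - \hat\matA_k = \matA(\matI - \tilde\matV_k\tilde\matV_k\transp)$. For any unit $\yy\in\R^{d}$, set $\zz = (\matI - \tilde\matV_k\tilde\matV_k\transp)\yy$. Since $\matI - \tilde\matV_k\tilde\matV_k\transp$ is an orthogonal projection, $\norm{\zz}_2\le 1$, so the left inequality of~\r{eq:isometry} gives $(1-\varepsilon)\norm{\matA\zz}_2^2 \le \norm{\Atilde\zz}_2^2$. This bound holds for every $\yy$, so taking a supremum on the unit sphere yields
$$(1-\varepsilon)\,\norm{\matA - \hat\matA_k}_2^2 \;\le\; \norm{\Atilde(\matI - \tilde\matV_k\tilde\matV_k\transp)}_2^2 \;=\; \tilde\sigma_{k+1}^2,$$
where the last equality holds because $\tilde\matV_k$ collects the top-$k$ right singular vectors of $\Atilde$, so $\Atilde\tilde\matV_k\tilde\matV_k\transp$ is the best rank-$k$ spectral approximation of $\Atilde$ and $\tilde\sigma_{k+1}$ is its spectral distance from $\Atilde$.

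Next I would bound $\tilde\sigma_{k+1}^2$ by Courant--Fischer,
$$\tilde\sigma_{k+1}^2 \;=\; \min_{\substack{S\subseteq\R^{d}\\ \dim S=d-k}}\ \max_{\xx\in S,\ \norm{\xx}_2=1}\ \norm{\Atilde\xx}_2^2.$$
Specializing to the particular subspace $S=\mathrm{span}(\vv_{k+1},\ldots,\vv_d)$ spanned by the bottom right singular vectors of $\matA$, the right inequality of~\r{eq:isometry} gives $\norm{\Atilde\xx}_2^2 \le (1+\varepsilon)\norm{\matA\xx}_2^2$ for every $\xx\in S$, and the maximum of $\norm{\matA\xx}_2^2$ over unit $\xx\in S$ equals $\sigma_{k+1}^2 = \norm{\matA - \matA_k}_2^2$. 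Hence $\tilde\sigma_{k+1}^2 \le (1+\varepsilon)\norm{\matA-\matA_k}_2^2$, and combining with the earlier display and dividing by $1-\varepsilon$ gives the claim.

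There is no serious obstacle; the only subtlety worth flagging is that although $\Atilde$ depends nonlinearly on $\matA$, the bound~\r{eq:isometry} is a uniform, pointwise statement over all $\xx\in\R^{d}$, so it remains valid when applied to vectors (such as $\zz$) that are themselves built from $\Atilde$'s own singular vectors. The same template would give the analogous Frobenius-norm bound by replacing each spectral-norm supremum with a sum over the columns of $\matI - \tilde\matV_k\tilde\matV_k\transp$, at the cost of tracking tail sums of singular values rather than a single $\tilde\sigma_{k+1}$.
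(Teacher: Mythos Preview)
Your proof is correct and follows essentially the same approach as the paper. The paper also writes the error as $\matA\tilde\matP_k$ with $\tilde\matP_k=\matI-\tilde\matV_k\tilde\matV_k\transp$, applies the lower side of the isometry to pass to $\Atilde\tilde\matP_k$ and identify $\tilde\sigma_{k+1}^2$, and then uses Courant--Fischer together with the upper side of the isometry to bound $\tilde\sigma_{k+1}^2\le(1+\varepsilon)\sigma_{k+1}^2$; the only cosmetic difference is that the paper records the Courant--Fischer step as a standalone lemma giving both inequalities $(1-\varepsilon)\sigma_k^2\le\tilde\sigma_k^2\le(1+\varepsilon)\sigma_k^2$, whereas you inline just the upper bound you need.
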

\paragraph{Comments.}
The running time to compute \math{\tilde\matV_k} is the time to compute
\math{\Atilde} and its SVD. For dense matrices this is
\math{O(nd\ln(d/\varepsilon)+(d^\omega\ln d)/\varepsilon^{2}+d^3)}.
To compute the reconstruction
\math{\hat\matA_k=\matA\tilde\matV_k\tilde\matV_k\transp}, takes an additional
\math{O(ndk)} time. The algorithm above, which is based
on an embedding \math{\Atilde} satisfying \r{eq:isometry}, is standard.
The benefits of
this approach come purely from the fixed embedding dimension \math{d}.

\subsubsection{\math{\ell_2}-Leverage Scores}
The \math{\ell_2}-leverage scores are the diagonal entries of the projection
operator \math{\matA\matA^\dagger}. The leverage scores have statistical
significance and play an important
role in sampling based algorithms~\cite{malik186}. For
\math{i\in[n]}, the \math{i}-th leverage score is
\mld{\tau_i=
  \norm{\ee_i\transp\matA\matA^\dagger}_2^2.}
We approximate
\math{\matA\matA^\dagger\approx\matA\Atilde^\dagger=\matA\Atilde^{-1}} (because \math{\Atilde} is square), giving the 
leverage score estimates
\mld{
  \tilde\tau_i=
  \norm{\ee_i\transp\matA\Atilde^{-1}}_2^2.\label{eq:intro-lev}}
We prove that our estimate gives
a relative error approximation to the true leverage score:
\begin{theorem}[Relative-Error Leverage Scores]
  \label{theorem:L2-leverage-intro}
  For \math{i\in[n]},
  the leverage scores \math{\tilde\tau_i} in \r{eq:intro-lev} constructed
  using the embedding \math{\Atilde} satisfying \r{eq:isometry}
  from Theorem~\ref{theorem:L2-subspace}
  are a relative
  error approximation to the true leverage scores \math{\tau_i},
  \mld{|\tilde\tau_i-\tau_i|\le \frac{\varepsilon}{1-\varepsilon}\cdot
    \tau_i.}
\end{theorem}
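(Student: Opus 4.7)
The plan is to rewrite both $\tau_i$ and $\tilde\tau_i$ as quadratic forms in the matrices $\matA\transp\matA$ and $\Atilde\transp\Atilde$ respectively, and then to translate the isometry guarantee of Theorem~\ref{theorem:L2-subspace} into a two-sided PSD (Loewner-order) comparison between these two matrices. Once that is done, inverting the inequality and evaluating it at a single vector gives the result almost immediately.

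More concretely, first I would observe that, since \math{\matA} is full rank, \math{\matA\matA\pseudo=\matU\matU\transp} and hence \math{\tau_i=\ee_i\transp\matA(\matA\transp\matA)\inv\matA\transp\ee_i}. Since \math{\Atilde} is square and invertible (its singular values equal those of \math{\matA} up to a \math{(1\pm\varepsilon)^{1/2}} factor by Theorem~\ref{theorem:L2-subspace}), the estimate from \r{eq:intro-lev} rearranges as \math{\tilde\tau_i=\ee_i\transp\matA(\Atilde\transp\Atilde)\inv\matA\transp\ee_i}, and by construction \math{\Atilde\transp\Atilde=\matA\transp\Pi\Pi\transp\matA}. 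Next, I would rephrase the isometry in Theorem~\ref{theorem:L2-subspace} in PSD form:
\mld{(1-\varepsilon)\matA\transp\matA\preceq\Atilde\transp\Atilde\preceq(1+\varepsilon)\matA\transp\matA.}
Because both sides are invertible, order-reversal of the matrix inverse yields
\mld{\frac{1}{1+\varepsilon}(\matA\transp\matA)\inv\preceq(\Atilde\transp\Atilde)\inv\preceq\frac{1}{1-\varepsilon}(\matA\transp\matA)\inv.}

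Finally I would evaluate this inequality on the vector \math{\vv=\matA\transp\ee_i} to obtain
\mld{\frac{1}{1+\varepsilon}\tau_i\le\tilde\tau_i\le\frac{1}{1-\varepsilon}\tau_i,}
and simplify the two-sided bound: the upper side gives \math{\tilde\tau_i-\tau_i\le\frac{\varepsilon}{1-\varepsilon}\tau_i}, and the lower side gives \math{\tau_i-\tilde\tau_i\le\frac{\varepsilon}{1+\varepsilon}\tau_i\le\frac{\varepsilon}{1-\varepsilon}\tau_i}, which combine to the claim. There is no real obstacle here; the only non-mechanical step is the passage from the quadratic-form isometry of Theorem~\ref{theorem:L2-subspace} to the inverse-PSD inequality, which is standard (operator monotonicity of \math{X\mapsto -X\inv} on positive definite matrices). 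Everything else is just a substitution and simple algebra.
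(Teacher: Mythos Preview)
Your proposal is correct and is essentially the same argument as the paper's: both rewrite \math{\tau_i} and \math{\tilde\tau_i} as quadratic forms, invert the spectral/isometry bound, and evaluate at \math{\matA\transp\ee_i}. The only cosmetic difference is that the paper passes to SVD coordinates and proves the auxiliary bound \math{\norm{\matI-(\matU\transp\Pi\Pi\transp\matU)^{-1}}_2\le\varepsilon/(1-\varepsilon)} via an eigenvalue argument, whereas you phrase the same step as operator monotonicity of the inverse in the Loewner order; these are equivalent formulations of one idea.
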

\paragraph{Comments.}
Computing \math{\Atilde^{-1}} takes
\math{O(nd\ln(d/\varepsilon)+(d^\omega\ln d)/\varepsilon^{2}+d^3)}
time. However, the product \math{\matA\Atilde^{-1}} takes an additional
time \math{O(nd^{\omega-1})}, which is expensive. This runtime
can be improved while maintaining the relative error approximation by
observing that we only need the norms of the rows of
\math{\matA\Atilde^{-1}}. Therefore, we can apply a norm-preserving
JLT to these rows and compute the row-norms of
\math{\matA\Atilde^{-1}\Pi_{\textsc{g}}} where \math{\Pi_{\textsc{g}}} is
a \math{d\times O(\varepsilon^{-2}\ln n)} matrix of random Gaussians
(this standard trick was developed in \cite{malik186}).
Now, the time to approximate the row-norms using the product 
\math{\matA\Atilde^{-1}\Pi_{\textsc{g}}} is
\math{O(\varepsilon^{-2}d^2\ln n+\varepsilon^{-2}nd\ln n)}. For most practical
applications, only a constant factor approximation to the
leverage scores is required, so our runtime is
\math{O(nd\ln n+d^\omega\ln n+d^3)}, where \math{\omega} is the exponent
for whatever matrix multiplication algorithm is used (this
beats the runtime of
\math{O(nd\ln n+d^3\ln n\ln d)} from~\cite{malik186}).

\subsection{\math{\ell_1}-Applications}

We consider some applications of subspace embeddings to
\math{\ell_1}-leverage scores and coresets for \math{\ell_1} regression.
We also briefly discuss the improvements to other
\math{\ell_1} variants of regression (distributed and streaming models),
entrywise low-rank approximation and quantile regression
(see \cite[Section 1.3]{WW2018}).

\subsubsection{Regularized \math{\ell_1}-Regression}

The approach for \math{\ell_1}-regression is the same as for
\math{\ell_2}-regression. The
idea is to construct a non-oblivious \math{(1+\varepsilon)}-relative
error \math{\ell_1}-embedding
which then allows us to prove a result analogous to
Theorem~\ref{theorem:L2-regression-intro} for \math{\ell_1},
via a the same sandwiching argument. The proof
is exactly analogous to the proof of
Theorem~\ref{theorem:L2-regression-intro} using
\math{\norm{\cdot}_1} instead of \math{\norm{\cdot}_2^2}, so
we simply state the result.
We are given \math{\matA\in\R^{n\times d}}
and \math{\bb\in\R^{n\times 1}} and a domain for the optimization
\math{\cl D\subseteq\R^{d}}.
\begin{theorem}[Relative-Error Constrained, Regularized \math{\ell_1}-Regression]
  \label{theorem:L1-regression-intro}
  Let \math{\matX=[\matA,-\bb]\in\R^{n\times (d+1)}} and let
\math{\tilde\matX\in\R^{r\times (d+1)}} be an \math{\ell_1}-subspace embedding for
\math{\matX} satisfying, for all \math{\zz\in\R^{d+1}},
\mld{
  (1-\varepsilon)\norm{\matX\zz}_1\le\norm{\tilde\matX\zz}_1\le 
  (1+\varepsilon)\norm{\matX\zz}_1.\label{eq:isometry-X-L1}
}
Write \math{\tilde\matX=[\Atilde,-\tilde\bb]}, and construct
\math{\tilde\xx} by solving the \math{\ell_1-regression} problem with
\math{\Atilde, \tilde\bb}:
\mld{\tilde\xx=\mathop{\arg\min}_{\xx\in\cl D}\ \norm{\Atilde \xx-\tilde\bb}_2+\lambda\Phi(\xx).\label{eq:intro-approx-regression-L1}
}
Then, for all \math{\xx\in\cl D},
\mld{\norm{\matA\tilde\xx-\bb}_1+\Phi(\tilde\xx)\le \left(\frac{1+\varepsilon}{1-\varepsilon}\right)\cdot \left(\norm{\matA \xx-\bb}_2^2+\Phi(\xx)\right).}
\end{theorem}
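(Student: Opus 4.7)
The plan is to mimic the sandwich argument used for Theorem~\ref{theorem:L2-regression-intro}, swapping in $\norm{\cdot}_1$ for $\norm{\cdot}_2^2$ throughout. The whole point of stacking $\matX=[\matA,-\bb]$ and embedding it as a block is that the regression problem collapses to evaluating a subspace embedding on a one-parameter family of vectors indexed by $\xx$. So the first step is to set $\zz = \begin{bmatrix}\xx \\ 1\end{bmatrix} \in \R^{d+1}$ for an arbitrary $\xx \in \cl D$; then $\matX\zz = \matA\xx - \bb$ and $\tilde\matX\zz = \Atilde\xx - \tilde\bb$, and applying the $\ell_1$-isometry \r{eq:isometry-X-L1} gives the pointwise inequalities
\mld{
(1-\varepsilon)\norm{\matA\xx - \bb}_1 \;\le\; \norm{\Atilde\xx - \tilde\bb}_1 \;\le\; (1+\varepsilon)\norm{\matA\xx - \bb}_1,
}
valid for every $\xx \in \cl D$ simultaneously (and with the same constant probability as \r{eq:isometry-X-L1}).

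Next I would run the standard three-step sandwich. Starting from $\norm{\matA\tilde\xx - \bb}_1 + \lambda\Phi(\tilde\xx)$, use the left inequality above on $\tilde\xx$ to pass to the embedded objective:
\mld{
\norm{\matA\tilde\xx - \bb}_1 + \lambda\Phi(\tilde\xx) \;\le\; \frac{1}{1-\varepsilon}\bigl(\norm{\Atilde\tilde\xx - \tilde\bb}_1 + \lambda\Phi(\tilde\xx)\bigr),
}
where the regularizer term is absorbed free of charge since $1 \le \tfrac{1}{1-\varepsilon}$. By the optimality of $\tilde\xx$ for \r{eq:intro-approx-regression-L1}, the bracketed quantity is at most $\norm{\Atilde\xx - \tilde\bb}_1 + \lambda\Phi(\xx)$ for any competitor $\xx \in \cl D$. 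Finally, apply the right inequality above to $\norm{\Atilde\xx - \tilde\bb}_1$ and absorb $\Phi(\xx)$ using $1 \le 1+\varepsilon$, yielding
\mld{
\norm{\matA\tilde\xx - \bb}_1 + \lambda\Phi(\tilde\xx) \;\le\; \frac{1+\varepsilon}{1-\varepsilon}\bigl(\norm{\matA\xx - \bb}_1 + \lambda\Phi(\xx)\bigr).
}

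The only subtlety, and the one I'd flag as the main obstacle (such as it is), is that the embedding acts on the reconstruction term but leaves the regularizer untouched, so a pure two-sided isometry on the whole objective is unavailable. The trick is precisely that the multiplicative distortions are $\ge 1$, which lets $\Phi$ ride along at both ends of the chain without losing anything extra. Everything else is the same mechanical optimality-plus-isometry chain used in Theorem~\ref{theorem:L2-regression-intro}, with the $\ell_2^2$ squared distortion $(1\pm\varepsilon)$ replaced by the $\ell_1$ distortion $(1\pm\varepsilon)$; no additional structure of $\cl D$ or $\Phi$ is needed, which is why the argument works for arbitrary constraint sets and arbitrary nonnegative regularizers just as in the $\ell_2$ case.
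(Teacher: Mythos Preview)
Your proposal is correct and is exactly what the paper intends: it explicitly says the proof ``is exactly analogous to the proof of Theorem~\ref{theorem:L2-regression-intro} using \math{\norm{\cdot}_1} instead of \math{\norm{\cdot}_2^2},'' and you have carried out precisely that sandwich argument. (You have also silently repaired the evident typos in the theorem statement, where \math{\norm{\cdot}_2} and \math{\norm{\cdot}_2^2} should read \math{\norm{\cdot}_1} and the \math{\lambda} is dropped in the conclusion.)
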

\paragraph{Comments.}
Many of the comments after
Theorem~\ref{theorem:L2-regression-intro} apply to
Theorem~\ref{theorem:L1-regression-intro}.
One main difference between \math{\ell_1} and \math{\ell_2} is that the
oblivious embedding does not give a relative error embedding in \math{\ell_1}.
In fact, this is not possible given the lower bounds in~\cite{WW2018}.
Therefore, to apply
Theorem~\ref{theorem:L1-regression-intro}, some form of
non-oblivious embedding satisfying
\r{theorem:L1-regression-intro} is needed.
So, unlike the \math{\ell_2} case, in the embedding dimension \math{r} in
Theorem~\ref{theorem:L1-regression-intro} may not be a constant. Nevertheless,
there are significant gains when \math{r} depends on \math{d},
not \math{n}.

\subsubsection{Relative Error \math{\ell_1}-Embedding Via
  Well Conditioned Bases and Lewis Weights}

We compare 3 types of algorithms for constructing a
relative error \math{\ell_1}-embedding for use in
Theorem~\ref{theorem:L1-regression-intro}.
All the algorithms 
all based on
constructing a coreset of rescaled rows from the matrix by
row-sampling using special probabilities which can loosely be referred to as
\math{\ell_1}-leverage scores.
\begin{enumerate}
\item
  A straightforward well-conditioned basis approach that
  directly uses the oblivious embedding in \r{eq:embedding-def-intro}.
  We get a coreset size (embedding dimension) of \math{O(d^{3.5}\log^{1.5} d)}.
\item
  A well-conditioned basis approach which
constructs a basis using a two step approach, using the
oblivious embedding to get initial probabilities and then
ellipsoidal rounding to get a better conditioned basis (the direct
ellipsoidal rounding approach as in \cite{DDHKM2009} is too slow).
We get a coreset size
of  \math{O(d^{2.5})}, which is a better coreset size
at the expense of a \math{\text{poly}(d)} additive increase in runtime.
\item Sampling with Lewis \math{\ell_1}-weights which construct
  \math{\ell_1} leverage scores by converting to \math{\ell_2}, \cite{CP2015}.
  Lewis \math{\ell_1}-weights are constructed via iterative
  calls to a black-box fast approximation algorithm for
  \math{\ell_2}-leverage scores for which we use
  \r{eq:intro-lev} with \math{\Atilde} given in \r{eq:second}.
  This gives the best coreset size of \math{O(d\ln d)} but requires
  \math{O(\log_2\log_2n)} passes through the data to compute
  approximations to the Lewis weights.
\end{enumerate}
First, we give an \math{\ell_1}-sampling lemma which follows directly from the
methods in \cite{malik186}. For completeness, we give a proof of this
theorem in Section~\ref{proof:l1-sampling},
which mostly follows the same line of reasoning
as in \cite{malik186}.
\begin{theorem}[Non-Oblivious Embedding Via \math{\ell_1}-Sampling]
  \label{theorem:l1-sampling}
  Let \math{\matU} be a basis for the range of \math{\matA}. Define the
  \math{\ell_1}-condition number of \math{\matU} by\footnote{\math{\alpha(\matU)} equals \math{\alpha\beta} from the definition of a
    well conditioned basis in~\cite{DDHKM2009}.}
  \mld{\alpha(\matU)=\frac{\norm{\matU}_1}{\min_{\norm{\xx}_\infty=1}{\norm{\matU\xx}_1}}.\label{eq:def-alp}}
  and let
  \math{\lambda_i=\text{median}(\matU_{(i)}\matC)}, where
  \math{\matC} is a \math{d\times(15\ln (6n/\delta))}
  matrix of independent Cauchys.
  For \math{s\ge 28\varepsilon^{-2}d\alpha(\ln(18/\varepsilon)+
    d^{-1}\ln(3/\delta))}, define sampling
  probabilities 
  \mld{
    p_i=\min\left(s\cdot\frac{\lambda_i}{\sum_{i\in[n]}\lambda_i},1\right).}
  Independently (for each row \math{i} of \math{\matA}), with probability
  \math{p_i}, add
  the (rescaled) row \math{\matA_{(i)}/p_i} as a row of
  \math{\Atilde}. With probability at least \math{1-\delta-e^{-s/3}}, 
  for all \math{\xx\in\R^d},
    \mld{
      (1-\varepsilon)\norm{\matA\xx}_1\le\norm{\Atilde\xx}_1\le 
      (1+\varepsilon)\norm{\matA\xx}_1,\label{eq:thm-L1-sampling-isometry}}
    and the embedding dimension (number of rows in \math{\Atilde})
    is at most \math{2s}.
\end{theorem}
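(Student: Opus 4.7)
The plan is to follow the standard three-step importance-sampling template from the $\ell_1$-leverage literature cited at \cite{malik186}: (i) use concentration of empirical Cauchy medians to show $\lambda_i$ approximates $\norm{\matU_{(i)}}_1$ up to a constant factor, so the $p_i$ are (up to a constant) the right $\ell_1$-sampling weights; (ii) prove a one-point Bernstein concentration bound for $\norm{\Atilde\xx}_1$ at a fixed $\xx$; (iii) promote this to a uniform statement over all $\xx$ by an $\varepsilon$-net on $\{\matU\yy : \norm{\matU\yy}_1 = 1\}$, and finish with a Chernoff bound for the sample size.

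For step (i), if $\cc$ has i.i.d.\ standard Cauchy entries and $\vv\in\R^d$ is fixed, then $\vv\transp\cc$ is Cauchy with scale $\norm{\vv}_1$, whose absolute-value median equals $\norm{\vv}_1$. Hence the $m=15\ln(6n/\delta)$ entries of $\matU_{(i)}\matC$ are i.i.d.\ absolute-Cauchys with median $\norm{\matU_{(i)}}_1$, and a standard Chernoff-on-the-quantile argument (counting how many entries exceed $\tfrac12\norm{\matU_{(i)}}_1$, resp.\ $2\norm{\matU_{(i)}}_1$) yields $\lambda_i\in[\tfrac12\norm{\matU_{(i)}}_1,\,2\norm{\matU_{(i)}}_1]$ with probability at least $1-\delta/n$ per row. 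A union bound over $i\in[n]$ costs $\delta$, after which $p_i\geq \min\{1,\,(s/4)\cdot \norm{\matU_{(i)}}_1/\norm{\matU}_1\}$.

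For step (ii), fix $\xx$, write $\matA\xx=\matU\yy$, and set $Z_i=\mathbb{1}[i\in S]\cdot|\matA_{(i)}\xx|/p_i$, so $\Exp[\sum_i Z_i]=\norm{\matA\xx}_1=\norm{\matU\yy}_1$. When $p_i<1$, combine $|\matU_{(i)}\yy|\leq \norm{\matU_{(i)}}_1\norm{\yy}_\infty$ with the well-conditioned-basis bound $\norm{\matU\yy}_1\geq \norm{\yy}_\infty\norm{\matU}_1/\alpha(\matU)$ to get $Z_i\leq (4\alpha(\matU)/s)\cdot\norm{\matA\xx}_1$, so both the range and the per-summand variance of the $Z_i$ are $O(\alpha(\matU)/s)\cdot\norm{\matA\xx}_1$ times $|\matA_{(i)}\xx|$. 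Bernstein's inequality then gives $\Pr[|\sum_i Z_i-\norm{\matA\xx}_1|>\varepsilon\norm{\matA\xx}_1]\leq 2\exp(-\Omega(\varepsilon^2 s/\alpha(\matU)))$.

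For step (iii), cover $\{\yy:\norm{\matU\yy}_1=1\}$ by a $(\varepsilon/6)$-net of size at most $(18/\varepsilon)^d$, apply step (ii) at each net point, union-bound, and extend to the whole sphere by the usual telescoping trick. The choice $s\geq 28\varepsilon^{-2}d\alpha(\matU)(\ln(18/\varepsilon)+d^{-1}\ln(3/\delta))$ is calibrated so that $\Omega(\varepsilon^2 s/\alpha(\matU))$ dominates $d\ln(18/\varepsilon)+\ln(1/\delta)$, closing the union bound. The sample-size claim follows from $\Exp|S|\leq\sum_i p_i\leq s$ and a multiplicative Chernoff bound on independent Bernoullis, giving $|S|\leq 2s$ with probability at least $1-e^{-s/3}$. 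The main obstacle is bookkeeping: calibrating the constants in (i)--(iii) tightly enough that the final $s$ matches the exact expression in the statement, and making sure the failure probabilities of the median step, the Bernstein step on the net, and the Chernoff step on $|S|$ combine to the stated $\delta+e^{-s/3}$; the underlying concentration and net arguments themselves are entirely routine.
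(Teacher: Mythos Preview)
Your proposal is correct and follows essentially the same route as the paper: Cauchy-median concentration for the $\lambda_i$ (the paper cites \cite[Lemma~12]{malik186} and gets the slightly tighter factor $[\tfrac12,\tfrac32]$), a single-point Bernstein/sampling bound (the paper invokes \cite[Lemma~5]{malik186} rather than writing Bernstein explicitly), a $(\varepsilon/6)$-net of size $(18/\varepsilon)^d$ with the telescoping extension, and the Chernoff bound on $|S|$. The only discrepancy is your failure-probability bookkeeping: you spend the full $\delta$ on the median step alone, whereas the paper allocates $\delta/3$ to each of the median, upper-bound-net, and lower-bound-net events so that they sum to $\delta$; this is exactly the ``bookkeeping'' you flagged and is easily fixed.
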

\paragraph{Comments.}
Theorem~\ref{theorem:l1-sampling}
says that a good basis \math{\matU} can be used to get
a relative error embedding.
The parameter \math{\alpha} determines how good, i.e. well conditioned,
the basis
\math{\matU}
is. Theorem~\ref{theorem:l1-sampling} actually
does more than just embedd, it produces a coreset formed from the
rows of \math{\matA}, suitably rescaled. The coreset size is
\math{O(\alpha d/\varepsilon^2)}.
There are several ways to construct a basis \math{\matU}
with \math{\alpha(\matU)\le \text{poly}(d)}, which we discuss in the next lemma.
\begin{lemma}\label{lemma:constructing-U}
  The following are three methods to construct a well  conditioned basis of the
  form
  \mld{\matU=\matA\matR^{-1}.}
  \begin{enumerate}[label={(\roman*)}]
    \item {[Using the \math{\ell_2}-Embedding.]}
  Let \math{\Atilde} be the \math{\ell_2}-subspace embedding from
  Theorem~\ref{theorem:L2-subspace} with \math{\varepsilon=\frac12}, and
  let \math{\matR=\Atilde}, and \math{\matU=\matA\Atilde^{-1}}.
  Then, with constant probability,
  \math{\alpha\le \sqrt{3}d^2\kappa_1(\matA)},
  where \math{\kappa_1(\cdot)} is an \math{\ell_2}
  condition number of \math{\matA} relative
  to an optimal Auerbach basis,\footnote{An Auerbach basis has \math{\alpha\le d} and always exists, \cite{A1930}.
    The condition number \math{\kappa_1(\matA)} 
is analogous to the standard condition number \math{\kappa_2(\matA)}
where \math{\matQ} is chosen as an orthogonal
  basis for the range of \math{\matA}, in which case
  \math{\kappa_2(\matA)=\norm{\matS}_2\norm{\matA^{-1}}_2=\norm{\matA}_2\norm{\matA^{-1}}_2}.
}
  \mld{
  \kappa_1(\matA)=\norm{\matA^{-1}}_2
  \min_{
    {\matU:\matA=\matQ\matS};\atop
    {\alpha(\matQ)\le d}
  }
  {\norm{\matS}_2}.\label{eq:def:kappa1}}
  In Theorem~\ref{theorem:l1-sampling}
  the coreset size is \math{O(d^3\kappa_1)} and the total runtime to embed is
  in \math{O(nd\ln n+d^3)}.
\item{[Using the \math{\ell_1}-Embedding.]}
  Let \math{\Atilde\in\R^{r\times d}} be an \math{\ell_1} embedding satisfying for
  all \math{\xx\in\R^d}
  \mld{
    \norm{\matA\xx}_1\le\norm{\Atilde\xx}_1\le
    \kappa\cdot \norm{\matA\xx}_1.
    \label{eq:getU-part2-isometry}
  }
  Use a QR factorization of \math{\Atilde} to compute \math{\matR},
  \math{\Atilde=\matQ\matR}, so
  \math{\matU=\matA\matR^{-1}=\matA(\matQ\transp\Atilde)^{-1}}.
  Then, \math{\alpha\le\kappa d\sqrt{r}}, where
  \math{\kappa, r} are given in 
  Theorem~\ref{theorem:L1-subspace}.
  In Theorem~\ref{theorem:l1-sampling}
  the coreset size is \math{O(d^{3.5}\ln^{1.5}d)} and the total runtime to embed is
  in \math{O(nd\ln n+d^3\ln d)}.
\item{[\math{\ell_1}-Embedding plus Ellipsoidal Rounding]}.
  Using Theorem~\ref{theorem:l1-sampling} with (\rn{2}) above, construct
  a constant distortion embedding
  \math{\Atilde} in \math{O(nd\ln n+d^3\ln d)} time, with 
  embedding dimension \math{r\in O(d^{3.5}\ln^{1.5}d)}.
  Let \math{\Atilde=\tilde\matQ\matS} and as in Theorem~4 of
  \cite{DDHKM2009}, construct the John ellipsoid for
  \math{\tilde\matQ} characterized by the positive definite form
  \math{\matG\transp\matG}, where \math{\matG\in\R^{d\times d}}.
  Set \math{\matR=\matG\matS}, so \math{\matU=\matA\matS^{-1}\matG^{-1}}.
  The runtime to compute
  \math{\matR} is \math{O(d^{8.5}\ln^{1.5}d)} and \math{\alpha\in O(d^{1.5})}.
  In Theorem~\ref{theorem:l1-sampling}
  the coreset size is \math{O(d^{2.5})} and the total runtime to embed is
  in \math{O(nd\ln n+d^{8.5}\ln^{2.5} d)}.  
  \end{enumerate}
\end{lemma}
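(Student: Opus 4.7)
The plan is to handle each part by exhibiting a convenient factorization of $\matU=\matA\matR^{-1}$, bounding $\norm{\matU}_1$ from above and $\min_{\norm{\xx}_\infty=1}\norm{\matU\xx}_1$ from below, and dividing.

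For part (\rn{1}), Theorem~\ref{theorem:L2-subspace} with $\varepsilon=\tfrac12$ gives, after the substitution $\yy=\Atilde\xx$ in the isometry, the two-sided bound $\sqrt{2/3}\,\norm{\yy}_2\le\norm{\matU\yy}_2\le\sqrt{2}\,\norm{\yy}_2$ for all $\yy\in\R^d$. The lower bound on the denominator of $\alpha$ is then immediate: $\norm{\matU\xx}_1\ge\norm{\matU\xx}_2\ge\sqrt{2/3}\,\norm{\xx}_\infty$. For the numerator, decompose $\matA=\matQ\matS$ with $\matQ$ an Auerbach basis for $\range(\matA)$, so that $\matU^{(j)}=\matQ(\matS\Atilde^{-1}\ee_j)$. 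The Auerbach property $\norm{\matQ}_1\le d$ and the elementary estimate $\norm{\matQ\vv}_1\le\norm{\matQ}_1\norm{\vv}_\infty\le d\norm{\vv}_2$ handle each column; summing over $j\in[d]$ and using Cauchy--Schwarz together with $\norm{\matM}_F\le\sqrt d\,\norm{\matM}_2$ for $d\times d$ matrices yields $\norm{\matU}_1\le d^2\norm{\matS\Atilde^{-1}}_2\le\sqrt{2}\,d^2\norm{\matS}_2\norm{\matA^{-1}}_2$, using $\sigma_d(\Atilde)\ge\sigma_d(\matA)/\sqrt{2}$. Dividing gives $\sqrt{3}\,d^2\norm{\matS}_2\norm{\matA^{-1}}_2$; taking the infimum over Auerbach pairs $(\matQ,\matS)$ produces $\kappa_1(\matA)$ as defined in \r{eq:def:kappa1}.

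For part (\rn{2}), a thin QR factorization $\Atilde=\matQ\matR$ with $\matQ\in\R^{r\times d}$ orthonormal yields $\Atilde\matR^{-1}=\matQ$. Each column satisfies $\norm{\matQ^{(j)}}_1\le\sqrt r\,\norm{\matQ^{(j)}}_2=\sqrt r$, so $\norm{\Atilde\matR^{-1}}_1\le d\sqrt r$, and \r{eq:getU-part2-isometry} transports this column by column: $\norm{\matU}_1=\sum_j\norm{\matA\matR^{-1}\ee_j}_1\le\sum_j\norm{\Atilde\matR^{-1}\ee_j}_1\le d\sqrt r$. The lower bound runs the other way, $\norm{\matA\matR^{-1}\xx}_1\ge\kappa^{-1}\norm{\matQ\xx}_1\ge\kappa^{-1}\norm{\matQ\xx}_2=\kappa^{-1}\norm{\xx}_2\ge\kappa^{-1}\norm{\xx}_\infty$, so $\alpha(\matU)\le\kappa d\sqrt r$. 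For part (\rn{3}), Theorem~4 of \cite{DDHKM2009} produces $\matG$ so that $\hat\matU=\tilde\matQ\matG^{-1}$ is a well-conditioned basis for $\range(\tilde\matQ)$ with $\alpha(\hat\matU)\in O(d^{3/2})$; since $\Atilde\matR^{-1}=\tilde\matQ\matS(\matG\matS)^{-1}=\hat\matU$ and $\Atilde$ is a constant-distortion $\ell_1$ embedding of $\matA$ via Theorem~\ref{theorem:l1-sampling} applied with the basis from (\rn{2}), the same column-by-column transport as in (\rn{2}) transfers the bound to $\matU=\matA\matR^{-1}$ up to absolute constants, giving $\alpha(\matU)\in O(d^{3/2})$. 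The stated runtimes and coreset sizes are then bookkeeping: in (\rn{1}), from Theorem~\ref{theorem:L2-subspace} with $\varepsilon=\tfrac12$; in (\rn{2}), an added $O(rd^2)=O(d^3\ln d)$ for QR; in (\rn{3}), an added $O(d^{8.5}\ln^{1.5}d)$ to build the John ellipsoid on a matrix with $O(d^{3.5}\ln^{1.5}d)$ rows. Substituting each $\alpha$ into Theorem~\ref{theorem:l1-sampling} with constant $\varepsilon$ delivers the claimed coresets.

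The main technical obstacle is the numerator bound in part (\rn{1}). The naive estimate $\norm{\matU^{(j)}}_1\le\sqrt n\,\norm{\matU^{(j)}}_2$ would inject a spurious $\sqrt n$ factor; one must instead route the estimate through an Auerbach factorization so that the $\ell_1$ geometry of $\matA$ is absorbed into $\matQ$, leaving only the $d$-dimensional spectral quantity $\norm{\matS\Atilde^{-1}}_2$ to be controlled via the lower singular value guarantee of the $\ell_2$ embedding. Parts (\rn{2}) and (\rn{3}) are essentially direct once the $\ell_1$ embedding property is applied column by column in both directions.
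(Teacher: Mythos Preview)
Your proposal is correct and follows essentially the same route as the paper. In each part you bound \math{\norm{\matU}_1} from above and \math{\norm{\matU\xx}_1} from below exactly as the paper does, transporting through the \math{\ell_1} (or \math{\ell_2}) isometry column by column and then dividing. The only cosmetic difference is in part~(\rn{1}): after the step \math{\norm{\matQ\vv}_1\le\norm{\matQ}_1\norm{\vv}_\infty}, the paper keeps the \math{\ell_\infty} norm and bounds \math{\sum_{j}\norm{\matS\Atilde^{-1}\ee_j}_\infty\le d\,\norm{\matS\Atilde^{-1}}_{\max}\le d\,\norm{\matS\Atilde^{-1}}_2}, whereas you pass immediately to \math{\norm{\vv}_2} and then use Cauchy--Schwarz plus \math{\norm{\cdot}_F\le\sqrt d\,\norm{\cdot}_2}; both chains land on \math{\norm{\matU}_1\le d^2\norm{\matS\Atilde^{-1}}_2} with the same constant.
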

\paragraph{Comments.}
In Lemma~\ref{lemma:constructing-U}, part (\rn{1})
is useful for matrices which are
well conditioned according to \math{\kappa_1}.
In practice \math{\kappa_1} is often
small, for example if the entries of \math{\matA} are independent, mean zero,
then \math{\kappa_1(\matA)\in1+O(\sqrt{d/n})}. 
When using Theorem~\ref{theorem:l1-sampling} with
Lemma~\ref{lemma:constructing-U},
one must construct the
product \math{\matA\matR^{-1}\matC} and then the median entry in each row
\math{i} to get the sampling probabilities \math{\lambda_i}.
To achieve the runtimes as claimed, this product \math{\matA\matR^{-1}\matC}
must be computed from right to left.
In part (\rn{2}) of the lemma,
the distortion of the embedding, \math{\kappa},
and the embedding dimension \math{r} are both
important in determining the conditioning parameter~\math{\alpha},
hence there is value in optimizing both \math{\kappa} and
\math{r}. In part (\rn{3}) of the Lemma, we use the same method as
in~\cite{DDHKM2009}, but avoid the \math{O(nd^5\ln n)} runtime by computing
the John ellipsoid in the smaller subspace. More efficient and/or
approximate algorithms for the John quadratic form can be useful to
reduce the \math{O(d^{8.5}\ln d)} portion of the runtime.

We now discuss the approach based on sampling probabilities
defined as the Lewis weights,~\cite{CP2015}.
This
approach needs a fast approximation algorithm for statistical
\math{\ell_2} leverage scores, which is where we use our \math{\ell_2}
embedding with Theorem~\ref{theorem:L2-leverage-intro}.
The sampling probabilities \math{w_i} (Lewis weights), 
are computed via a fixed point iteration that  solves
\mld{w_i=\tau_i(\matW^{-1/2}\matA),}
where \math{\matW} is a diagonal matrix whose diagonal entries are
\math{\matW_{ii}=w_i} and \math{\tau_i(\matW^{-1/2}\matA)}
are the \math{\ell_2} of \math{\matW^{-1/2}\matA}.
The fixed point iteration to compute \math{w_i} is given below
\begin{center}
  \fbox{\parbox{0.95\textwidth}{
\begin{algorithmic}[1]
  \STATE Initialize \math{w_i=1} for \math{i\in[n]}. Let \math{\matW} be the
  diagonal matrix with diagonal entries \math{\matW_{ii}=w_i}.
\FOR{\math{i=1,\ldots,T}}
\STATE Approximate the leverage scores \math{\tau_i}  of \math{W^{-1/2}\matA},
for
\math{i\in[n]}, with \math{\tilde\tau_i}:
\begin{enumerate}[labelwidth=100pt,label={(\alph*)}]
\item Let \math{\matX=W^{-1/2}\matA} and compute the matrix product
  \math{\matU=\matX{\tilde\matX}^{-1}\mat{G}} from right to left, where
  \mandc{
    \begin{array}{l}
      \text{\math{\mat{G}} is a \math{d\times (c\ln n)} matrix of independent
        standard Gaussians which is an (\math{\varepsilon=\frac14})-JLT;}\\
      \text{\math{\tilde\matX=(\matX\transp\Pi\Pi\transp\matX)^{1/2}} is our
        \math{\ell_2}-embedding of \math{\matX} from Theorem~\ref{theorem:L2-subspace} with
      \math{\varepsilon=\frac14}.}      
    \end{array}
  }
\item \math{\tilde\tau_i\gets\norm{\matU_{(i)}}_2^2}. 
\end{enumerate}
\STATE Update the Lewis weights:
\math{w_i\gets \sqrt{w_i\tau_i}} for \math{i\in[n]}.
\ENDFOR
\RETURN \math{w_i} for \math{i\in[n]}.
  \end{algorithmic}
}}
\end{center}
\paragraph{Comments.}
In step 3(a), choosing the dimensions of
\math{\mat{G}} as \math{d\times(300\ln n)} is sufficient to
give a \math{\frac14}-JLT with probability at least
\math{1-1/n}. The choice 
\math{\varepsilon=\frac14} in step 3(a) 
gives a 6-factor approximation to the
leverage scores, with an appropriate probability. 
The approximation  must hold for every iteration, that is one must take
a union bound over the failure probabilities in each of the
\math{T} iterations. We need \math{T\ge2\log_2\log_2 n},
which does not asymptotically affect the dimension of \math{\Pi} in the
\math{\ell_2}-embedding (see the comments after
Theorem~\ref{theorem:L2-subspace}). The runtime of the entire algorithm to
compute approximate Lewis weights is in
\math{O(T\cdot(nd\ln n+d^\omega\ln n+d^3))}.

Conditioning on the leverage scores being
a 6-factor approximation in each iteration, the final Lewis weight
approximations after \math{T\ge 2\log_2\log_2 n} are approximately a
6-factor approximation
to the true Lewis weights.
By sampling and rescaling rows of \math{\matA} independently and with
replacement using probabilities defined by the approximate Lewis weights,
we get an \math{\ell_1}-subspace embedding as summarized in the
next~Lemma~\ref{lemma:lewis-intro}, which is essentially Theorem
2.3 of \cite{CP2015} using Theorem~\ref{theorem:L2-leverage-intro}
for fast approximation of leverage scores. 
\begin{lemma}[Lewis Weights for \math{\ell_1}-Embedding,
    {\cite[Theorem 2.3]{CP2015}}, using
    Theorem~\ref{theorem:L2-leverage-intro} for Leverage Scores]
  \label{lemma:lewis-intro}
  Approximate 
  the Lewis weights by \math{w_i} which are
  output from the algorithm above with
  \math{T=2\log_2\log_2 n} iterations of updating.
  Use sampling probabilities \math{p_i=w_i/\sum_{j\in[n]}w_j} and
  construct a sampling matrix \math{\Pi\transp} with
  \math{r\approx 72c\varepsilon^{-2} d\ln(72c\varepsilon^{-2} d)} 
  rows, where each row of \math{\Pi\transp} is chosen independently to be 
  \math{\ee_i\transp/p_i} with probability \math{p_i}. Let
  \math{\Atilde=\Pi\transp\matA}. Then, for all \math{\xx\in\R^d},
  \mld{
    (1+\varepsilon)^{-1}\norm{\matA\xx}\le \norm{\Atilde\xx}\le
      (1+\varepsilon)\norm{\matA\xx}.
  }
\end{lemma}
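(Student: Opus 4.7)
The plan is to reduce the claim to Theorem~2.3 of \cite{CP2015}, which already states that sampling $r\in O(\varepsilon^{-2}d\ln d)$ rows with probabilities proportional to any constant-factor approximation of the $\ell_1$-Lewis weights gives a $(1+\varepsilon)$-distortion subspace embedding. So the only work is verifying that the weights $w_i$ returned by the iterative procedure are a constant-factor approximation to the true Lewis weights with constant probability.

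First I would bound the error in a single call to step 3(a). Two multiplicative errors compound. By Theorem~\ref{theorem:L2-leverage-intro} with $\varepsilon=\tfrac14$, the quantities $\norm{\ee_i\transp\matX\tilde\matX^{-1}}_2^2$ are within a factor $\tfrac{1}{1-\varepsilon}=\tfrac43$ above and $\tfrac{1-\varepsilon-\varepsilon}{1-\varepsilon}=\tfrac23$ below the true leverage scores $\tau_i(\matX)$. Then a Gaussian JLT with $c\ln n$ columns (taking $c=300$ suffices) preserves all $n$ row-squared-norms up to a factor $(1\pm\tfrac14)^2\in[\tfrac{9}{16},\tfrac{25}{16}]$ with probability at least $1-1/n$. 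Composing, $\tilde\tau_i\in[\tfrac38,\tfrac{25}{12}]\tau_i$, a factor-$6$ approximation (the max/min ratio is $50/9\le 6$).

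Next, a union bound over the $T=2\log_2\log_2 n$ iterations shows that with probability $1-O(T/n)$ all calls to step 3(a) simultaneously produce $6$-factor approximate leverage scores. Conditioned on this, the fixed-point contraction for the Lewis iteration (Lemma~5.5 of \cite{CP2015}) guarantees that running the iteration with $\alpha$-factor approximate leverage scores converges to $\alpha$-factor approximate Lewis weights after $O(\log\log n)$ steps, so our choice $T=2\log_2\log_2 n$ yields $w_i$ that are within a constant factor of the true $\ell_1$-Lewis weights.

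Finally, plugging the approximate weights into Theorem~2.3 of \cite{CP2015} with the stated sampling rate $r\approx 72c\varepsilon^{-2}d\ln(72c\varepsilon^{-2}d)$ gives the $(1+\varepsilon)$ two-sided bound on $\norm{\tilde\matA\xx}_1$ for all $\xx\in\R^d$. The main obstacle is really just the bookkeeping: verifying that the composed $\ell_2$-leverage-plus-JLT error lands inside the constant-factor regime required by the Cohen--Peng contraction, and that the $2/n$ per-iteration failure probability still gives a constant overall success probability after $O(\log\log n)$ iterations; everything else is a direct invocation of existing results.
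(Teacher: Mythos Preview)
Your approach is essentially the same as the paper's: both reduce to Cohen--Peng by (i) showing each call to step~3(a) yields a constant-factor leverage-score approximation, (ii) invoking the Lewis-iteration contraction lemma from \cite{CP2015} to conclude that $T=2\log_2\log_2 n$ iterations produce constant-factor Lewis weights, and (iii) applying Theorem~2.3 of \cite{CP2015}. The paper does one extra piece of work you skip: it reformulates Theorem~2.3 of \cite{CP2015} to make the $\beta$-dependence explicit (their Theorem~\ref{theorem:lewis-general-beta} shows $r=2c\beta^2\varepsilon^{-2}d\ln(2c\beta^2\varepsilon^{-2}d)$ suffices for a $\beta$-approximation), and then substitutes $\beta\approx 6$ to obtain the specific constant $72c$; you simply assert the sampling rate without deriving it. Two small quibbles: the paper cites Lemma~2.4 (not 5.5) of \cite{CP2015} for the contraction, and your ``factor-6'' via the max/min ratio $50/9$ is not the $\beta$-approximation notion $\tfrac{1}{\beta}\tau_i\le\tilde\tau_i\le\beta\tau_i$ that \cite{CP2015} uses---though your actual interval $[\tfrac38,\tfrac{25}{12}]$ is in fact tighter than the paper's $[\tfrac16,\tfrac{11}{6}]$, so this only helps.
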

\paragraph{Comments.}
The constant \math{c} appearing in the theorem is an absolute constant
(\math{C_s} in Theorem 2.3 of~\cite{CP2015}). The coreset size is
\math{O(\varepsilon^{-2}d\ln(d/\varepsilon))}. The theoretical
coreset size is smallest, but the algorithm is
relatively more complicated, requiring \math{\log_2\log_2 n} passes
through the data to estimate leverage scores of row-rescaled versions
of \math{\matA}. 

\subsubsection{\math{\ell_1}-Regression in the Distributed Model}

In the distributed setting, the rows of matrix \math{\matA} are stored on \math{k} machines that are coordinated by
a central server. The \math{\ell_1}-regression problem is considered from this perspective in~\cite{WZ2013}, with the goal of minimizing the
amount of communication required to solve the problem to relative accuracy. The algorithm proposed in~\cite{WZ2013} to solve
this problem uses several rounds of distributed sketching to compute a final \math{O(d^3)} coreset of rows of \math{\matA} that are then
collected onto one machine and used to locally solve an \math{\ell_1} problem whose solution is a \math{(1+\varepsilon)} 
accurate approximate solution to the full \math{\ell_1} problem. 

The total communication cost of the original algorithm is 
\mld{O(kd^{2 + \eta} + d^5 \ln d + d^{4} \ln(1/\varepsilon)/\varepsilon^2),}
where \math{\eta > 0} is arbitarily small. All three terms in the cost are determined by the choice of \math{\ell_1}-embeddings. The authors of~\cite{WW2018}
observe that using the \math{\ell_1}-embeddings constructed in that work leads to a lower communication cost of 
\mld{O(kd^2 \ln d + d^{9/2} \ln^{3/2} d + d^4 \ln(1/\varepsilon)/\varepsilon^2).}
Substituting the first of the \math{\ell_1}-embeddings presented in Theorem~\ref{theorem:L1-subspace} leads to this same
improved communication cost, and using the second embedding results in a communication cost of 
\mld{O(k d^2 \ln^{1 + \eta} d + (d^{9/2} \ln^{3/2}d)/\ln\ln d + d^4 \ln(1/\varepsilon)/\varepsilon^2 ),}
which has a slightly smaller dominant term,
where $\eta \in (0, 1/3]$ is arbitrary.


\subsubsection{Quantile Regression}
Quantile regression is a robust alternative to least squares regression: the
latter models the conditional mean of the dependent variable, while the former
models the quantiles of the dependent variable. Let \math{\tau \in (0, 1)} indicate
the desired quantile, then given the design matrix
\math{\matA\in\R^{n\times d}} and response vector
\math{\bb\in \R^{n \times 1}}, the regression coefficients that
model the \math{\tau}th quantile are 
\mld{\xx_* = \mathop{\arg\min}_{\xx\in \R^{d}} \rho_{\tau}(\matA \xx - \bb),}
where \math{\rho_{\tau}(\xx) = \sum_{i=1}^d \rho_{\tau}(x_i)}, where
\mld{\rho_{\tau}(z) = \begin{cases}\tau z & z \geq 0 \\ (\tau - 1) z & z < 0 \end{cases}}
is a tilted absolute value function. Note that when $\tau=1/2$, the median regression problem is
an $\ell_1$ regression problem.  

The authors of~\cite{YMM2014} present an algorithm for the case of arbitrary $\tau \in (0,1)$ that reduces
the \math{n \times d} quantile regression problem to an \math{O(d^3 \varepsilon^{-2} \ln(\varepsilon^{-1})) \times d}
quantile regression problem. This smaller problem can be solved more quickly, and its solution is a $(1+\varepsilon)$-accurate
solution to the original quantile regression problem. The algorithm uses sparse Cauchy embeddings, which results in a
time cost of \math{O(\textsc{nnz}(A) \cdot \ln(n) + \text{poly}(d))} for computing the reduction, where \math{ \text{poly}(d) = \tilde{O}(d^{21/2} \ln^{5.5}(d))}%
\footnote{This runtime assumes that the approximate ellipsoidal
rounding of~\cite{malik209} is used in the final step of Algorithm~2
of~\cite{YMM2014}.}.  
The authors of~\cite{WW2018} show that, by substituting
an oblivious \math{\ell_1}-embedding introduced in that work for the sparse
Cauchy transform originally used in~\cite{YMM2014}, one can obtain a more favorable
dependence on \math{d}, namely \math{\text{poly}(d) = \tilde{O}(d^{13/2} \ln^{5/2} d)}%
\footnote{\cite{WW2018} presents two oblivious \math{\ell_1}-embeddings:
one with two non-zero entries per row, and one with a variable number
\math{B} per row; we use the latter with \math{B = \ln d} to obtain this
running time. The former gives \math{p = O(d^7 \ln d)}.}.

The second \math{\ell_1}-embedding introduced in Theorem~\ref{theorem:L1-subspace} can be used in place of the embedding from~\cite{WW2018} to further reduce the runtime dependence on \math{d}
to \math{\text{poly}(d) = \tilde{O}(d^{13/2} \ln^{3/2+\eta}(d)/\ln\ln(d))}, where
\math{\eta \in (0,1/2)} is arbitrary. This represents an $\tilde{O}(\ln^{1/2 - \eta}(d) \ln\ln d)$ improvement in the $d$-dependent portion of the runtime over the 
current state-of-the-art algorithm given in~\cite{WW2018}.

\section{Related Work}

\paragraph{Related work for \math{\ell_2}.}
There is a long history of oblivious embedding techniques starting
with the random Gaussian projection based on the famed Johnson-Lindenstrauss
lemma~\cite{JL1984}. Since algorithmic applications of metric embeddings
were first considered in \cite{LLR1995} there has been an explosion of
techniques, in particular with the arrival of the fast JLT for
solving approximate nearest neighbor problems in \cite{AC2006,AC2009}, which
was then applied to PCA, \math{\ell_2}-regression and
matrix multiplication in \cite{S2006}.

In parallel, row-sampling as a means
to construct subspace-embeddings to preserve \math{\ell_2} structure originated
in statistics~\cite{RV2007,T1990,T2012} and received much attention in
numerical linear
algebra~\cite{DKM2006a,DKM2006b,DKM2006c,DMM2006,LMP2013,malik145}
and in graph spectral sparsification~\cite{BSS2009}.
The sampling probabilities are often based on
the \math{\ell_2}-leverage scores,
and fast approximation of leverage scores based on fast JLTs were given in
\cite{malik186} and analysis based on matrix-Chernoff bounds
shows that
\math{O(d\varepsilon^{-2}\ln (d/\varepsilon))} rows suffices to get
\math{(1+\varepsilon)}-approximate algorithms,
in relative error,~\cite{AW2002,malik145,RV2007,T2012} .
The slow, but polynomial, deterministic algorithm in~\cite{BSS2009} shows
that \math{O(d/\varepsilon^2)} rows suffice.
Recently, relative error algorithms have been shown possible in input-sparsity
time through the use of sparse oblivious \math{\ell_2}-subspace
embeddings~\cite{CW2013,NN2013,MM2013}. All such embedding based approaches
require a number of rows or embedding dimension that is
\math{O(1/\varepsilon^2)}. We are the first to offer
\math{\ell_2}-embeddings of arbitrary precision \math{\varepsilon}
in a fixed dimension \math{d}. Our embedding can leverage
any other \math{\ell_2}-embedding as a black-box, and in particular can
achieve fast/input-sparsity runtimes
(up to \math{\text{poly}(d)} additive terms)
by using fast/input-sparsity embeddings as in
\cite{CW2013,NN2013,T2011}. Our embedding is simple, but nonlinear,
hence slight modifications are needed to apply it to
get fast runtimes for applications: regression, reconstruction,
leverage scores.

\paragraph{Related work for \math{\ell_p}, \math{p\not=2}.}
There is also substantial progress for values of \math{p\not=2},
\cite{malik188,malik209,DDHKM2009,MM2013,SW2011,WW2018}.
Of particular interest is \math{p=1}, which in machine learning applications
corresponds to robust regression.
For the \math{\ell_1} case, sampling based algorithms that use a
well-conditioned basis give embeddings with \math{O(d^{2.5})} rows, but such
algorithms are slow. In~\cite{T1990} it is shown that
\math{O(d\ln d)} rows suffice.
The state-of-the-art for 
fast algorithms using row sampling probabilities
based on \math{\ell_1}-leverage scores is achieved by using the 
oblivious \math{\ell_1}-embeddings in~\cite{WW2018}
which have input-sparsity runtimes but
produce embeddings with \math{\tilde O(d^{3.5})} rows.
In terms of number of rows, the
best algorithm uses Lewis weights for 
row sampling, \cite{CP2015}, which embeds into 
\math{O(d\ln d)} rows. This algorithm is non-oblivious and requires
\math{O(\ln\ln n)} passes through the data.
We apply our \math{\ell_2}-embedding as a black-box
to get
an oblivious \math{\ell_1} embedding using the methods in
\cite{WW2018}. Our analysis accomodates an arbitrary \math{\ell_2} embedding
while the analysis in
\cite{WW2018} is specific to the \textsf{CountSketch} and \textsf{OSNAP}
\math{\ell_2} embeddings which also preserve \math{\ell_1} dilation.
The best oblivious embedding dimension of \math{O(d\ln d)}
from \cite{WW2018} uses the
\textsf{OSNAP} approach and gives \math{O(d\ln d)} distortion. To
get \math{O((d\ln d)/\ln\ln d)} distortion with the
\textsf{OSNAP} approach requires embedding dimension
\math{O(d\ln^2 d)} whereas we only need \math{O(d\ln^{1+\eta} d)}
dimensions:
our runtimes are comparably fast and our embedding dimension is slightly
tighter for comparable distortion. We believe our techniques
can be used to give a more refined analysis of the
\textsf{OSNAP} approach to get similar asymptotic performance
to ours. We did not pursue this avenue since
the \math{\ell_2} part of our embedding has tighter dimension
(\math{d} for ours versus \math{O(d\ln d)} for \textsf{OSNAP}).

\section{Experimental Demonstration}

We demonstrate the theory
using the $\ell_1$-embedding (\texttt{|MG|})
in \r{eq:embedding-def-intro} applied to
fast $\ell_1$-regression. We compare with three
other embeddings:
\begin{itemize}[itemindent=-15pt]\itemsep=-1pt
  \vspace*{-6pt}
  \item Embeddings formed by sampling rows uniformly at random without replacement (\texttt{|Unif|})
  \item Embeddings formed using Lewis weights (\texttt{|Lewis|})
  \item Embeddings formed using the Wang-Woodruff $\ell_1$-embedding (\texttt{|WW|})
  \vspace*{-6pt}
\end{itemize}
The \texttt{|Lewis|} embeddings are formed following the procedure described in
Lemma~\ref{lemma:lewis-intro}\footnote{The leverage scores are exactly
  computed, rather than approximated using a JLT, because $ d \ll 300\ln(n)$.}.
The \texttt{|WW|} and \texttt{|MG|} embeddings are formed
according to the discussion preceeding Theorem~\ref{theorem:l1-sampling}:
$\left[\matA \ \bb\right]$ is embedded into an $r \times d$ matrix that is used to construct a well-conditioned basis
$\matU$ for $\left[\matA \ \bb\right]$, then $r$ constraints are sampled proportionally to the
$\ell_1$-norms of $\matU$ to obtain the reduced regression problem.

We construct a simple regression problem
that is challenging for na\"ive sampling
schemes by adding many non-informative and noisy constraints.
Specifically, $\matA \in \R^{n \times d}$ and $\bb \in \R^n$ are
\mldc{%
  \matA=
  \left[
  \begin{matrix}
    \ee_1\ee_1^T +  (\matI - \ee_1 \ee_1^T) \matG_1 (\matI - \frac{1}{d}\v1\v1^T) \\
    \vdots \\
    \ee_d\ee_d^T +  (\matI - \ee_d \ee_d^T) \matG_d (\matI - \frac{1}{d}\v1\v1^T) \\
    \matG (\matI - \frac{1}{d}\v1\v1^T) 
  \end{matrix}
  \right]
  \quad \text{ and } \quad 
  \bb = \left[
  \begin{matrix}
  \alpha \textrm{e}_1 + \varepsilon (\mathbf{I} - \textrm{e}_1 \textrm{e}_1^T) \v{g}_1 \\
  \vdots \\
  \alpha \textrm{e}_d + \varepsilon (\mathbf{I} - \textrm{e}_d \textrm{e}_d^T) \v{g}_d \\
  \varepsilon \v{g}
  \end{matrix}
  \right],
}
where $\varepsilon, \alpha\ge0$ and 
the matrices $\matG_i \in \R^{d \times d}$ contain independent
Gaussians, as does $\matG \in \R^{(n-d^2) \times
  d}$. Similarly, $\v{g}_i \in \R^{d}$ and $\v{g} \in \R^{n-d^2}$ are standard
Gaussian vectors.
Uniform sampling would pick many non-informative constraints for this
problem, and hence perform poorly.
%
%
%
%
We used $d = 70$, $n \approx d^3$,
$\varepsilon = 1/\sqrt{n}$, and $\alpha = 20$, and the embedding
dimension $r$ varied in the range $[3d,30d]$.
The results average errors over several independent trials are reported in
the figures below.
\\
\centerline{\tabcolsep0pt
  \begin{tabular}{p{0.5\textwidth}p{0.5\textwidth}}
\scalebox{0.425}{\includegraphics{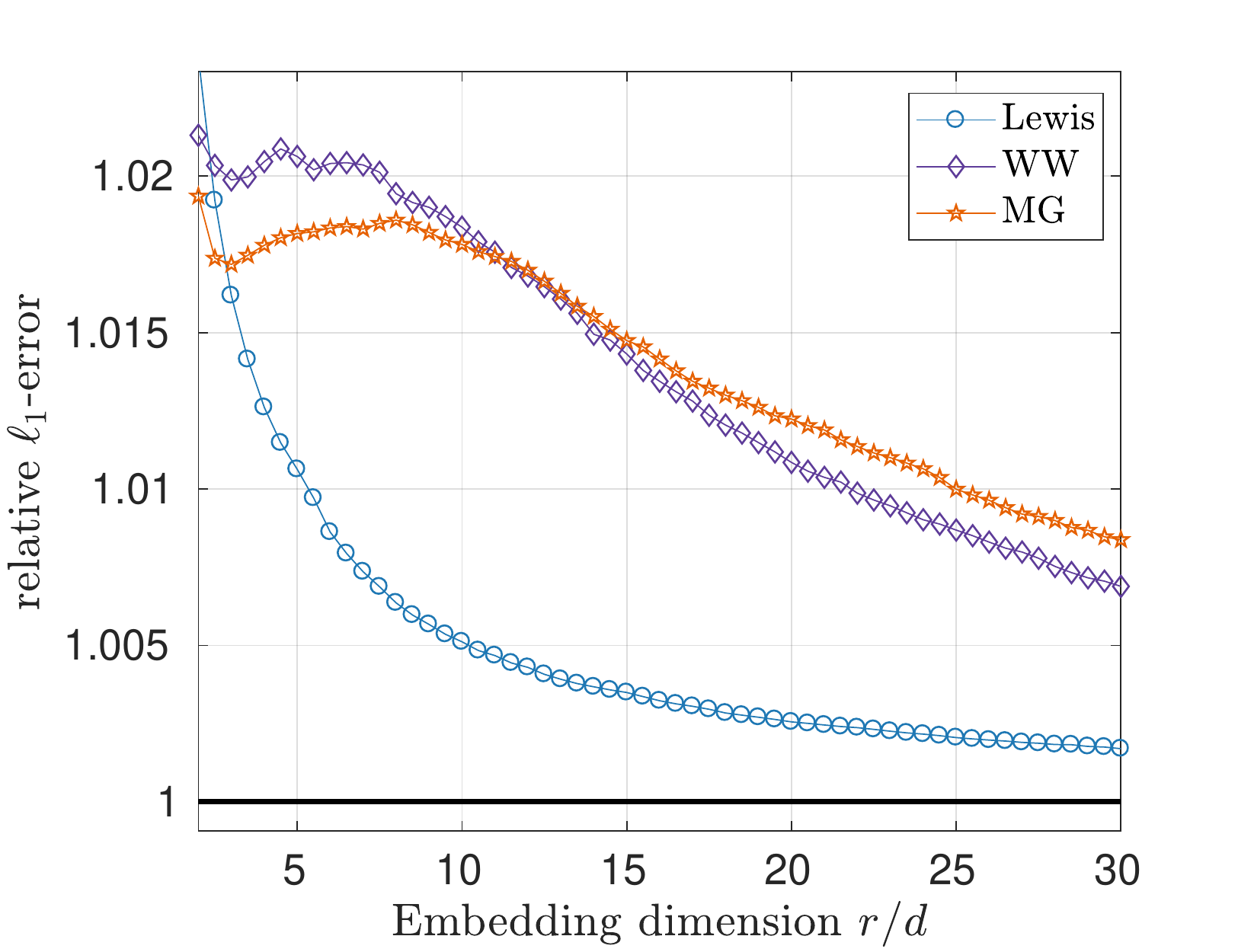}}
&
\scalebox{0.425}{\includegraphics{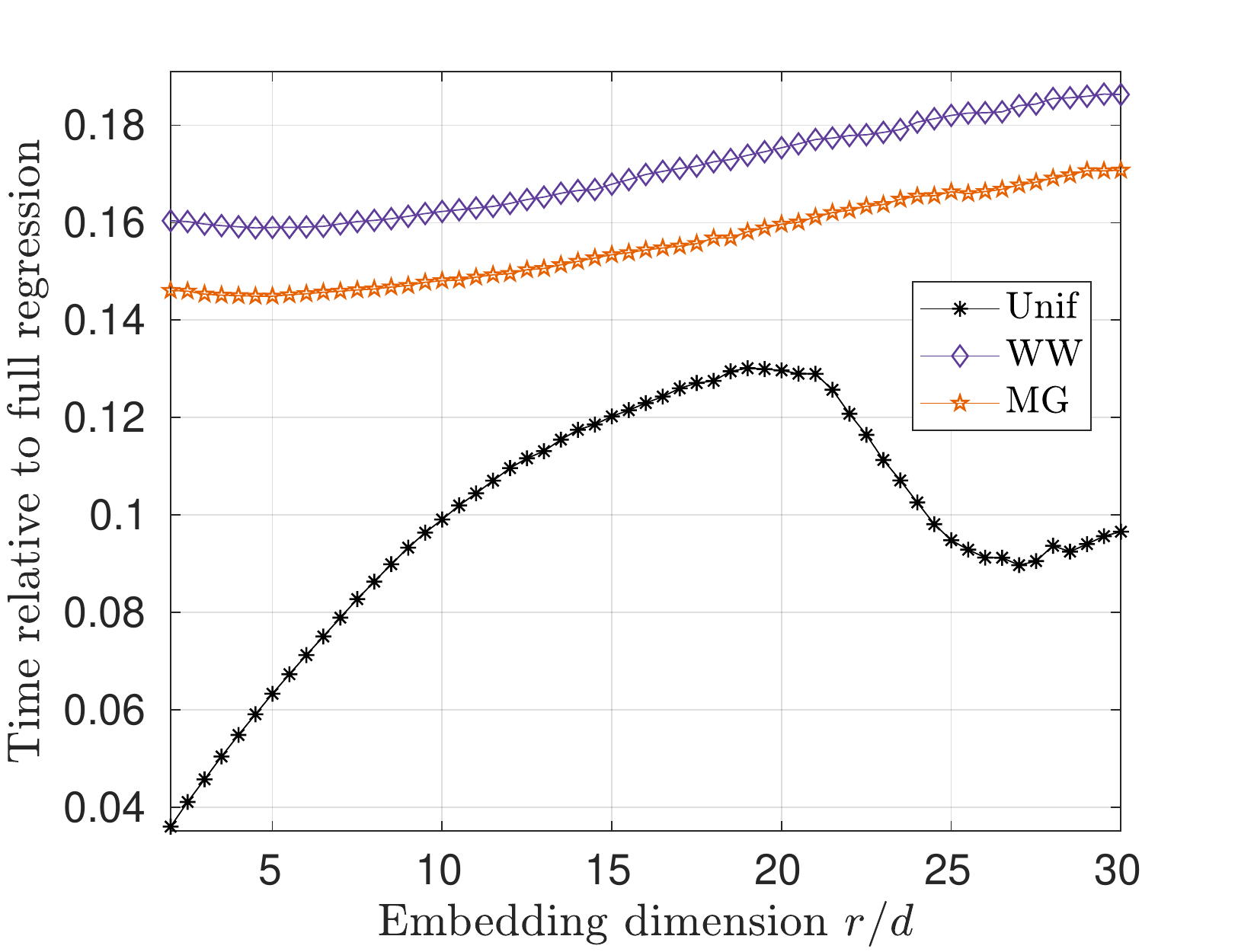}}
\\
\multicolumn{1}{c}{(a) Relative \math{\ell_1}-error embeddings} &
\multicolumn{1}{c}{(b) Runtime} 
\end{tabular}
}
\\[2pt]
In (a) we show
\math{\norm{\matA\tilde\xx-\bb}_1/\norm{\matA\tilde\xx_{\text{opt}}-\bb}_1}.
Uniform sampling, as expected,
is off the scale (even for \math{r=30d}, uniform
is \math{2\frac{1}{2}\times} worse than optimal).
Performance of \texttt{|WW|}  and \texttt{|MG|} are comparable, while
Lewis-sampling is slightly better, all achieving close to 1 in
relative error.
In (b) we show runtime.
Lewis-sampling, which is off the scale,
\math{7\frac{1}{2}\times}
\emph{slower} than the full regression. Even in this non-asymptotic
regime, there are significant gains from \texttt{|WW|} and \texttt{|MG|}.
In the asymptotic regime, we expect Lewis sampling to become more competitive
and \texttt{|MG|} will be slightly more efficient than \texttt{|WW|} due to its
tighter embedding dimension.

\remove{
  From the figure, it is clear that the three principled methods outperform
uniform sampling by a large margin: the mean relative error of \texttt{|Unif|} 
varies over the range $[8, 142]$, while \texttt{|Lewis|}, 
\texttt{|WW|}, and \texttt{|MG|} all achieve
mean relative error close to $1$ for all values of $r$. Their performance is
also far more stable: the standard deviation of the relative errors of \texttt{|Unif|} varied over
the range $[4,588]$, but stayed under $0.04$ for all
values of $r$ for the principled embeddings. Of the principled
embeddings, \texttt{|Lewis|} gave the lowest mean relative errors, but the
differences between the mean relative errors of all three are miniscule. 
}

\remove{
  \begin{table}[h!]
\centering
\begin{tabular}{l|lllll}
 & \multicolumn{5}{c}{Embedding Dimension $r/d$} \\
 & \multicolumn{1}{c}{3} & \multicolumn{1}{c}{4} & \multicolumn{1}{c}{5} & \multicolumn{1}{c}{6} & \multicolumn{1}{c}{7} \\
\hline \\
Lewis & $7.60 \pm 0.03$ & $7.60 \pm 0.02$ & $7.60 \pm 0.02$ & $7.60 \pm 0.02$ & $7.60 \pm 0.03$ \\
WW & $0.16 \pm 0.02$ & $0.15 \pm 0.01$ & $0.16 \pm 0.03$ & $0.15 \pm 0.01$ & $0.15 \pm 0.02$ \\
MG & $0.16 \pm 0.02$ & $0.16 \pm 0.02$ & $0.16 \pm 0.02$ & $0.16 \pm 0.02$ & $0.16 \pm 0.02$
\end{tabular}
\vskip1em
\caption{The times to obtain and solve the $r$ constraint $\ell_1$-regression problem using the embedding methods, reported as a fraction of the 
time it takes to solve the $n$ constraint $\ell_1$-regression problem. }
\label{tab:timing}
\end{table}

Table~\ref{tab:timing} reports the mean and standard deviations of the times to
compute the principled embeddings for several values of $r$. \texttt{|Lewis|} is
the most expensive, as the algorithm for computing the Lewis weights is
iterative and each step has the cost of a QR decomposition on the full matrix
$\left[\matA \, \bb\right]$\footnote{The termination criterion is $\ell_\infty$-convergence of the
  estimated Lewis weights; loosening the convergence tolerance
  can result in fewer iterations and therefore reduce the runtime, but also lowers the performance of the embedding.}.
The timings of the \texttt{|WW|} and \texttt{|MG|} embeddings are similar.
}

\section{Proofs: \math{\ell_2}-Embedding and Applications}
Our embedding uses, as a black box, any fast \math{\ell_2}-subspace embedding.
The most relevant such embeddings are: for dense matrices, the fast subsampled randomized Hadamard transform,
(for dense matrices), and for sparse matrices, the \textsf{CountSketch} and
\textsf{OSNAP} embeddings.

\subsection{Fast-Hadamard Embedding}
Tropp~\cite{T2011} gave a tight analysis of the Hadamard transform.
We give a detailed statement of this result in which we
explicitly state some constants that are otherwise hard to identify directly from the
theorems as stated in~\cite{T2011}.
The next lemma is a straightforward application of Lemmas~3.3 and~3.4 in~\cite{T2011};
the runtime to compute \math{\Pi_{\textsc{h}}\transp\matA} is established in~\cite{AL2013}.
\begin{lemma}[{\cite[Lemmas 3.3 and 3.4]{T2011}}]\label{lemma:SRHT-tropp}
  Let
  \math{\matU\in\R^{n\times d}}
  be an orthogonal matrix and \math{\Pi_\textsc{h}\in\R^{r\times n}} be a subsampled
  randomized Hadamard transform matrix. For \math{0<\varepsilon\le\frac12},
  suppose the embedding dimension \math{r}
  satisfies:
  \mld{
    r\ge \frac{12}{5\varepsilon^2}\left(\sqrt{d}+\sqrt{8\ln(3t n)}\right)^2\ln d
    \ \in\ 
    O\left(\frac{1}{\varepsilon^2}(d+\ln(n))\ln d\right).
  }
  Then, with probability at least \math{1-1/t},
  \math{\Pi_\textsc{h}} is an \math{\epsilon}-JLT for \math{\matU},
  \mld{
    \norm{\matI-\matU\transp\Pi_\textsc{h}\Pi_\textsc{h}\transp\matU}_2\le\varepsilon.
  }
  Further, the product \math{\Pi_\textsc{h}\transp \matA} can be computed in
  time \math{O(nd\ln r)} for any matrix \math{\matA\in\R^{n\times d}}.
\end{lemma}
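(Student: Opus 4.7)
The plan is to decompose the SRHT operator $\Pi_\textsc{h}\transp = \matS\matH\matD/\sqrt{r}$ into its three constituent pieces (random signs $\matD$, the $n \times n$ Hadamard matrix $\matH$, and a uniform row-subsampling operator $\matS$ selecting $r$ of the $n$ transformed rows) and then apply Tropp's two-stage analysis from \cite{T2011} as a black box, tracking constants carefully.

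First, I would invoke Lemma 3.3 of \cite{T2011} applied to the orthonormal matrix $\matU$. That lemma uses a Hoeffding-style bound on Rademacher sums, row by row, followed by a union bound over the $n$ rows: it produces a ``flattening'' guarantee on $\matH\matD\matU$, namely that with probability at least $1-1/(2t)$,
\[
\max_{i\in[n]} \norm{\ee_i\transp \matH\matD\matU}_2 \le \frac{1}{\sqrt{n}}\bigl(\sqrt{d}+\sqrt{8\ln(3tn)}\bigr).
\]
This is the step where the $\sqrt{d}+\sqrt{8\ln(3tn)}$ factor in the statement originates; the constant $8$ and the $3tn$ inside the log are precisely what appear in Tropp's tail bound for the maximum row norm.

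Next, conditioning on the flatness event, I would apply Lemma 3.4 of \cite{T2011}, which is a noncommutative Bernstein/Chernoff bound for sampling rows of a matrix with bounded row norms whose outer product equals the identity. Since $(\matH\matD\matU)\transp(\matH\matD\matU)=\matI_d$, subsampling $r$ rows uniformly (with replacement in Tropp's formulation) and rescaling yields a matrix whose Gram matrix concentrates around $\matI_d$ in spectral norm. Plugging the row-norm bound from the first step into the Bernstein tail gives a failure probability that is at most $1/(2t)$ whenever $r\ge \tfrac{12}{5\varepsilon^2}(\sqrt{d}+\sqrt{8\ln(3tn)})^2\ln d$; the factor $12/5$ comes from the Bernstein constants after the usual optimization of the exponent, and the $\ln d$ factor is the matrix-dimension penalty inherent to the matrix Chernoff bound. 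A union bound over the two bad events yields the claimed overall failure probability $1/t$.

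For the runtime, I would simply cite the construction of \cite{AL2013}, where it is shown that one can compute the action of a subsampled Walsh--Hadamard transform on a vector in time $O(n\ln r)$ rather than the naive $O(n\ln n)$; applied column-by-column to $\matA\in\R^{n\times d}$ this gives the stated $O(nd\ln r)$ bound. The main obstacle is not the conceptual argument—it is essentially a citation—but the bookkeeping of constants: one must choose the failure budgets in the two steps so that they split cleanly (e.g., $1/(2t)$ each), and one must propagate the flatness parameter $\sqrt{d}+\sqrt{8\ln(3tn)}$ precisely through the Bernstein bound to recover the exact coefficient $12/5$ and the exact form $(\sqrt{d}+\sqrt{8\ln(3tn)})^2\ln d$ in the lower bound on $r$.
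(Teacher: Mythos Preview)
Your proposal is correct and matches the paper's approach exactly: the paper does not give a detailed proof but simply states that the lemma ``is a straightforward application of Lemmas~3.3 and~3.4 in~\cite{T2011}'' with the runtime ``established in~\cite{AL2013}.'' Your write-up is in fact more detailed than what the paper provides, but the underlying decomposition (flattening via Lemma~3.3, then matrix Chernoff via Lemma~3.4, union bound, and the Ailon--Liberty runtime citation) is identical.
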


\subsection{Proof of Theorem~\ref{theorem:L2-subspace}: \math{\ell_2}-Subspace
Embedding}
Suppose that \math{\Pi\in\R^{r\times d}} is an \math{\varepsilon}-JLT for
\math{\matU}, where \math{\matA=\matU\Sigma\matV\transp}. Such a
\math{\Pi} can be the randomized Hadamard transform from
Lemma~\ref{lemma:SRHT-tropp} or a \textsf{CountSketch} or \textsf{OSNAP}
embedding matrix of appropriate dimensions. Let
\math{\Atilde=(\matA\transp\Pi\Pi\transp\matA)^{1/2}}. Then,
\eqar{
  |\norm{\matA\xx}_2^2- \norm{\Atilde\xx}_2^2|
  &=&
  |\xx\transp\matA\transp\matA\xx-
  \xx(\matA\transp\Pi\Pi\transp\matA)^{1/2}(\matA\transp\Pi\Pi\transp\matA)^{1/2}
  \xx|\\
  &=&
  |\xx\transp\matA\transp\matA\xx-
  \xx\matA\transp\Pi\Pi\transp\matA\xx|\\
  &=&
  |\xx\transp\matV\Sigma(\matU\transp\matU-\matU\transp\Pi\Pi\transp\matU)\Sigma
  \matV\transp\xx|\\
  &=&|\xx\transp\matV\Sigma(\matI-\matU\transp\Pi\Pi\transp\matU)\Sigma
  \matV\transp\xx|\\
  &\le&
  \norm{\matI-\matU\transp\Pi\Pi\transp\matU}_2\norm{\Sigma
    \matV\transp\xx}_2^2\\
  &\le&\varepsilon\norm{\matA\xx}_2^2.
}
The last step follows because \math{\Pi} is an \math{\varepsilon}-JLT for
\math{\matU} and \math{\norm{\Sigma
    \matV\transp\xx}_2=\norm{\matU\Sigma
    \matV\transp\xx}_2} (because \math{\matU} has orthonormal columns).
\qedsymb

\subsection{Proof of Theorem~\ref{theorem:L2-regression-intro}:
\math{\ell_2}-Regression}

We prove a more general claim, corresponding to the multiple regression problem.
Suppose one wishes to minimize
\math{\norm{\matW\matZ}_F^2+\Phi(\matZ)} over \math{\matZ\in\cl C},
where \math{\matW\in\R^{n\times q}} is an arbitrary but known matrix, 
\math{\matZ\in\R^{q\times p}} is the optimization variable---a 
\emph{matrix}---and \math{\Phi(\cdot)} is a nonnegative regularizer. Let \math{\matZ_*} be
an optimal \math{\matZ},
\mld{
  \matZ_* \in \mathop{\arg\min}_{\matZ\in\cl C}\norm{\matW\matZ}_F^2+\Phi(\matZ).
  \label{eq:xstar-app}
}
Let
\math{\tilde\matW=(\matW\transp\Pi\Pi\transp\matW)^{1/2}} be a nonlinear sketch of 
\math{\matW}, and let \math{\tilde\matZ} be optimal with respect to
\math{\tilde\matW},
\mld{
  \tilde\matZ \in \mathop{\arg\min}_{\matZ\in\cl C}\norm{\tilde\matW\matZ}_F^2+\Phi(\matZ).\label{eq:xtilde-app}
}
First, observe that for all \math{\matZ\in\R^{q\times p}},
\mld{
  (1-\varepsilon)\norm{\matW\matZ}_F^2
  \le
  \norm{\tilde\matW\matZ}_F^2
  \le
  (1+\varepsilon)\norm{\matW\matZ}_F^2,\label{eq:isometry-general-F}
}
since \math{\norm{\tilde\matW\matZ}_F^2=\sum_{j\in[p]}
  \norm{\tilde\matW\matZ^{(j)}}_2^2}, and the inequalities hold for each
column \math{\matZ^{(j)}}. We prove \math{\tilde\matZ} is a relative
error approximation to \math{\matZ_*}.
\begin{theorem}[\math{\ell_2}-Regression]\label{theorem:L2-regression-general}
  Let \math{\matZ_*} and \math{\tilde\matZ} be as defined in
  \r{eq:xstar-app} and \r{eq:xtilde-app} respectively. Then,
    \mld{\norm{\matW\tilde\matZ}_F^2+\Phi(\tilde\matZ)\le \frac{1+\varepsilon}{1-\varepsilon}\cdot\left(\norm{\matW\matZ_*}_F^2+\Phi(\matZ_*)\right).}
\end{theorem}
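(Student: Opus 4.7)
The plan is the standard sandwich argument driven by the Frobenius-norm isometry \r{eq:isometry-general-F} that has already been established in the paragraph preceding the theorem. Note that \r{eq:isometry-general-F} follows immediately from the vector isometry of Theorem~\ref{theorem:L2-subspace} applied column-by-column to each \math{\matZ^{(j)}}, since \math{\norm{\tilde\matW\matZ}_F^2=\sum_{j\in[p]}\norm{\tilde\matW\matZ^{(j)}}_2^2} and likewise for \math{\matW}. Thus I may treat \r{eq:isometry-general-F} as a black box throughout.

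First, apply the lower bound of \r{eq:isometry-general-F} at \math{\tilde\matZ} to get \math{(1-\varepsilon)\norm{\matW\tilde\matZ}_F^2\le\norm{\tilde\matW\tilde\matZ}_F^2}, and add \math{\Phi(\tilde\matZ)} to both sides. Next, use the optimality of \math{\tilde\matZ} for the sketched problem \r{eq:xtilde-app}: since \math{\matZ_*\in\cl C} is feasible,
\mld{\norm{\tilde\matW\tilde\matZ}_F^2+\Phi(\tilde\matZ)\le\norm{\tilde\matW\matZ_*}_F^2+\Phi(\matZ_*).}
Finally, apply the upper bound of \r{eq:isometry-general-F} at \math{\matZ_*} to replace \math{\norm{\tilde\matW\matZ_*}_F^2} by \math{(1+\varepsilon)\norm{\matW\matZ_*}_F^2}. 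Chaining the three inequalities yields
\mld{(1-\varepsilon)\norm{\matW\tilde\matZ}_F^2+\Phi(\tilde\matZ)\le(1+\varepsilon)\norm{\matW\matZ_*}_F^2+\Phi(\matZ_*).}

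The only small care required is to absorb the regularizer cleanly. Dividing the previous display by \math{(1-\varepsilon)} and using \math{\Phi(\tilde\matZ)\ge0} together with \math{1/(1-\varepsilon)\ge1} gives \math{\Phi(\tilde\matZ)\le\Phi(\tilde\matZ)/(1-\varepsilon)} on the left, while on the right \math{1/(1-\varepsilon)\le(1+\varepsilon)/(1-\varepsilon)} lets one pull a common factor of \math{(1+\varepsilon)/(1-\varepsilon)} out of both the data-fit term and the regularizer term. The result is exactly the claimed relative-error bound, and specializing to \math{p=1}, \math{\matW=[\matA,-\bb]}, and \math{\matZ=[\xx\transp,1]\transp} recovers Theorem~\ref{theorem:L2-regression-intro}.

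I do not expect any substantive obstacle here. The one conceptual point worth flagging is that \math{\tilde\matW=(\matW\transp\Pi\Pi\transp\matW)^{1/2}} is \emph{nonlinear} in \math{\matW}, so one cannot write \math{\tilde\matW\matZ=\Pi\transp(\matW\matZ)} and reuse the usual vector isometry directly as in a linear-sketch argument. However, the Frobenius-norm isometry \r{eq:isometry-general-F} (which is what Theorem~\ref{theorem:L2-subspace} buys us after the column-wise reduction) bridges this gap, and with it in hand the proof reduces to three inequalities and a division.
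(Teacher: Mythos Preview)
Your proof is correct and is essentially the same sandwich argument as the paper's: the paper applies the lower isometry at \math{\tilde\matZ}, optimality of \math{\tilde\matZ}, and the upper isometry at \math{\matZ_*}, absorbing the nonnegative regularizer via \math{\Phi\ge0} and \math{1\le 1/(1-\varepsilon)\le(1+\varepsilon)/(1-\varepsilon)}. The only cosmetic difference is that the paper inflates the regularizer by \math{1/(1-\varepsilon)} up front and by \math{1+\varepsilon} at the end, whereas you carry it through unchanged and do both inflations after dividing; these are the same manipulation in a different order.
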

\begin{proof}
  \eqar{
    \norm{\matW\tilde\matZ}_F^2
    +\Phi(\tilde\matZ)
    &\buildrel (a) \over \le&
    \frac{1}{1-\varepsilon}\cdot\left(\norm{\tilde\matW\tilde\matZ}_F^2
    +\Phi(\tilde\matZ)\right)\\
    &\buildrel (b) \over \le&
    \frac{1}{1-\varepsilon}\cdot\left(\norm{\tilde\matW\matZ_*}_F^2
    +\Phi(\matZ_*)\right)\\
    &\buildrel (a) \over \le&
    \frac{1+\varepsilon}{1-\varepsilon}\cdot\left(\norm{\matW\matZ_*}^2
    +\Phi(\matZ_*)\right)
  }
where (a) uses \r{eq:isometry-general-F} and (b)
  is because \math{\tilde\matZ} is optimal for
  \math{\tilde\matW} and \math{\Phi\ge 0}.  
\end{proof}
We now prove the multiple regression version of
Theorem~\ref{theorem:L2-regression-intro} by showing it is a special case of
Theorem~\ref{theorem:L2-regression-general}.
Given \math{\matA\in\R^{n\times d}} and \math{\matB\in\R^{n\times p}}, we show
how to approximate \math{\matX_*} which minimizes
\mld{\norm{\matA \matX-\matB}_F^2+\Phi(\matX).\label{eq:multiple-F}}
    (Theorem~\ref{theorem:L2-regression-intro}
  is a special case where \math{p=1} and \math{\matB=\bb}.)
  Let
\math{\matW=[\matA,-\matB]} and \math{\cl C=\cl D\times\{\matI\}}. Then,
\math{\matZ\in \cl C} is of the form
\math{\matZ=
  \left[\begin{smallmatrix}\matX\\\matI_{p\times p}\end{smallmatrix}\right]},
and \math{\matW\matZ=\matA\matX-\matB},
where \math{\matX\in\cl D}.
Let \math{\matZ_*} minimize \math{\matW\matZ+\Psi(\matZ)},
where \math{\Psi(\matZ)=\Phi(\matZ_{1:d,1:p})}, and write
\math{\matZ_*=
  [\begin{smallmatrix}\matX_*\\\matI_{p\times p}\end{smallmatrix}]},
where \math{\matX_*\in\cl D} and
\math{\matX_*} minimizes \r{eq:multiple-F}.
Let
\mld{\tilde\matW=(\matW\transp\Pi\Pi\transp\matW)^{1/2}\in\R^{(d+p)\times
    (d+p)}}
be our embedding of \math{\matW} which
satisfies~\r{eq:isometry-general-F}, and define
\math{\Atilde\in\R^{(d+p)\times d}} and \math{\tilde\matB\in\R^{(d+p)\times p}} by
\math{\tilde\matW=[\Atilde,-\tilde\matB]}. Also,
\math{\tilde\matX} minimizes
\math{\norm{\Atilde \matX-\tilde\matB}_F^2+\Phi(\matX)} if and only if
\math{\tilde\matZ=[\begin{smallmatrix}\tilde\matX\\\matI_{p\times p}\end{smallmatrix}]} minimizes \math{\norm{\tilde\matW\matZ}_F^2+\Psi(\matZ)}. Using
Theorem~\ref{theorem:L2-regression-general} and \math{\Psi(\matZ)=\Phi(\matX)},
\mld{\renewcommand{\arraystretch}{1.75}
  \begin{array}{rcl}
  \norm{\matA\tilde\matX-\matB}_F^2+\Phi(\tilde\matX)
  &=&\norm{\matW\tilde\matZ}_F^2+\Phi(\tilde\matZ)\\
  &\le&\displaystyle \frac{1+\varepsilon}{1-\varepsilon}\cdot\left(\norm{\matW\matZ_*}_F^2+\Phi(\matZ_*)\right)\\
  &=&\displaystyle \frac{1+\varepsilon}{1-\varepsilon}\cdot\left(\norm{\matA \matX_*-\matB}_F^2
  +\Phi(\matX_*)\right).
  \end{array}
}
\qedsymb

\subsection{Proof of Theorem~\ref{theorem:L2-PCA-intro}:
  Relative Error  Low Rank \math{\ell_2}-Reconstruction (PCA)}

Since \math{\Pi} is an \math{\varepsilon}-JLT for \math{\matU},
by Theorem~\ref{theorem:L2-subspace},
\mld{
  (1-\varepsilon)\cdot\xx\transp\matA\transp\matA\xx
  \le
  \xx\transp\Atilde\transp\Atilde\xx
  \le 
  (1+\varepsilon)\cdot\xx\transp\matA\transp\matA\xx.
  \label{eq:isometry-AA-app}
}
For \math{k\in[d]}, let \math{\sigma_k} and
\math{\tilde\sigma_k} be the \math{k}th
largest singular value of \math{\matA} and \math{\Atilde} respectively.
We use the following lemma which follows from the
Courant-Fisher characterization of eigenvalues to bound
\math{\tilde\sigma_k^2}.
\begin{lemma}\label{lemma:sing-courant-fisher}
  For \math{k\in[d]},
  \mld{
    (1-\varepsilon)\cdot \sigma_k^2
    \le
    \tilde\sigma_k^2
    \le
    (1+\varepsilon)\cdot \sigma_k^2.
    \label{eq:sig-courant-fisher}
  }
\end{lemma}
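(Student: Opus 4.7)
The plan is to translate the isometry inequality of Theorem~\ref{theorem:L2-subspace} (equivalently, inequality \r{eq:isometry-AA-app}) into a comparison of the eigenvalues of the Gram matrices \math{\matA\transp\matA} and \math{\Atilde\transp\Atilde}, and then invoke the Courant--Fischer min-max characterization to convert the quadratic-form inequality into a per-singular-value inequality.

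Concretely, I would first note that \math{\sigma_k^2} and \math{\tilde\sigma_k^2} are exactly the \math{k}th largest eigenvalues of the symmetric positive semidefinite matrices \math{\matA\transp\matA} and \math{\Atilde\transp\Atilde}, respectively. Then I would write down the min-max formula
\mld{
  \sigma_k^2 \;=\; \max_{\substack{S\subseteq\R^d\\ \dim S = k}}\ \min_{\substack{\xx\in S\\ \norm{\xx}_2=1}}\ \xx\transp\matA\transp\matA\xx,
}
and the analogous formula for \math{\tilde\sigma_k^2} with \math{\Atilde\transp\Atilde} in place of \math{\matA\transp\matA}. Because \r{eq:isometry-AA-app} is a pointwise inequality in \math{\xx}, it passes unchanged through the inner \math{\min} over the unit sphere of any fixed subspace \math{S}, and then through the outer \math{\max} over \math{k}-dimensional subspaces. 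This yields \math{(1-\varepsilon)\sigma_k^2 \le \tilde\sigma_k^2 \le (1+\varepsilon)\sigma_k^2} immediately.

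There is no genuine obstacle here; the only thing to be slightly careful about is that the scalar factors \math{1\pm\varepsilon} factor cleanly out of both the \math{\min} and the \math{\max} (which is fine because they are nonnegative constants independent of \math{\xx} and of \math{S}), so the same optimizing subspace/vector need not be used on the two sides of the inequality. Writing the argument in two symmetric halves --- one for the upper bound and one for the lower bound --- keeps the presentation clean and avoids any worry about the direction of the inequality flipping. The entire proof should take only a few lines.
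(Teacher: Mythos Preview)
Your proposal is correct and follows essentially the same approach as the paper: the paper also applies the Courant--Fischer min-max characterization to \math{\Atilde\transp\Atilde} and \math{\matA\transp\matA}, uses \r{eq:isometry-AA-app} pointwise to pass the factor \math{(1+\varepsilon)} through the inner minimum and outer maximum for the upper bound, and then states that the lower bound is symmetric.
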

\begin{proof}
  Let \math{\lambda_k} and
  \math{\tilde\lambda_k} be the \math{k}th largest eigenvalue of
  \math{\matA\transp\matA} and \math{\Atilde\transp\Atilde}
  respectively. By
  Courant-Fisher,
  \eqar{
    \tilde\sigma_k^2=\tilde\lambda_{k}(\Atilde\transp\Atilde)
    &=&
    \max_{\rank(\matW)=k}\min_{
        \norm{\xx}_2=1\atop
        \xx\in\matW
    }
    \xx\transp\Atilde\transp\Atilde\xx
    \\
    &\le&
    \max_{\rank(\matW)=k}\min_{
      \norm{\xx}_2=1\atop
      \xx\in\matW
    }
    (1+\varepsilon)\cdot  \xx\transp\matA\transp\matA\xx
    \\
    &=&
    (1+\varepsilon)\cdot\lambda_{k}=(1+\varepsilon)\cdot\sigma_{k}^2.
  }
  The lower bound follows in a similar manner. We omit the details.
\end{proof}
Now, to prove the theorem, let
\math{\tilde\matV_k} be the top-\math{k} right singular vectors of
\math{\Atilde}. Define
\math{\tilde\matP_k=\matI-\tilde\matV_k\tilde\matV_k\transp}, the
projector orthogonal to the space spanned by \math{\tilde\matV_k}. Then,
\math{\matA-\hat\matA_k=\matA\tilde\matP_k} and we have:
\eqar{
  \norm{\matA-\hat\matA_k}_2^2
  =
  \norm{\matA\tilde\matP_k}_2^2
  &=&
  \max_{\norm{\zz}=1}\zz\transp\tilde\matP_k\matA\transp\matA\tilde\matP_k\zz\\
  &\buildrel (a) \over \le&
  \max_{\norm{\zz}=1}
  \frac{\zz\transp\tilde\matP_k\Atilde\transp\Atilde\tilde\matP_k\zz}{1-\varepsilon}\\
  &=&
  \frac{1}{1-\varepsilon}\cdot\tilde\sigma_{k+1}^2
  \\
  &\buildrel (b) \over \le&
  \frac{1+\varepsilon}{1-\varepsilon}\cdot\sigma_{k+1}^2\\
  &=&
  \frac{1+\varepsilon}{1-\varepsilon}\cdot
  \norm{\matA-\matA_k}_2^2.
}
In (a) we used \r{eq:isometry-AA-app} with \math{\xx=\tilde\matP_k\zz}
and in (b) we used Lemma~\ref{lemma:sing-courant-fisher}.
\qedsymb

\subsection{Proof of Theorem~\ref{theorem:L2-leverage-intro}:
  Fast Approximation of \math{\ell_2}-Leverage Scores}

Let \math{\matA=\matU\Sigma\matV\transp}.
Since \math{\Pi} is an \math{\varepsilon}-JLT for \math{\matU},
\mld{\norm{\matI-\matU\transp\Pi\Pi\transp\matU}_2\le\varepsilon.
\label{eq:e-JLT-app}}
We will need the following lemma.
\begin{lemma}\label{lemma:e-JLT-inverse}
  \math{\displaystyle
    \norm{\matI-(\matU\transp\Pi\Pi\transp\matU)^{-1}}_2\le\frac{\varepsilon}{1-\varepsilon}.}
\end{lemma}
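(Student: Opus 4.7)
The plan is to set $\matM = \matU\transp\Pi\Pi\transp\matU$ and exploit the algebraic identity
\mand{
\matI - \matM^{-1} = \matM^{-1}(\matM - \matI),
}
which reduces the problem to bounding $\norm{\matM^{-1}}_2$ and then invoking submultiplicativity of the spectral norm together with the hypothesis $\norm{\matI - \matM}_2 \le \varepsilon$ from \r{eq:e-JLT-app}.

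First I would observe that $\matM$ is symmetric positive semidefinite (as a Gram matrix $(\Pi\transp\matU)\transp(\Pi\transp\matU)$), so its spectral norm coincides with its largest eigenvalue and $\norm{\matM^{-1}}_2$ equals the reciprocal of its smallest eigenvalue. Then, because $\norm{\matI - \matM}_2 \le \varepsilon$, every eigenvalue $\mu$ of $\matM$ satisfies $|1 - \mu| \le \varepsilon$, so $\mu \ge 1 - \varepsilon > 0$. In particular, $\matM$ is invertible and $\norm{\matM^{-1}}_2 \le 1/(1-\varepsilon)$. Combining this with submultiplicativity,
\mand{
\norm{\matI - \matM^{-1}}_2
= \norm{\matM^{-1}(\matM - \matI)}_2
\le \norm{\matM^{-1}}_2 \cdot \norm{\matM - \matI}_2
\le \frac{\varepsilon}{1-\varepsilon},
}
which is the claimed bound.

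There is no real obstacle here; the only delicate point is implicitly assuming $\varepsilon < 1$ so that $\matM$ is guaranteed to be invertible, but this is fine since the interesting regime for an $\varepsilon$-JLT is $\varepsilon \le \tfrac12$ (as already restricted in Lemma~\ref{lemma:SRHT-tropp}). The statement is essentially a standard perturbation bound for matrix inverses applied to the JLT hypothesis, so the proof is a two-line calculation once the identity $\matI - \matM^{-1} = \matM^{-1}(\matM-\matI)$ is in hand.
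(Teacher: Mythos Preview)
Your proof is correct and essentially the same as the paper's: both use the hypothesis \math{\norm{\matI-\matM}_2\le\varepsilon} to place the eigenvalues of \math{\matM} in \math{[1-\varepsilon,1+\varepsilon]}, and both conclude by passing to the inverse. The only cosmetic difference is that the paper tracks the eigenvalues of \math{\matI-\matM^{-1}} directly as \math{1-1/\lambda_i\in[-\varepsilon/(1-\varepsilon),\,\varepsilon/(1+\varepsilon)]}, whereas you route through the identity \math{\matI-\matM^{-1}=\matM^{-1}(\matM-\matI)} and submultiplicativity; either way the bound \math{\varepsilon/(1-\varepsilon)} falls out immediately.
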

\begin{proof}
  Let \math{\lambda_1\ldots,\lambda_d} be the eigenvalues of
  \math{\matU\transp\Pi\Pi\transp\matU}.
  By \r{eq:e-JLT-app},
  \math{1-\varepsilon\le\lambda_i\le 1+\varepsilon}, which implies
  \mld{
    \frac{1}{1+\varepsilon}\le
    \lambda_i((\matU\transp\Pi\Pi\transp\matU)^{-1})
    \le
    \frac{1}{1-\varepsilon},
  }
  which in-turn implies that  
  \mld{
    \frac{-\varepsilon}{1-\varepsilon}\le
    \lambda_i(\matI-(\matU\transp\Pi\Pi\transp\matU)^{-1})
    \le
    \frac{\varepsilon}{1+\varepsilon}.
  }
  Since \math{\norm{\matI-(\matU\transp\Pi\Pi\transp\matU)^{-1}}_2=\max_i
    |\lambda_i(\matI-(\matU\transp\Pi\Pi\transp\matU)^{-1})|},
  the lemma follows.
\end{proof}
We now prove Theorem~\ref{theorem:L2-leverage-intro}
using
\math{\Atilde=(\matA\transp\Pi\Pi\transp\matA)^{1/2}
  =
  (\matV\Sigma\matU\transp\Pi\Pi\transp\matU\Sigma\matV\transp)^{1/2}}:
\eqar{
  \tilde\tau_i
  &=&
  \norm{\ee_i\transp\matA\Atilde^{-1}}_2^2\\
  &=&
  \ee_i\transp\matA\Atilde^{-2}\matA\transp\ee_i\\
  &=&
  \ee_i\transp\matU\Sigma\matV\transp
  (\matV\Sigma\matU\transp\Pi\Pi\transp\matU\Sigma\matV\transp)^{-1}
  \matV\Sigma\matU\transp\ee_i\\
  &=&
  \ee_i\transp\matU\Sigma\matV\transp\matV\Sigma^{-1}
  (\matU\transp\Pi\Pi\transp\matU)^{-1}\Sigma^{-1}\matV\transp
  \matV\Sigma\matU\transp\ee_i\\
  &=&
  \ee_i\transp\matU
  (\matU\transp\Pi\Pi\transp\matU)^{-1}\matU\transp\ee_i
  \\
  &=&
  \ee_i\transp\matU\matU\transp\ee_i-
  \ee_i\transp\matU(\matI-(\matU\transp\Pi\Pi\transp\matU)^{-1})\matU\transp\ee_i
}
Since
\math{
  \tau_i=\norm{\ee_i\transp\matA\matA^{\dagger}}_2^2=
  \ee_i\transp\matU\matU\transp\ee_i=\norm{\ee_i\transp\matU}_2^2},
we have that
\eqar{
  |\tilde\tau_i-\tau_i|
  &=&
  |\ee_i\transp\matU(\matI-(\matU\transp\Pi\Pi\transp\matU)^{-1})
  \matU\transp\ee_i|\\
  &\le&
  \norm{\ee_i\transp\matU}_2^2\cdot
  \norm{\matI-(\matU\transp\Pi\Pi\transp\matU)^{-1}}_2\\
  &=&
  \tau_i\cdot
  \norm{\matI-(\matU\transp\Pi\Pi\transp\matU)^{-1}}_2.
}
To conclude the proof, use Lemma~\ref{lemma:e-JLT-inverse}.
\qedsymb

\section{Proofs: \math{\ell_1}-Embedding and Applications.}

Before we prove Theorem~\ref{theorem:L1-subspace}, we state
several probability bounds which we need.

\subsection{Preliminary Tools: Probability Bounds}

\begin{lemma}[Conditioning]\label{lemma:conditional}
  For any two events \math{A} and \math{B},
  \math{\Prob[A]\ge 1-\Prob[\overline{A}\mid B]-\Prob[\overline{B}].}
\end{lemma}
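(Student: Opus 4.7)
The plan is to bound $\Prob[\overline{A}]$ from above by decomposing it according to whether $B$ occurs, and then take complements.

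First I would write $\Prob[\overline{A}] = \Prob[\overline{A}\cap B]+\Prob[\overline{A}\cap\overline{B}]$ using the partition of the sample space induced by $B$ and $\overline{B}$. Next, I would bound each term: $\Prob[\overline{A}\cap B] = \Prob[\overline{A}\mid B]\cdot\Prob[B]\le \Prob[\overline{A}\mid B]$, since $\Prob[B]\le 1$, and $\Prob[\overline{A}\cap\overline{B}]\le\Prob[\overline{B}]$ by monotonicity of probability. Combining gives $\Prob[\overline{A}]\le\Prob[\overline{A}\mid B]+\Prob[\overline{B}]$.

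Finally, taking complements (i.e., $\Prob[A]=1-\Prob[\overline{A}]$) yields the claimed inequality $\Prob[A]\ge 1-\Prob[\overline{A}\mid B]-\Prob[\overline{B}]$. An edge case worth mentioning is $\Prob[B]=0$, in which $\Prob[\overline{A}\mid B]$ is undefined; in that case the bound is vacuous since $\Prob[\overline{B}]=1$ makes the right-hand side at most $0$, so the claim $\Prob[A]\ge 1-\text{(something)}-1$ holds trivially (or one can simply restrict to $\Prob[B]>0$).

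There is no real obstacle here---the entire argument is a two-line application of the law of total probability together with the monotonicity bound $\Prob[\overline{A}\cap\overline{B}]\le\Prob[\overline{B}]$. The usefulness of the lemma lies not in its depth but in how it will be invoked later: when we want to establish $\Prob[A]\ge 1-\delta$, it suffices to show $A$ holds with high conditional probability given some high-probability ``good event'' $B$, which is a natural way to organize the union-bound argument that presumably arises in the proof of Theorem~\ref{theorem:L1-subspace}.
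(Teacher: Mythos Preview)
Your proof is correct and is essentially the same as the paper's: both split according to \math{B} versus \math{\overline{B}} and use \math{\Prob[B]\le 1}. The only cosmetic difference is that the paper applies total probability to \math{\Prob[A]} directly, drops the nonnegative term \math{\Prob[A\mid\overline{B}]\Prob[\overline{B}]}, and then expands the product \math{(1-\Prob[\overline{A}\mid B])(1-\Prob[\overline{B}])}, discarding the positive cross term---whereas you work with \math{\Prob[\overline{A}]} and take complements at the end, which is arguably a line shorter.
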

\begin{proof}
  By the law of total probability,
  \eqan{
    \Prob[A]
    &=&
    \Prob[A\mid B]\Prob[B]+\Prob[A\mid \overline{B}]\Prob[\overline{B}]\\
    &\ge&
    \Prob[A\mid B]\Prob[B]\\
    &=&(1-\Prob[\overline{A}\mid B])(1-\Prob[\overline{B}])\\
    &=&1-\Prob[\overline{A}\mid B]-\Prob[\overline{B}]+\Prob[\overline{A}\mid B]
    \Prob[\overline{B}]\\
    &\ge&1-\Prob[\overline{A}\mid B]-\Prob[\overline{B}].
  }
\end{proof}

\begin{lemma}[Hoeffding]\label{lemma:hoeffding}
  Let \math{\rvX_i} be independent random variables with
  \math{a_i\le\rvX_i\le b_i}. Let
  \math{\rvX=\sum_{i\in[n]}\rvX_i} and \math{B^2=\sum_{i\in[n]}(b_i-a_i)^2}.
  Then, for any \math{t>0},
  \mld{\Prob\left[\rvX-\Exp[\rvX]\ge t\right]
    \le
    \exp\left(-2t^2/B^2\right).
    }
\end{lemma}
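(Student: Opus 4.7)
The plan is to follow the classical Chernoff--Cramér argument together with Hoeffding's MGF bound on bounded random variables. First I would center the variables by defining $\rvY_i = \rvX_i - \Exp[\rvX_i]$, so that $\Exp[\rvY_i]=0$ and $\rvY_i\in[a_i-\Exp[\rvX_i],\, b_i-\Exp[\rvX_i]]$, an interval of width $b_i-a_i$. The statement becomes a tail bound for $\rvY=\sum_i \rvY_i$.

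Next I would apply the exponential Markov inequality: for any $s>0$,
\mld{
  \Prob[\rvY\ge t]=\Prob[e^{s\rvY}\ge e^{st}]\le e^{-st}\,\Exp[e^{s\rvY}].
}
By independence, $\Exp[e^{s\rvY}]=\prod_{i\in[n]}\Exp[e^{s\rvY_i}]$, reducing the problem to bounding each individual MGF.

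The main technical step (Hoeffding's lemma) is to show that if $\rvY_i$ is mean zero and supported in an interval of width $w_i=b_i-a_i$, then $\Exp[e^{s\rvY_i}]\le \exp(s^2 w_i^2/8)$. The standard route is convexity: since $e^{sy}$ is convex, one writes $e^{sy}\le \tfrac{b_i-\Exp[\rvX_i]-y}{w_i}e^{s(a_i-\Exp[\rvX_i])}+\tfrac{y-(a_i-\Exp[\rvX_i])}{w_i}e^{s(b_i-\Exp[\rvX_i])}$ for $y$ in the support, take expectations (killing the linear term in $y$ because $\Exp[\rvY_i]=0$), and then bound the resulting function of $s$ by $e^{s^2 w_i^2/8}$ using a second-order Taylor expansion of its logarithm. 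This is the one step that requires real computation; everything else is bookkeeping.

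Combining, $\Exp[e^{s\rvY}]\le \exp(s^2 B^2/8)$, so
\mld{
  \Prob[\rvY\ge t]\le \exp\!\left(-st+\tfrac{s^2 B^2}{8}\right).
}
Finally, optimize in $s$ by setting $s=4t/B^2$, which yields the claimed bound $\exp(-2t^2/B^2)$. The only obstacle is the MGF bound, and since it is a textbook computation I would state it as a sub-lemma and invoke it, rather than reproducing the Taylor-expansion details in the main argument. \qedsymb
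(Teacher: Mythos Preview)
Your proposal is correct and follows the standard Chernoff--Cram\'er route to Hoeffding's inequality. Note, however, that the paper does not actually prove this lemma: it is listed among the preliminary probability bounds (alongside the Chernoff and Maurer bounds) and is simply stated without proof as a well-known result. So there is no ``paper's own proof'' to compare against; your argument is exactly the textbook one that the authors are implicitly citing.
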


\begin{lemma}[Chernoff]\label{lemma:chernoff}
  Let \math{\rvX_i} be independent random variables in
  \math{[0,1]} and let \math{\rvX=\sum_{i\in[n]}\rvX_i}. Then, for any \math{t>0},
  \mld{
    \begin{array}{rcl}
    \Prob\left[\rvX>(1+t)\Exp[\rvX]\right]
    &\le&
    \exp\left(-{\textstyle\frac13}t^2\Exp[\rvX]\right);\\[5pt]
    \Prob\left[\rvX<(1-t)\Exp[\rvX]\right]
    &\le&
    \exp\left(-{\textstyle\frac12}t^2\Exp[\rvX]\right).
    \end{array}
  }
\end{lemma}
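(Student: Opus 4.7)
The proof plan is the standard Chernoff moment generating function (MGF) argument: apply Markov's inequality to \math{e^{s\rvX}} for a free parameter \math{s}, factor the joint MGF using independence, bound each per-coordinate MGF via convexity, and optimize over \math{s}. Everything then reduces to a one-variable calculus inequality.

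For the upper tail, fix \math{s>0} and write \math{\mu=\Exp[\rvX]}. By Markov's inequality applied to the nonnegative variable \math{e^{s\rvX}}, one has \math{\Prob[\rvX > (1+t)\mu] \le e^{-s(1+t)\mu}\,\Exp[e^{s\rvX}]}. Independence gives \math{\Exp[e^{s\rvX}] = \prod_{i\in[n]} \Exp[e^{s\rvX_i}]}. Since each \math{\rvX_i\in[0,1]}, convexity of \math{u\mapsto e^{su}} on \math{[0,1]} yields \math{e^{s\rvX_i}\le 1+(e^s-1)\rvX_i}, so \math{\Exp[e^{s\rvX_i}] \le 1 + (e^s-1)\mu_i \le \exp((e^s-1)\mu_i)} where \math{\mu_i=\Exp[\rvX_i]}. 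Multiplying over \math{i} gives \math{\Exp[e^{s\rvX}] \le \exp((e^s-1)\mu)}. Minimizing the resulting bound in \math{s} by setting \math{s=\ln(1+t)} produces
\mand{
\Prob[\rvX > (1+t)\mu] \le \exp\bigl(-\mu\cdot[(1+t)\ln(1+t)-t]\bigr).
}

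For the lower tail, repeat with \math{s<0}: the per-coordinate bound \math{\Exp[e^{s\rvX_i}]\le \exp((e^s-1)\mu_i)} still follows from the same convexity inequality (now \math{1+(e^s-1)x} lies above \math{e^{sx}} on \math{[0,1]} because \math{e^s-1<0}), and optimizing over \math{s<0} leads, after analogous simplification, to
\mand{
\Prob[\rvX < (1-t)\mu] \le \exp\bigl(-\mu\cdot[(1-t)\ln(1-t)+t]\bigr).
}

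To arrive at the stated quadratic-in-\math{t} form, I would invoke the elementary one-variable inequalities \math{(1+t)\ln(1+t) - t \ge t^2/3} and \math{(1-t)\ln(1-t)+t \ge t^2/2}, valid on the relevant range of \math{t}. These are the only nontrivial ingredients; each is verified by showing the difference vanishes to second order at \math{t=0} and has the correct sign of second derivative (equivalently, by a Taylor expansion with Lagrange remainder). This calculus step is the main obstacle, insofar as there is one; the MGF manipulations are otherwise routine.
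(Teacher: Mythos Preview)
The paper states this lemma without proof as a standard preliminary tool, so there is no in-paper argument to compare against; your MGF approach is exactly the textbook derivation one would expect.

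One technical caveat worth making explicit: your final calculus inequality \((1+t)\ln(1+t)-t\ge t^2/3\) does \emph{not} hold for all \(t>0\) (it fails around \(t\approx 1.8\) and beyond; e.g.\ at \(t=2\) the left side is \(3\ln 3-2\approx 1.30\) while the right side is \(4/3\)). Consequently the upper-tail bound in the form \(\exp(-t^2\mu/3)\) is really only valid for, say, \(0<t\le 1\), and the lemma as stated ``for any \(t>0\)'' is slightly loose on that side. Your hedge ``valid on the relevant range of \(t\)'' is therefore doing real work and should be made precise. This is a quibble with the lemma statement more than with your argument: in the paper the upper tail is only invoked with \(t=1\) (to bound \(\Prob[r>2s]\)), where your inequality does hold. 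The lower-tail inequality \((1-t)\ln(1-t)+t\ge t^2/2\) is genuinely valid for all \(t\in(0,1)\), and for \(t\ge 1\) the lower-tail probability is zero, so that half is clean.
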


\begin{lemma}[Maurer, \cite{M2003}]\label{lemma:maurer}
  Let \math{\rvX_i} be independent positive random variables with
  \math{\Exp[\rvX_i^2]\le \infty}.  Let
  \math{\rvX=\sum_{i\in[n]}\rvX_i} and \math{S^2=\sum_{i\in[n]}\Exp[\rvX_i^2]}.
  For any \math{t>0},
  \mld{\Prob\left[\rvX\le\Exp[\rvX]- t\right]
    \le
    \exp\left(-t^2/2S^2\right).
    }
\end{lemma}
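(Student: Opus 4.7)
The plan is to run the standard Chernoff moment-generating-function argument on the lower tail. For any \math{\lambda>0}, Markov's inequality applied to the nonnegative random variable \math{e^{-\lambda \rvX}} gives
\mld{
  \Prob[\rvX \le \Exp[\rvX] - t]
  =
  \Prob[e^{-\lambda \rvX}\ge e^{-\lambda(\Exp[\rvX]-t)}]
  \le
  e^{\lambda(\Exp[\rvX]-t)}\cdot\Exp[e^{-\lambda \rvX}]
  =
  e^{\lambda(\Exp[\rvX]-t)}\prod_{i\in[n]}\Exp[e^{-\lambda \rvX_i}],
}
where the last equality uses independence of the \math{\rvX_i}. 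It then suffices to produce a per-coordinate upper bound on \math{\Exp[e^{-\lambda \rvX_i}]} in terms of \math{\Exp[\rvX_i]} and \math{\Exp[\rvX_i^2]}, and to optimize over \math{\lambda}.

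The crucial step is where the positivity assumption enters. For \math{x\ge 0} the elementary inequality \math{e^{-x}\le 1-x+\tfrac{x^2}{2}} holds (verifiable by noting \math{f(x)=e^{-x}-1+x-\tfrac{x^2}{2}} satisfies \math{f(0)=f'(0)=0} and \math{f''(x)=e^{-x}-1\le 0}), and it fails for negative \math{x}. Applying this with \math{x=\lambda \rvX_i\ge 0}, taking expectations, and then using \math{1+y\le e^y}, I get
\mld{
  \Exp[e^{-\lambda \rvX_i}]
  \le
  1-\lambda\Exp[\rvX_i]+\tfrac{\lambda^2}{2}\Exp[\rvX_i^2]
  \le
  \exp\!\left(-\lambda\Exp[\rvX_i]+\tfrac{\lambda^2}{2}\Exp[\rvX_i^2]\right).
}

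Multiplying this bound across \math{i\in[n]}, the factor \math{-\lambda\sum_i\Exp[\rvX_i]=-\lambda\Exp[\rvX]} in the exponent cancels the \math{\lambda\Exp[\rvX]} prefactor from the Markov step, leaving
\mld{
  \Prob[\rvX\le\Exp[\rvX]-t]\le \exp\!\left(-\lambda t+\tfrac{\lambda^2}{2}S^2\right).
}
Optimizing the right-hand side over \math{\lambda>0} by choosing \math{\lambda=t/S^2} yields the stated bound \math{\exp(-t^2/(2S^2))}. There is no serious obstacle here; the only delicate point is that the quadratic upper bound on \math{e^{-x}} holds only on \math{[0,\infty)}, which is exactly why positivity of the \math{\rvX_i} is needed and why second moments alone suffice, instead of a boundedness/range quantity as in the two-sided Hoeffding bound (Lemma~\ref{lemma:hoeffding}).
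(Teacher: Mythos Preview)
Your argument is correct and is essentially the standard proof of Maurer's inequality. The paper does not actually prove this lemma; it simply cites it from \cite{M2003} as a preliminary tool, so there is no proof in the paper to compare against. Your Chernoff-style argument, hinging on the pointwise bound \math{e^{-x}\le 1-x+x^2/2} valid for \math{x\ge 0}, is exactly the approach in Maurer's original paper.
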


A similar result to the one following was established in
\cite{WW2018}, which is an application of \cite{DD1996}.
\begin{lemma}[Negative Dependence]\label{lemma:negative}
  Let \math{\yy=[y_1,\ldots,y_n]} and
  let \math{S_1,\ldots,S_r} be a random partition of \math{[n]} into
  \math{r} disjoint sets. So each \math{i\in [n]} is
  independently placed
  into one of the \math{S_j} with each \math{S_j} being equally likely
  (having probability \math{1/r}). Let \math{\gamma_j=\sum_{i\in S_j}|y_i|}.
  If \math{\norm{\yy}_2<\norm{\yy}_1/\beta},
  \begin{enumerate}
  \item For any fixed \math{j}, \math{\gamma_j} cannot be too small:
    \mld{\Prob\left[\gamma_j\le \frac{\norm{\yy}_1}{2r}\right]
      \le \exp\left(-\frac{\beta^2}{8r}\right).}
  \item For any index-set \math{I\subseteq [r]},
    the probability that all \math{\gamma_j} are small for \math{j\in I} is
    bounded by the product,
    \mld{\Prob\left[\bigcap\limits_{j\in I}\left\{\gamma_j\le \frac{\norm{\yy}_1}{2r}\right\}\right]
      \le\prod\limits_{j\in I}\Prob\left[\gamma_j\le \frac{\norm{\yy}_1}{2r}\right]\le
      \exp\left(-\frac{|I|\beta^2}{8r}\right).}
  \item Let \math{\cl{E}} be the event that at least
    \math{r/k} of the \math{\gamma_j} are at most \math{\norm{\yy}_1/2r}, where
    \math{k>1}. Then,
    \mld{\Prob[\cl{E}]\le\exp\left(-\frac{\beta^2}{8k}+\frac{r}{k}(1+\ln k)\right).}
  \end{enumerate}
\end{lemma}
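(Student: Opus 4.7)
\textbf{Proof plan for Lemma~\ref{lemma:negative}.}
The plan is to prove the three parts in order, using a concentration inequality for the individual bins, then negative association to pass from one bin to many bins, and finally a union bound over subsets to handle the ``many small bins'' event.

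\textbf{Part 1 (single bin).} I would write $\gamma_j = \sum_{i\in[n]} |y_i|\,\rvX_{ij}$, where $\rvX_{ij} = \mathbb{1}[i\in S_j]$ is Bernoulli$(1/r)$. These summands are independent across $i$ (since the partition assigns each index independently), nonnegative, and $\Exp[(|y_i|\rvX_{ij})^2] = y_i^2/r$, so $S^2 = \|\yy\|_2^2/r$. The mean is $\Exp[\gamma_j] = \|\yy\|_1/r$. Applying Maurer's inequality (Lemma~\ref{lemma:maurer}) with $t = \|\yy\|_1/(2r)$, so that $\Exp[\gamma_j] - t = \|\yy\|_1/(2r)$, gives
\mandc{
\Prob\!\left[\gamma_j \le \frac{\|\yy\|_1}{2r}\right] \le \exp\!\left(-\frac{\|\yy\|_1^2/(4r^2)}{2\|\yy\|_2^2/r}\right) = \exp\!\left(-\frac{\|\yy\|_1^2}{8r\|\yy\|_2^2}\right) \le \exp\!\left(-\frac{\beta^2}{8r}\right),
}
where the last step uses the hypothesis $\|\yy\|_2 < \|\yy\|_1/\beta$.

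\textbf{Part 2 (intersection over $I$).} The random vector $(\gamma_1,\ldots,\gamma_r)$ is the sum-per-bin of balls-in-bins and is a classical example of a negatively associated random vector (this is the \cite{DD1996} machinery referenced). The events $\{\gamma_j \le \|\yy\|_1/(2r)\}$ are monotone decreasing in the coordinate $\gamma_j$, so negative association yields
\mandc{
\Prob\!\left[\bigcap_{j\in I}\{\gamma_j \le \|\yy\|_1/(2r)\}\right] \le \prod_{j\in I}\Prob\!\left[\gamma_j \le \|\yy\|_1/(2r)\right] \le \exp\!\left(-\frac{|I|\beta^2}{8r}\right),
}
where the last inequality is part~1 applied to each factor.

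\textbf{Part 3 (at least $r/k$ small bins).} Let $m = \lceil r/k\rceil$. The event $\cl E$ is the union, over subsets $I\subseteq[r]$ with $|I|=m$, of $\bigcap_{j\in I}\{\gamma_j \le \|\yy\|_1/(2r)\}$. By a union bound and part~2,
\mandc{
\Prob[\cl E] \le \binom{r}{m}\exp\!\left(-\frac{m\beta^2}{8r}\right) \le \left(\frac{er}{m}\right)^{m}\!\exp\!\left(-\frac{\beta^2}{8k}\right) \le \exp\!\left(\frac{r}{k}(1+\ln k) - \frac{\beta^2}{8k}\right),
}
using $\binom{r}{m}\le(er/m)^m$ and $m\ge r/k$.

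\textbf{Expected obstacle.} The routine work in parts~1 and~3 is just bookkeeping with standard inequalities; the only conceptual step is part~2, where one needs to justify that the bin-occupancy sums from a uniform random partition are negatively associated so that the intersection of downward-monotone events factorizes as an upper bound. I would invoke the negative-association framework of \cite{DD1996} (as done in \cite{WW2018}): the multinomial indicator vector $(\rvX_{i1},\ldots,\rvX_{ir})$ for each $i$ is negatively associated, independent choices across $i$ preserve negative association, and non-decreasing functions (here $\gamma_j$) of disjoint coordinate blocks inherit it, which gives the product upper bound on the joint lower-tail probability.
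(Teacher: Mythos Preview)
Your proposal is correct and follows essentially the same route as the paper: Maurer's inequality for the single-bin lower tail, negative association of the bin sums (via \cite{DD1996}) to factorize the joint lower-tail event, and a union bound over $\binom{r}{r/k}\le(ek)^{r/k}$ subsets for part~3. Your justification of the negative-association step is in fact more explicit than the paper's, which simply cites Proposition~4 of \cite{DD1996}.
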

  \begin{proof}
By homogeneity, we may assume that \math{\norm{\yy}_1=1}.
  For \math{i\in[n]}, let \math{z_i} be the indicator of whether
  \math{i\in S_j}. Therefore,
  \math{\gamma_j=\sum_{i\in[n]}z_i|y_i|,} where \math{z_i} are independent
  Bernoulli indicator random variables with \math{\Prob[z_i=1]=1/r}.
  Let \math{\rvX_i=z_i|y_i|}, then
  \math{\gamma_j=\sum_{i\in[n]}\rvX_i}.
  Since \math{\Exp[z_i]=1/r} and \math{\Exp[z_i^2]=1/r},
  \math{\sum_{i\in[n]}\Exp[\rvX_i]=\sum_{i\in[n]}|y_i|/r=1/r} and
  \math{\sum_{i\in[n]}\Exp[\rvX_i^2]=\sum_{i\in[n]}|y_i|^2/r=\norm{\yy}_2^2/r\le
    1/\beta^2 r}.
  Setting \math{t=1/2r} in Maurer's bound, Lemma~\ref{lemma:maurer},
  \mld{
    \Prob[\gamma_j\le 1/2r]\le\exp(-\beta^2/8r),
  }
  which proves part 1 of the lemma.
  Part 2 follows from Proposition 4 in \cite{DD1996}
  because the \math{\gamma_j} are negatively
  associated, hence \math{1-\gamma_j} are also negatively associated.
  To prove Part 3, observe that the event \math{\cl E} implies that
  some subset of the \math{\gamma_j}, of size \math{r/k}, are all at most
  \math{1/2r}. There are \math{\choose{r}{r/k}\le (ek)^{r/k}} such subsets, so
  by the union bound and part 2 of the lemma,
  \mld{
    \Prob[\cl E]\le \choose{r}{r/k}\exp\left(-\frac{r\beta^2}{8rk}\right)
    \le
    (ek)^{r/k}\exp\left(-\frac{\beta^2}{8k}\right)
    =
    \exp\left(-\frac{\beta^2}{8k}+\frac{r}{k}(1+\ln k)\right).
    }
\end{proof}

  \begin{lemma}[Lower Tail of Independent Cauchys]\label{lemma:lower-cauchy}
    Let \math{\rvC_1,\ldots,\rvC_m} be \math{m} inpdependent
    Cauchy random variables. Then, with probability at least
    \math{1-e^{-m/8}}, \math{\sum_{i\in[m]}|\rvC_i|\ge m/4}.
  \end{lemma}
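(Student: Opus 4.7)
}
The challenge is that Cauchy random variables have no finite mean, so one cannot directly apply Hoeffding or Chernoff to \math{\sum_{i\in[m]}|\rvC_i|}. The plan is to truncate: replace \math{|\rvC_i|} by a bounded proxy, lower bound the sum by a sum of independent Bernoullis, and then apply a standard concentration inequality.

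First, I would introduce the indicator random variables \math{Y_i=\mathbf{1}[|\rvC_i|\ge 1]}. Since the density of a standard Cauchy gives
\math{\Prob[|\rvC_i|\le t]=(2/\pi)\arctan t} for \math{t\ge 0}, at \math{t=1} this yields
\math{\Prob[|\rvC_i|\ge 1]=1-(2/\pi)\cdot(\pi/4)=\tfrac12}.
Hence \math{Y_1,\ldots,Y_m} are i.i.d.\ Bernoulli\math{(\tfrac12)} random variables, and the trivial bound \math{|\rvC_i|\ge Y_i} holds pointwise, so \math{\sum_{i\in[m]}|\rvC_i|\ge\sum_{i\in[m]}Y_i}. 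It therefore suffices to lower bound \math{\sum_{i\in[m]}Y_i}.

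Next, I would apply the Hoeffding bound (Lemma~\ref{lemma:hoeffding}, used in its lower-tail form by symmetry, i.e.\ applied to \math{-Y_i}) to \math{\rvX=\sum_{i\in[m]}Y_i}. Each \math{Y_i\in[0,1]}, so \math{B^2=m} and \math{\Exp[\rvX]=m/2}. Choosing \math{t=m/4} gives
\mand{
\Prob\!\left[\sum_{i\in[m]}Y_i\le \tfrac{m}{4}\right]
=\Prob\!\left[\rvX-\Exp[\rvX]\le-\tfrac{m}{4}\right]
\le \exp\!\left(-\frac{2(m/4)^2}{m}\right)=e^{-m/8}.
}
Complementing this event and using the pointwise inequality \math{\sum|\rvC_i|\ge\sum Y_i} yields \math{\sum_{i\in[m]}|\rvC_i|\ge m/4} with probability at least \math{1-e^{-m/8}}, as required.

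There is no real obstacle here; the only subtlety is the choice of truncation threshold. Truncating at \math{1} is convenient because it makes \math{\Exp[Y_i]=\tfrac12} exactly and delivers the stated constant \math{1/8} in the exponent cleanly when combined with the deviation \math{t=m/4} (chosen to give the target floor \math{m/4}). Using Chernoff (Lemma~\ref{lemma:chernoff}) in place of Hoeffding would also work but yields a worse constant in the exponent, so Hoeffding is preferred.
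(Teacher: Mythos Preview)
Your proof is correct and essentially identical to the paper's: both truncate via the indicator \math{Y_i=\mathbf{1}[|\rvC_i|\ge 1]}, use \math{\Prob[|\rvC_i|\ge 1]=\tfrac12} to get i.i.d.\ Bernoulli\math{(\tfrac12)} variables, and apply Hoeffding with \math{t=m/4} to obtain the \math{e^{-m/8}} bound.
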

  \begin{proof}
    Let \math{z_i=1} if \math{|\rvC_i|\ge 1} and \math{z_i=0} otherwise.
    Then, \math{\sum_{i\in[m]}|\rvC_i|\ge\sum_{i\in[m]}z_i}, and the latter is a
    sum of independent Bernoullis with
    \math{\Prob[z_i=1]=\frac{2}{\pi}\int_{1}^\infty dx\frac{1}{1+x^2}=\frac12}.
    Using the Hoeffding bound,
    \math{\Prob[\sum_{i\in[m]}z_i\le m/2-t]\le \exp(-2t^2/m)}. The lemma
    follows by choosing \math{t=m/4}.
  \end{proof}
  To counteract a union bound over
  \math{O(\text{poly}(d)^d)} cases, we only need failure-probability
  \math{e^{-\Omega(d\ln d)}}.
  In the previous lemma, if \math{m\ge (c d\ln d)\cdot f(d)} for
  an increasing function \math{f(d)}, then we can trade-off some
  failure-probability
  for a better lower bound, while still achieving
  failure-probability \math{e^{-\Omega(d\ln d)}}. To exploit this tradeoff, we
  reformulate Lemma 2.12 in \cite{WW2018} as
  Lemma~\ref{lemma:lower-cauchy-2} below.
  \begin{lemma}[Lemma 2.12 in \cite{WW2018}]\label{lemma:lower-cauchy-2}
    Suppose \math{\rvC_1,\ldots,\rvC_m} are \math{m} inpdependent
    Cauchy random variables, with
    \math{m\ge (14c d\ln (td))\cdot e^{\frac43 f(td)}},
    for \math{c\ge 1} and \math{d\ge 2}.
    Then, with probability at least
    \math{1-2e^{-cd\ln (td)}},
    \mld{\sum_{i\in[m]}|\rvC_i|\ge
      {\textstyle\frac14 m(1+\frac34\floor{\frac43 f(td)})}
      \in\Omega(m\cdot f(td)).
    }
  \end{lemma}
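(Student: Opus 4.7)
My strategy is a multi-scale bucketing argument exploiting the heavy Cauchy tail $\Prob[|\rvC|\ge t] = \frac{2}{\pi}\arctan(1/t) \approx \frac{2}{\pi t}$. I plan to lower bound $\sum_i |\rvC_i|$ by a weighted sum of threshold-exceedance counts $m_k$ (the number of indices $i$ with $|\rvC_i|\ge a_k$) at geometrically spaced scales $a_k$, and then control each $m_k$ individually via the Chernoff lower tail (Lemma~\ref{lemma:chernoff}).

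Concretely, I will set $L=\lfloor\frac{4}{3}f(td)\rfloor$ and thresholds $a_0 = 1$, $a_k = e^{3k/4}$ for $k = 1,\dots,L$, so that $a_L \le e^{f(td)}$. For any real $x\ge 0$, a case analysis on the interval containing $x$ yields the deterministic staircase inequality $x \ge \mathbf{1}[x\ge 1] + \sum_{k=1}^L (a_k - a_{k-1})\mathbf{1}[x\ge a_k]$; applying it with $x = |\rvC_i|$ and summing over $i$ gives $\sum_i |\rvC_i| \ge m_0 + \sum_{k=1}^L (a_k - a_{k-1}) m_k$. Each $m_k$ is a sum of independent Bernoullis with mean $\Exp[m_k] = m\cdot\Prob[|\rvC|\ge a_k]$; for $k\ge 1$ the elementary bound $\arctan(y)\ge y - y^3/3$ on $[0,1]$ yields $\Exp[m_k]\ge \frac{4m}{3\pi a_k}$, while $\Exp[m_0] = m/2$. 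I then apply Lemma~\ref{lemma:chernoff} to each $m_k$ with a fixed deviation $\delta\in(0,1)$ to conclude $m_k\ge (1-\delta)\Exp[m_k]$ simultaneously for all $k$.

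The crucial observation after substituting these bounds is that the per-level product $(a_k-a_{k-1})\Prob[|\rvC|\ge a_k]$ is essentially the scale-invariant constant $\frac{2(1-e^{-3/4})}{\pi}\approx 0.336$, so the sum over $k = 1,\ldots, L$ telescopes to approximately $0.336\,mL$. Taking $\delta$ small enough that $(1-\delta)/2 \ge 1/4$ and $(1-\delta)\cdot 0.336 \ge \frac{3}{16}$ (both satisfied by, e.g., $\delta = 0.4$) then gives $\sum_i|\rvC_i|\ge \frac{m}{4} + \frac{3mL}{16} = \frac{m}{4}(1+\frac{3L}{4})$, and $L = \Theta(f(td))$ delivers the $\Omega(m\cdot f(td))$ asymptotic. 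The main obstacle is balancing the Chernoff deviation $\delta$ against the cost of union-bounding over the $L+1$ level events: the deepest bucket has the smallest expectation $\Exp[m_L] = \Theta(m/e^{f(td)})$, so its Chernoff failure probability is $\exp(-\Theta(\delta^2 m/e^{f(td)}))$. The exponent $\frac{4}{3}$ in the hypothesis $m\ge 14cd\ln(td)\cdot e^{(4/3)f(td)}$ is calibrated precisely so that $m/e^{f(td)}\ge 14cd\ln(td)\cdot e^{f(td)/3}$, providing the extra $e^{f(td)/3}$ slack which makes the union bound over all $L+1 = O(f(td))$ level events yield total failure probability at most $2e^{-cd\ln(td)}$, as required.
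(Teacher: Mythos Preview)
Your proposal is essentially the same argument as the paper's: both lower bound $\sum_i|\rvC_i|$ by a telescoping sum over geometrically spaced threshold counts, apply Chernoff to each count, and union bound (the paper uses thresholds $e^\alpha$ for integer $\alpha\in\{0,\dots,\ell\}$ with $\ell=\lfloor\tfrac{4}{3}f(td)\rfloor$ and Chernoff deviation $t=\tfrac12$, whereas you use spacing $e^{3/4}$ and $\delta=0.4$). One bookkeeping caveat: your claimed per-level constant $\tfrac{2(1-e^{-3/4})}{\pi}\approx 0.336$ uses the asymptotic tail $\tfrac{2}{\pi a_k}$ rather than your stated rigorous lower bound $\tfrac{4}{3\pi a_k}$ (which would give only $\approx 0.224$, too small for $\delta=0.4$ to yield $\tfrac{3}{16}$), and similarly the worst-level Chernoff exponent with $\delta=0.4$ falls slightly short of $cd\ln(td)$ when $f$ is small; adopting the paper's sharper tail bound $P_\alpha\ge 0.607/a$ and its threshold spacing $e$ (so that the top threshold reaches $e^{4f/3}$ rather than $e^{f}$) makes the constants line up with the factor $14$ in the hypothesis.
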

  \begin{proof}
    Let \math{z_i=1} if \math{|\rvC_i|\ge e^\alpha} and \math{z_i=0} otherwise,
    where \math{\alpha\in\{0,1,2,\ldots\}}. Let \math{N_\alpha} be the number
    of \math{|C_i|} which are at least \math{e^\alpha},
    \math{N_\alpha=|\{C_i|C_i\ge e^\alpha\}|}.
    we have that, for all \math{\ell\in\{0,1,2,\ldots\}},
    \mld{
      \sum_{i\in[n]}|C_i|\ge N_0+\sum_{\alpha=1}^\ell N_\alpha(e^{\alpha}-e^{\alpha-1})
      = N_0+0.63\sum_{\alpha=1}^\ell N_\alpha e^{\alpha}.
      \label{eq:lower-cauchy-1}
    }
    To see this, observe that
    \eqan{
      \sum_{i\in[n]}|C_i|=\sum_{0\le |C_i|< 1}|C_i|+\sum_{\alpha\ge 0}\quad\sum_{e^{\alpha}\le
        |C_i|< e^{\alpha+1}}|C_i|.
      &=&
\sum_{\alpha=0}^{\ell}\quad\sum_{e^{\alpha}\le
  |C_i|< e^{\alpha+1}}|C_i|+\sum_{e^{\ell+1}\le
  |C_i|}|C_i|
      \\
      &\ge&
      \sum_{\alpha=0}^{\ell}(N_{\alpha}-N_{\alpha+1})e^\alpha+N_{\ell+1}e^{\ell+1}.
      \\
      &=&
      N_0+\sum_{\alpha=1}^\ell N_\alpha(e^{\alpha}-e^{\alpha-1}).
    }   
    By Lemma~\ref{lemma:lower-cauchy},
    with probability at least \math{1-e^{-m/8}}, \math{N_0\ge m/4}.
    Let \math{P_\alpha=\Prob[|C_i|\ge e^{\alpha}]}. Then,
    for \math{\alpha\ge 1},
    \math{
      P_\alpha=1-\frac{2}{\pi}\tan^{-1}(e^{\alpha})
    \ge 0.607 e^{-\alpha}.}
    Therefore, for  \math{\alpha\ge 1},
    using the Chernoff bound in Lemma~\ref{lemma:chernoff},
    \math{\Prob[N_\alpha\le mP_\alpha/2]\le \exp(-mP_\alpha/8)
      \le\exp(-me^{-\alpha}/14)}. Using a union bound for
    \math{\alpha\in\{0,\ldots,\ell\}},
    with probability at least \math{1-e^{-m/8}-\sum_{\alpha\in[\ell]}e^{-me^{-\alpha}/14}},
    \mld{N_0\ge m/4
      \qquad\text{and}\qquad
      N_\alpha\ge mP_\alpha/2\ge 0.304\cdot me^{-\alpha}
      \qquad\text{for \math{\alpha\in[\ell]}}.}
    Using~\r{eq:lower-cauchy-1}, with this same probability,
    \mld{
      \sum_{i\in[m]}|C_i|\ge \frac{m}{4}+0.1915 \ell\ge  \frac{m}{4}
      \left(1+\frac34 \ell\right).
      \label{proof:lemma-tail-2}
    }
    Then the largest summand in the failure probability
    is \math{\exp(-me^{-\ell}/14)\le\exp(-c d\ln(td))} for
    \math{\ell\le\frac43 f(td)}. We therefore pick \math{\ell=\floor{\frac43 f(td)}}. 
    All other
    summands in the failure probability
    for \math{\alpha=\ell-1,\ell-2,\ldots}  are decaying
    at least geometrically with
    ratio at most \math{\frac12} for \math{c\ge1, d\ge2}, so the full sum is at most twice the
    largest term. Using \r{proof:lemma-tail-2},
    with probability at least
    \math{1-2\exp(-c d\ln (td))},
    \mld{\sum_{i\in[m]}|C_i|\ge\frac{1}{4}m
      \left(1+\frac34\floor{\frac43 f(td)}\right).}
  \end{proof}

\subsection{Proof of Theorem~\ref{theorem:L1-subspace}: \math{\ell_1}-Embedding}
Fix a constant \math{t\ge 4} which will control failure probabilities in the
algorithms, and define
the embedding of \math{\matA} to \math{\Atilde\in\R^{d\times r}} by
\mld{
  \Atilde=
  \left[
  \begin{matrix}
    \sqrt{d}\ln(td)\cdot\Atilde_1\\
    \Atilde_2
  \end{matrix}
  \right]
  =
  \left[
    \begin{matrix}
    \sqrt{d}\ln (td)\cdot(\matA\Pi_1\Pi_1\transp\matA)^{1/2}\\
    \Pi_2\transp\matA
    \end{matrix}
    \right],
  \label{eq:embedding-def}
}
where \math{\Atilde_1} is our black-box \math{\ell_2}-subspace-embedding (any
\math{\ell_2}-subspace-embedding will do) and \math{\Pi_2\transp\in\R^{r\times n}}
is
a random
``spreading'' operator. For the rest of this section, we condition on the
event:
\mld{\text{for all \math{\xx\in\R^{d}},}\qquad
  {\textstyle\frac12}\norm{\matA\xx}_2
  \le
  \norm{\Atilde_1\xx}_2
  \le
  {\textstyle\frac32}\norm{\matA\xx}_2,
}
as can be arranged with probability at least \math{1-1/t} by
suitable choice of a distribution for \math{\Pi_1}.
Let  \math{\yy\in\R^{n}} be any vector that is
\emph{independent} of \math{\Pi_2}.
The spreading operator
\math{\Pi_2} is  drawn from a distribution 
that satisfies.
\mld{
  \Pi_2\transp\yy
  \sim
  \left[
    \begin{matrix}
      \gamma_1 \rvX_1\\
      \gamma_2 \rvX_2\\
      \vdots\\
      \gamma_r \rvX_r
    \end{matrix}
    \right],
  \label{eq:spread1}
  }
where \math{\gamma_j=\sum_{i\in S_j}|y_i|}, \math{\rvX_j} are standard
Cauchy random variables (for \math{\ell_p}-regression, replace with
\math{p}-stable random variables, for \math{1\le p< 2}), and
\math{S_1,\ldots,S_r} are a random partition of
\math{[n]} obtained by independently hashing each
\math{i\in[n]} randomly into one of the sets \math{S_1,\ldots,S_r}.
Observe that \math{\norm{\yy}_1} is preserved in \math{\Pi_2\transp\yy} through
the \math{\gamma}s because \math{\sum_{j}\gamma_j=\sum_{j}\sum_{i\in S_j}|y_i|
  =\sum_{i\in [n]}|y_i|=\norm{\yy}_1}.
One way to realize \math{\Pi} is using
\math{n} independent standard Cauchy random variables,
\math{\rvC_1,\ldots,\rvC_n}:
\mld{
  \Pi_2\transp\yy
  \sim
  \left[
    \begin{matrix}
      \sum_{i\in S_1} y_i\rvC_i\\
      \sum_{i\in S_2} y_i\rvC_i\\
      \vdots\\
      \sum_{i\in S_r} y_i\rvC_i
    \end{matrix}
    \right],\label{eq:spread2}
}
where \math{\Pi_2\transp} is a product of a sparse matrix and a diagonal matrix
of Cauchy's, realized by the
transform proposed in
\cite[page 21, \math{\Pi_2\transp=\phi D}]{WW2018}.
Using standard properties of the Cauchy distribution, one can verify that
\r{eq:spread2} satisfies \r{eq:spread1}. One can also directly implement
\r{eq:spread1} using just \math{r} independent Cauchy's. To apply
\math{\Pi_2} to a matrix \math{\matA}, we apply it to each
column of \math{\matA}. Using the transform in \cite{WW2018}
which gives \r{eq:spread2},
\math{\Pi_2\transp\matA} has dependent columns.

An Auerbach basis for
  \math{\matA} \cite{A1930}, is a basis \math{\matU}
  for the columns in \math{\matA} which satisfies:
  \mld{
  \norm{\matU}_1\le d,
  \qquad
  \forall\xx\in\R^{d},
  \norm{\matU\xx}_1\ge\norm{\xx}_\infty,
  \qquad
  \text{and}
  \qquad
  \matA=\matU\matR.
  }
An Auerbach basis for
\math{\matA} will be useful in proving that \math{\Atilde} does not dilate
the vectors too much in \math{1}-norm. We do not explicitly construct such a
basis, we just need that one exists.
\begin{lemma}\label{lemma:upper} Let \math{\matU} be a basis for the
  range of \math{\matA}.
  For all \math{\xx\in\R^{d}}
  \mld{
    \norm{\Atilde\xx}_1\le({\textstyle\frac32}d\ln(td)+\norm{\Pi_2\transp\matU}_1)
    \cdot
    \norm{\matA\xx}_1.
    }
\end{lemma}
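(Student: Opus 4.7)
The plan is to split $\norm{\Atilde\xx}_1$ along the block structure of $\Atilde$. By the definition of $\Atilde$ in \r{eq:embedding-def}, the $\ell_1$-norm of a stacked vector is the sum of the $\ell_1$-norms of its blocks, so
\mandc{
  \norm{\Atilde\xx}_1 = \sqrt{d}\ln(td)\cdot\norm{\Atilde_1\xx}_1 + \norm{\Pi_2\transp\matA\xx}_1.
}
I will bound each block separately by a scalar multiple of $\norm{\matA\xx}_1$, and then add.

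For the first block, I exploit the fact that $\Atilde_1\xx\in\R^{d}$ so that $\norm{\Atilde_1\xx}_1\le\sqrt{d}\,\norm{\Atilde_1\xx}_2$. Combining this with the conditioning $\norm{\Atilde_1\xx}_2\le\frac32\norm{\matA\xx}_2$ and the trivial $\norm{\matA\xx}_2\le\norm{\matA\xx}_1$ (since $\matA\xx\in\R^n$), the first block contributes at most $\frac32 d\ln(td)\cdot\norm{\matA\xx}_1$. This part is routine and uses nothing probabilistic beyond the event on $\Pi_1$ that was already conditioned on above.

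For the second block, factor $\matA=\matU\matR$ through the given basis and set $\yy=\matR\xx$, so $\matA\xx=\matU\yy$ and $\Pi_2\transp\matA\xx=(\Pi_2\transp\matU)\yy$. A one-line entrywise Hölder gives the coordinate-block bound
\mandc{
  \norm{(\Pi_2\transp\matU)\yy}_1 = \Big\|\sum_{j} y_j(\Pi_2\transp\matU)^{(j)}\Big\|_1 \le \norm{\yy}_\infty\sum_j\norm{(\Pi_2\transp\matU)^{(j)}}_1 = \norm{\yy}_\infty\cdot\norm{\Pi_2\transp\matU}_1.
}
To close, I invoke the Auerbach property recorded just before the lemma, namely $\norm{\matU\yy}_1\ge\norm{\yy}_\infty$, so that $\norm{\yy}_\infty\le\norm{\matU\yy}_1=\norm{\matA\xx}_1$; this is where the lemma is implicitly read with $\matU$ chosen to be the Auerbach basis whose existence was asserted. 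Summing the two block bounds yields the claim.

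The argument is essentially mechanical; the only minor subtlety is the second block, where one must realize that the natural object to control is the coefficient vector $\yy$ in the expansion $\matA\xx=\matU\yy$ (hence $\norm{\yy}_\infty$) rather than $\xx$ itself, so that the Auerbach lower bound is the right tool. No randomness enters here: the distributional properties of $\Pi_2$ are deferred to the companion lower-bound argument that controls $\norm{\Pi_2\transp\matU}_1$.
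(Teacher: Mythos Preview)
Your proof is correct and follows essentially the same route as the paper: split along the two blocks of $\Atilde$, handle $\Atilde_1$ via $\ell_1\!\le\!\sqrt{d}\,\ell_2$ plus the $\ell_2$-embedding guarantee, and handle $\Pi_2\transp\matA\xx$ by writing $\matA=\matU\matR$, bounding $\norm{(\Pi_2\transp\matU)\matR\xx}_1\le\norm{\Pi_2\transp\matU}_1\norm{\matR\xx}_\infty$, and closing with the Auerbach inequality $\norm{\matR\xx}_\infty\le\norm{\matU\matR\xx}_1=\norm{\matA\xx}_1$. Your remark that $\matU$ must be the Auerbach basis (not an arbitrary basis) for this last step is exactly right and matches the paper's implicit use.
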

\begin{proof}
  \math{\norm{\Atilde\xx}_1=\sqrt{d}\ln (td)\norm{\Atilde_1\xx}_1+\norm{\Atilde_2\xx}_1}.
  We bound each term separately.
  \eqar{
    \norm{\Atilde_1\xx}_1
    &\le&
    \sqrt{d}\norm{\Atilde_1\xx}_2\nonumber\\
    &\le&
    {\textstyle\frac32}\sqrt{d}\norm{\matA\xx}_2\nonumber\\
    &\le&
    {\textstyle\frac32}\sqrt{d}\norm{\matA\xx}_1.\label{eq:upper1}
    \\[10pt]
    \norm{\Atilde_2\xx}_1
    &=&
    \norm{\Pi_2\transp\matA\xx}_1\nonumber\\
    &=&
    \norm{\Pi_2\transp\matU\matR\xx}_1\nonumber\\
    &\le&
    \norm{\Pi_2\transp\matU}_1\norm{\matR\xx}_\infty\nonumber\\
    &\le&
    \norm{\Pi_2\transp\matU}_1\norm{\matU\matR\xx}_1\nonumber\\
    &=&
    \norm{\Pi_2\transp\matU}_1\norm{\matA\xx}_1.\label{eq:upper2}
  }
Combining \r{eq:upper1} and \r{eq:upper2} proves the lemma.
\end{proof}

We now analyze properties of \math{\Pi_2} which are useful for constructing an
approximation to \math{\matU}.
\begin{lemma}\label{lemma:upper-Pi2U}
  For \math{t\ge 2} and \math{rd>16},
  with probability at least \math{1-1/t},
  \math{\norm{\Pi_2\transp\matU}_1\le t\ln(trd)\norm{\matU}_1}.
\end{lemma}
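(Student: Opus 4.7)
The plan is to combine a truncation argument with a Markov/union-bound decomposition, in order to get around the fact that the Cauchy distribution has no finite first moment. First I would use the distributional representation in~\r{eq:spread1}: conditional on the random partition \math{S_1,\ldots,S_r}, each entry of \math{\Pi_2\transp\matU} has the marginal distribution
\mldc{[\Pi_2\transp\matU]_{j,k}\stackrel{d}{=}\gamma_j^{(k)}\rvC_{j,k},}
where \math{\gamma_j^{(k)}=\sum_{i\in S_j}|u_i^{(k)}|} and each \math{\rvC_{j,k}} is a standard Cauchy (though they are dependent across columns, since the underlying \math{\rvC_i}'s in \r{eq:spread2} are shared). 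The key telescoping fact that makes this useful is \math{\sum_{j,k}\gamma_j^{(k)}=\sum_k\norm{\uu^{(k)}}_1=\norm{\matU}_1}.

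Next I would fix a truncation level \math{M>0} to be chosen and split the analysis via the event \math{\cl F=\{\max_{j,k}|\rvC_{j,k}|>M\}}. The Cauchy tail \math{\Prob[|\rvC|>M]\le 2/(\pi M)} together with a union bound over the \math{rd} entries gives \math{\Prob[\cl F]\le 2rd/(\pi M)}. On the complementary event \math{\cl F^{c}}, each truncated variable \math{Z_{j,k}=\min(|[\Pi_2\transp\matU]_{j,k}|,M\gamma_j^{(k)})} equals the original \math{|[\Pi_2\transp\matU]_{j,k}|}, so \math{\norm{\Pi_2\transp\matU}_1=\sum_{j,k}Z_{j,k}}. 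The expectation of each truncated Cauchy is finite and logarithmic in \math{M},
\mldc{\Exp[\min(|\rvC|,M)]\le 1+\tfrac{2}{\pi}\ln M\qquad(M\ge 1),}
which after linearity (the dependence across columns is irrelevant for expectations) gives \math{\Exp[\sum_{j,k}Z_{j,k}]\le(1+\tfrac{2}{\pi}\ln M)\,\norm{\matU}_1}. Markov's inequality then yields \math{\Prob[\sum Z_{j,k}>2t(1+\tfrac{2}{\pi}\ln M)\norm{\matU}_1]\le 1/(2t)}.

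To finish, I would choose \math{M=4trd/\pi} so that \math{\Prob[\cl F]\le 1/(2t)}, and a union bound over \math{\cl F} and the Markov failure event produces
\mldc{\norm{\Pi_2\transp\matU}_1\le 2t\bigl(1+\tfrac{2}{\pi}\ln(4trd/\pi)\bigr)\,\norm{\matU}_1}
with probability at least \math{1-1/t}. Using the hypothesis \math{t\ge 2} and \math{rd>16} (so that \math{\ln(trd)} is bounded below by an absolute constant large enough to absorb the additive \math{1} into the logarithmic term), this simplifies into the claimed bound \math{t\ln(trd)\,\norm{\matU}_1}.

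The main obstacle is the infinite first moment of the Cauchy: a naive Markov or Chebyshev bound fails outright, and any attempt to bound the maximum entry alone gives a \math{\text{poly}(rd)} factor rather than \math{\ln(rd)}. The truncation trick is exactly what converts a polynomial-in-\math{rd} tail into the logarithmic bound; the rest is bookkeeping of constants, checking that the choices of \math{M} and of the two halves of the failure budget line up so the final multiplier is absorbed into \math{t\ln(trd)} under the hypotheses \math{t\ge 2} and \math{rd>16}.
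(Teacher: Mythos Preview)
Your approach coincides with the paper's: the paper simply invokes Lemma~3 of~\cite{malik209}, a tail bound for weighted sums of (possibly dependent) absolute Cauchys, and that lemma is proved by precisely the truncation-plus-Markov argument you describe. So you have essentially re-derived the cited black box rather than taken a different route.

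There is, however, a constant-tracking slip in your last step. With \math{M=4trd/\pi} your bound reads
\mandc{2t\Bigl(1+\tfrac{2}{\pi}\ln(4trd/\pi)\Bigr)\;=\;\tfrac{4}{\pi}\,t\ln(trd)+O(t),}
and since \math{4/\pi\approx 1.27>1}, the multiplicative coefficient on \math{\ln(trd)} already exceeds \math{1} and cannot be absorbed no matter how large \math{\ln(trd)} is; the obstruction is the factor \math{4/\pi}, not the additive \math{1} you mention. You therefore obtain \math{C\,t\ln(trd)\norm{\matU}_1} for an absolute constant \math{C>1}, which suffices for the downstream dilation bound in Theorem~\ref{theorem:dilation}, but not the exact constant stated in the lemma. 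Matching the stated constant requires carrying the \math{1/\pi} from the Cauchy density through the tail estimate (as the cited lemma does) rather than using a bare Markov step with factor-of-two slack on both halves of the failure budget.
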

\begin{proof} By the spreading property of \math{\Pi_2},
  \mld{
    \Pi_2\transp\matU=
    \left[
      \begin{matrix}
        \Pi_2\matU^{(1)}&\Pi_2\matU^{(1)}&\cdots&\Pi_2\matU^{(d)}
      \end{matrix}
      \right]
    \sim
    \left[
    \begin{matrix}
      \gamma_{11}\rvX_{11}&\gamma_{12}\rvX_{12}&\cdots&\gamma_{1d}\rvX_{1d}\\
      \gamma_{21}\rvX_{21}&\gamma_{22}\rvX_{22}&\cdots&\gamma_{2d}\rvX_{2d}\\
      \vdots&\vdots&&\vdots\\
      \gamma_{r1}\rvX_{r1}&\gamma_{r2}\rvX_{r2}&\cdots&\gamma_{rd}\rvX_{rd}\\
    \end{matrix}
    \right],
  }
    where
    \math{\rvX_{1j},\rvX_{2j},\ldots,\rvX_{rj}} are independent
    Cauchy random variables, \math{\rvX_{ij}} and \math{\rvX_{ik}}
    are dependent, and
    \math{\sum_{i=1}^r\gamma_{ij}=\norm{U^{(j)}}_1}. Therefore,
    \mld{\norm{\Pi_2\transp\matU}_1=
      \sum_{i\in[r],j\in[d]}\gamma_{ij}|\rvX_{ij}|}
    is a weighted sum of the sizes of \math{rd} Cauchy's
    (not-necessarily independent) with sum of weights equal to
    \math{\norm{\matU}_1}.
    Lemma~3 of \cite{malik209} gives a bound for the upper tail of
    a sum of Cauchy's \math{\rvX_i}:
    \mld{
      \Prob\left[\sum_{i\in S}|\gamma_i\rvX_i|\ge t\sum_{i\in S}|\gamma_i|\right]\le
      \frac{1}{\pi t}\left(\frac{\ln(1+(2t|S|)^2)}{1-1/(\pi t)}+1\right)
      \ {\buildrel (a)\over\le}\ 
      \frac{\ln( t |S|)}{t},
    }
    where the last inequality in (a) holds for \math{t\ge 2} and \math{|S|>16}.
    In our case,
    \math{\sum_{i\in S}|\gamma_i|=\norm{U}_1} and
    \math{|S|=rd}. Since \math{r\ge d}, so, for \math{rd>16},
    \mld{
      \Prob[\norm{\Pi_2\transp\matU}_1\ge t\ln(trd)\norm{\matU}_1]
      \le
      \frac1{t}.
      }    
\end{proof}
Let \math{rd>16}.
Combining Lemma~\ref{lemma:upper} with Lemma~\ref{lemma:upper-Pi2U} and
using \math{\norm{\matU}_1\le d}, we have:
\begin{theorem}[Bounded Dilation]\label{theorem:dilation}
 Let \math{t\ge 2} and \math{rd>16}. With probability at least \math{1-1/t}, 
for all \math{\xx\in\R^d},
 \mld{
\norm{\Atilde\xx}_1\le({\textstyle\frac32}d\ln (td)+td\ln(trd))
    \cdot
    \norm{\matA\xx}_1.
}
\end{theorem}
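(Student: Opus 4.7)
The plan is to combine the deterministic dilation bound from Lemma~\ref{lemma:upper} with the high-probability tail bound on \math{\norm{\Pi_2\transp\matU}_1} from Lemma~\ref{lemma:upper-Pi2U}, specialized to an Auerbach basis. Recall that throughout this section we are already conditioning on the event that \math{\Atilde_1} is a \math{\tfrac12}-isometry for \math{\matA}, which is exactly what Lemma~\ref{lemma:upper} uses; so that lemma is available deterministically under the conditioning, for \emph{any} choice of basis \math{\matU} for the range of \math{\matA}.

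The first step is to fix \math{\matU} to be an Auerbach basis for the column space of \math{\matA}, which exists by the classical Auerbach theorem~\cite{A1930} and satisfies \math{\norm{\matU}_1 \le d}. This basis is used only in the analysis -- no explicit construction is required, and since \math{\matA} is fixed, \math{\matU} is also a fixed (deterministic) matrix, independent of the randomness in \math{\Pi_2}. Applying Lemma~\ref{lemma:upper} with this particular \math{\matU} yields, uniformly in \math{\xx\in\R^d},
\mld{
\norm{\Atilde\xx}_1 \le \bigl(\tfrac{3}{2}d\ln(td) + \norm{\Pi_2\transp\matU}_1\bigr)\cdot\norm{\matA\xx}_1.
}

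The second step is to bound the random quantity \math{\norm{\Pi_2\transp\matU}_1}. Because \math{t\ge 2} and \math{rd>16}, Lemma~\ref{lemma:upper-Pi2U} applies to the fixed matrix \math{\matU} and gives, with probability at least \math{1-1/t},
\mld{
\norm{\Pi_2\transp\matU}_1 \le t\ln(trd)\,\norm{\matU}_1 \le t d\ln(trd),
}
where the second inequality substitutes the Auerbach bound \math{\norm{\matU}_1\le d}. Plugging this estimate into the previous display, and noting that the inequality holds simultaneously for every \math{\xx\in\R^d} on the single probabilistic event of Lemma~\ref{lemma:upper-Pi2U}, produces the claimed dilation bound.

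There is essentially no obstacle to overcome: the theorem is a straightforward corollary once the previous two lemmas are in hand. The only subtlety worth flagging is that the Auerbach basis is chosen independently of the randomness of \math{\Pi_2} (it depends only on the fixed matrix \math{\matA}), so the probabilistic guarantee of Lemma~\ref{lemma:upper-Pi2U} genuinely applies to \math{\matU}, and the choice of basis does not inflate the failure probability beyond \math{1/t}.
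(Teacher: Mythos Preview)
Your proposal is correct and follows exactly the same route as the paper: combine Lemma~\ref{lemma:upper} with Lemma~\ref{lemma:upper-Pi2U}, specialize to an Auerbach basis so that \math{\norm{\matU}_1\le d}, and substitute. The paper states the theorem as an immediate consequence of these two lemmas, just as you do.
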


We now show that \math{\norm{\Atilde\xx}_1} is not much smaller than
\math{\norm{\matA\xx}_1} for every \math{\xx\in\R^d}. The proof breaks down
into two cases,
\math{\norm{\matA\xx}_2\ge\norm{\matA\xx}_1/\beta}
(the sparse case, see Lemma~\ref{lemma:sparse}) and
\math{\norm{\matA\xx}_2<\norm{\matA\xx}_1/\beta} (the dense case, see
Lemma~\ref{lemma:dense-y}). The sparsity is controlled by the parameter
\math{\beta}, which
we will fix later.
\begin{lemma}\label{lemma:sparse}
  If \math{\norm{\matA\xx}_2\ge\norm{\matA\xx}_1/\beta}, then
  \mld{\norm{\Atilde\xx}_1\ge
    \frac{\sqrt{d}\ln (td)}{2\beta}\norm{\matA\xx}_1.}
\end{lemma}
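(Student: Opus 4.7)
The proof should be essentially a short chain of four inequalities, exploiting only the $\ell_2$ half of the embedding. The key observation is that the sparse case is precisely when $\norm{\matA\xx}_2$ is comparable to $\norm{\matA\xx}_1$, and on such vectors the $\ell_2$-embedding $\Atilde_1$ already controls the $\ell_1$ norm, so the Cauchy spreading block $\Atilde_2$ is not needed at all.

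Concretely, the plan is to start by dropping the nonnegative $\norm{\Atilde_2\xx}_1$ contribution:
\begin{equation*}
\norm{\Atilde\xx}_1 \;=\; \sqrt{d}\ln(td)\,\norm{\Atilde_1\xx}_1 + \norm{\Atilde_2\xx}_1 \;\ge\; \sqrt{d}\ln(td)\,\norm{\Atilde_1\xx}_1.
\end{equation*}
Next, use the elementary fact that for any vector $\vv \in \R^d$ one has $\norm{\vv}_1 \ge \norm{\vv}_2$, applied to $\vv = \Atilde_1\xx \in \R^d$ (this is where the fixed-dimension-$d$ property of our $\ell_2$-embedding is crucial: $\Atilde_1\xx$ lives in $\R^d$, not $\R^r$, so no $\sqrt{d}$ loss is incurred going from $\ell_2$ to $\ell_1$). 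Then invoke the conditioning assumption $\norm{\Atilde_1\xx}_2 \ge \tfrac12 \norm{\matA\xx}_2$ already in force for this section, followed by the sparse-case hypothesis $\norm{\matA\xx}_2 \ge \norm{\matA\xx}_1/\beta$.

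Chaining these four inequalities gives
\begin{equation*}
\norm{\Atilde\xx}_1 \;\ge\; \sqrt{d}\ln(td)\,\norm{\Atilde_1\xx}_2 \;\ge\; \tfrac{1}{2}\sqrt{d}\ln(td)\,\norm{\matA\xx}_2 \;\ge\; \frac{\sqrt{d}\ln(td)}{2\beta}\,\norm{\matA\xx}_1,
\end{equation*}
which is the claim. There is no real obstacle here: the lemma is the easy half of the lower-bound argument, and the whole point of splitting into sparse/dense cases is precisely that the sparse case follows trivially from the $\ell_2$-embedding. The nontrivial work is deferred to Lemma~\ref{lemma:dense-y}, where the Cauchy spreading operator $\Pi_2$ must do the heavy lifting via the negative-dependence and Cauchy lower-tail bounds established in Lemmas~\ref{lemma:negative}--\ref{lemma:lower-cauchy-2}.
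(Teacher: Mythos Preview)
Your proposal is correct and matches the paper's proof essentially step for step: drop the $\Atilde_2$ block, pass from $\ell_1$ to $\ell_2$ on $\Atilde_1\xx\in\R^d$, invoke the conditioned $\ell_2$-embedding bound $\norm{\Atilde_1\xx}_2\ge\tfrac12\norm{\matA\xx}_2$, then apply the sparse-case hypothesis. Your added remark about why the fixed embedding dimension $d$ matters (no $\sqrt{d}$ loss in the $\ell_1$-to-$\ell_2$ step) is a nice clarification that the paper leaves implicit.
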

\begin{proof}
  \math{\norm{\Atilde\xx}_1=\sqrt{d}\ln d\norm{\Atilde_1\xx}_1+\norm{\Atilde_2\xx}_1\ge \sqrt{d}\ln d\norm{\Atilde_1\xx}_1\ge
    \sqrt{d}\ln d\norm{\Atilde_1\xx}_2\ge{\textstyle\frac12}\sqrt{d}\ln d
    \norm{\matA\xx}_2}. The lemma now follows from the assumed condition
  on \math{\norm{\matA\xx}_2}.
\end{proof}
If a vector is dense, we show that \math{\Pi_2} does not shrink its
\math{\ell_1}-norm
too much.
\begin{lemma}\label{lemma:dense-y}
  Let \math{\yy=\matA\xx} and
  suppose \math{\norm{\yy}_2<\norm{\yy}_1/\beta}.
  If \math{r\ge (14 cd\ln(td))\cdot e^{-\frac43 f(td)}}, then 
  \mld{\Prob\left[\norm{\Pi_2\transp\yy}\ge \frac{1+\frac34\floor{\frac43 f(td)}}{16}\norm{\yy}\right]\ge
    \textstyle
    1-2\exp(-\frac12 cd\ln(td))-\exp\left(-\frac{1}{16}\beta^2+\frac{1}{2}r(1+\ln 2)\right)
    \label{eq:lem-dense-2}
  }
\end{lemma}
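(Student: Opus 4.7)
The plan is to decompose $\norm{\Pi_2\transp\yy}_1 = \sum_{j=1}^r \gamma_j |\rvX_j|$, restrict attention to the bins $j$ for which the weight $\gamma_j$ is not too small, and apply the independent-Cauchy lower tail bound on this subset via Lemma~\ref{lemma:lower-cauchy-2}. The density hypothesis $\norm{\yy}_2 < \norm{\yy}_1/\beta$ is exactly what the negative dependence lemma (Lemma~\ref{lemma:negative}) needs in order to show that few bins can be starved of mass; once that is arranged, the $r/2$-or-more surviving bins produce enough independent Cauchy randomness to guarantee the stated lower bound on $\norm{\Pi_2\transp\yy}_1$.

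First I would set $L = \{j \in [r] : \gamma_j > \norm{\yy}_1/(2r)\}$ and apply part~3 of Lemma~\ref{lemma:negative} with $k=2$ to the event $\cl{E} = \{|L| < r/2\}$ that at least half the bins are starved. This yields
\begin{equation*}
\Prob[\cl{E}] \le \exp\bigl(-\tfrac{\beta^2}{16} + \tfrac{r}{2}(1 + \ln 2)\bigr),
\end{equation*}
which accounts for the second failure term in the statement. On $\cl{E}^c$, we have $|L| \ge r/2$ and each $j \in L$ contributes a weight $\gamma_j \ge \norm{\yy}_1/(2r)$.

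Next, by the construction in \r{eq:spread1} the Cauchys $\rvX_1,\ldots,\rvX_r$ are drawn as an independent external source of randomness, so conditional on the partition $S_1,\ldots,S_r$ (and hence on the random set $L$), the variables $\{\rvX_j : j \in L\}$ still form an independent collection of $|L|$ standard Cauchys. Because $|L| \ge r/2$ and the hypothesis on $r$ is precisely what is needed for $|L|$ to meet the size requirement of Lemma~\ref{lemma:lower-cauchy-2}, that lemma gives
\begin{equation*}
\sum_{j \in L} |\rvX_j| \;\ge\; \tfrac{|L|}{4}\bigl(1 + \tfrac{3}{4}\lfloor \tfrac{4}{3} f(td)\rfloor\bigr) \;\ge\; \tfrac{r}{8}\bigl(1 + \tfrac{3}{4}\lfloor \tfrac{4}{3} f(td)\rfloor\bigr)
\end{equation*}
with conditional failure probability at most $2\exp(-\tfrac12 cd\ln(td))$, matching the first failure term. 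Combining with $\gamma_j \ge \norm{\yy}_1/(2r)$ for $j \in L$,
\begin{equation*}
\norm{\Pi_2\transp\yy}_1 \;\ge\; \sum_{j \in L}\gamma_j|\rvX_j| \;\ge\; \tfrac{\norm{\yy}_1}{2r}\cdot\tfrac{r}{8}\bigl(1 + \tfrac{3}{4}\lfloor \tfrac{4}{3} f(td)\rfloor\bigr) = \tfrac{1+\tfrac34 \lfloor \tfrac43 f(td)\rfloor}{16}\,\norm{\yy}_1,
\end{equation*}
and a union bound over $\cl{E}$ and the Cauchy-tail failure completes the proof.

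The main delicacy is the joint conditioning step: after selecting the random subset $L$, which is a function of both $\yy$ and the partition, the Cauchys indexed by $L$ must still be an independent collection of standard Cauchys. This is precisely why \r{eq:spread1} isolates the $\rvX_j$ as a fresh independent source rather than entangling them with the partition; under that decomposition, Lemma~\ref{lemma:lower-cauchy-2} applies under any conditioning that fixes $|L|\ge r/2$, and the resulting bound can be integrated over the marginal of $L$. The only bookkeeping is to line up the stated size hypothesis on $r$ with the requirement $m \ge 14cd\ln(td)\cdot e^{(4/3)f(td)}$ of Lemma~\ref{lemma:lower-cauchy-2} applied with $m = |L| \ge r/2$; up to absolute constants these match.
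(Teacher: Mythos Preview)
Your proposal is correct and follows essentially the same approach as the paper's proof: both condition on the event that at least \math{r/2} bins receive mass \math{\ge\norm{\yy}_1/(2r)} via part~3 of Lemma~\ref{lemma:negative} with \math{k=2}, then invoke Lemma~\ref{lemma:lower-cauchy-2} on the surviving independent Cauchys, and finish with the conditioning lemma (Lemma~\ref{lemma:conditional}). Your remark that the size hypotheses match ``up to absolute constants'' can be sharpened to an exact match: the paper applies Lemma~\ref{lemma:lower-cauchy-2} with \math{c} replaced by \math{c/2}, which is why the failure term is \math{2\exp(-\tfrac12 cd\ln(td))} and why \math{r/2\ge 14\cdot(c/2)d\ln(td)\,e^{(4/3)f(td)}} is precisely the assumed bound on~\math{r}.
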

\begin{proof}
By homogeneity, we may assume \math{\norm{\yy}_1=1} and hence
    \math{\norm{\yy}_2<1/\beta}. By the spreading property of
    \math{\Pi_2\transp},
    \math{\norm{\Pi_2\transp\yy}_1\sim\sum_{j\in[r]}\gamma_j|\rvC_j|}, where
    \math{\rvC_j} are independent Cauchys and the \math{\gamma_j}
    (which are independent of the \math{\rvC_j})  satisfy the
    assumptions of the negative independence lemma, Lemma~\ref{lemma:negative}.
    Let \math{\cl E} be the event that at least
    \math{r/2} of the \math{\gamma_j} are larger than \math{1/2r}.
    By part 3 of the negative independence lemma, with \math{k=2},
    \mld{\Prob[\cl E]\ge
      1-\exp\left(-\frac{\beta^2}{16}+\frac{r}{2}(1+\ln 2)\right).}
    Conditioning on this event
    \math{\cl E},
    let \math{I} be the indices \math{j} for which
    \math{\gamma_j>1/2r}, \math{|I|\ge r/2}. Then,
    \mld{\sum_{j\in[r]}\gamma_j|\rvC_j|>\frac{1}{2r}\sum_{j\in I}|\rvC_j|.
    \label{eq:dense1}}
    Using the lower tail inequality for independent Cauchys,
    Lemma~\ref{lemma:lower-cauchy-2}, with
    \mld{m=|I|\ge \frac{r}{2}\ge (14\cdot {\textstyle\frac12}d\ln(td))\cdot
      e^{\frac43 f(td)},}
    we have that,
    with probability at least \math{1-2e^{-\frac12 cd\ln(td)}},
    \mld{\textstyle\sum_{j\in I}|\rvC_j|> \frac14 m(1+\frac34\floor{\frac43 f(td)})\ge
      \frac18 r(1+\frac34\floor{\frac43 f(td)}).}
      Using~\r{eq:dense1},
      \mld{\Prob\left[\sum_{j\in[r]}\gamma_j|\rvC_j|\le
         { \textstyle\frac1{16} (1+\frac34\floor{\frac43 f(td)})}\right]
      \ \le\
      \Prob\left[\sum_{j\in I}|\rvC_j|\le { \textstyle\frac18 r(1+\frac34\floor{\frac43 f(td)})}\right]
      \le 2e^{-\frac12 cd\ln(td)}.
    \label{eq:proof-dense-1}}
      To conclude, condition on \math{\cl E}  and
      use Lemma~\ref{lemma:conditional}.
\remove{    \eqan{
      \Prob\left[\norm{\Pi_2\transp\yy}_1> \frac{f(td)}{16}\right]
      &\ge&
      1-\Prob\left[\sum_{j\in[r]}\gamma_j|\rvC_j|\le \frac{f(td)}{16}
        \ \Bigl|\  \cl E\right] - \Prob[\overline{\cl E}]\\
      &\ge&\textstyle
      1-2\exp(-\frac12 cd\ln(td))-\exp\left(-\frac{1}{16}\beta^2+\frac{1}{2}r(1+\ln 2)\right),
    }
    which proves the lemma.}
    \end{proof}
Lemma~\ref{lemma:dense-y} only applies for a single fixed \math{\yy} that is
independent of \math{\Pi_2\transp}. We need a result which applies to any
\math{\yy}. To do so, we need to use a \math{\delta}-net for the range of
\math{\matA}. Specifically, define the ball \math{B} of 1-norm radius
\math{1} in the range of \math{\matA} as
\mld{
  B=\{\yy\mid \yy=\matA\xx, \norm{\yy}_1=1\}.}
The set \math{\cl N=\{\zz_1,\ldots,\zz_K\}\subset B} is a \math{\delta}-net for
\math{B} if for every \math{\yy\in B}, there is a \math{\zz\in\cl N} for which
\math{\norm{\yy-\zz}_1\le\delta}. In \cite{BLM1989}, it is shown that for any
fixed \math{\matA}, there is a \math{\delta}-net of size
\math{|\cl N|=K\le (3/\delta)^d}. By applying the union bound to Lemma~\ref{lemma:dense-y},
\begin{lemma}\label{lemma:dense-all-y}
  Let \math{\cl N} be a \math{\delta}-net for \math{B} with 
  \math{|\cl N|\le (3/\delta)^d}.
  If \math{r\ge (14 cd\ln(td))\cdot e^{\frac43 f(td)}}, then, with probability at least 
  \math{1-2\exp\left(-\frac12 cd\ln(td)+d\ln(\frac{3}{\delta})\right)-
    \exp\left(-\frac{1}{16}\beta^2+\frac12 r(1+\ln 2)+d\ln(\frac{3}{\delta})\right)}, for all \math{\zz_i\in\cl N},
    \mld{
      \text{either } \qquad\norm{\zz_i}_2\ge
      \frac{1}{\beta},
      \qquad
      \text{or } \qquad
      \norm{\Pi_2\transp\zz_i}_1\ge\frac{(1+\frac34\floor{\frac43 f(td)})}{16}.
    }
\end{lemma}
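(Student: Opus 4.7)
The plan is a direct union bound over the net, invoking Lemma~\ref{lemma:dense-y} once for each net point. Fix an arbitrary \math{\zz_i\in\cl N}. Since \math{\cl N} depends only on \math{\matA} (and on \math{\delta}), the vector \math{\zz_i} is deterministic and therefore independent of the random spreading operator \math{\Pi_2}, which is exactly the independence hypothesis needed to apply Lemma~\ref{lemma:dense-y}. By construction \math{\norm{\zz_i}_1=1}, so there are two cases. If \math{\norm{\zz_i}_2\ge 1/\beta}, the first alternative in the conclusion holds deterministically and there is nothing more to prove. Otherwise \math{\norm{\zz_i}_2<1/\beta=\norm{\zz_i}_1/\beta}, which is the \math{\ell_2}-\emph{dense} hypothesis of Lemma~\ref{lemma:dense-y} applied to \math{\yy=\zz_i}.

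Next I would invoke Lemma~\ref{lemma:dense-y} with this \math{\yy=\zz_i}. The hypothesis on the embedding dimension, \math{r\ge (14cd\ln(td))\cdot e^{\frac43 f(td)}}, is exactly the condition required there, so that lemma gives
\mandc{
\Prob\!\left[\norm{\Pi_2\transp\zz_i}_1<\frac{1+\tfrac34\floor{\tfrac43 f(td)}}{16}\right]
\;\le\;
2\exp(-\tfrac12 cd\ln(td))+\exp\!\left(-\tfrac{1}{16}\beta^2+\tfrac12 r(1+\ln 2)\right).
}
Note that Lemma~\ref{lemma:dense-y} absorbed the factor \math{\norm{\yy}_1=1} on the right-hand side, so no rescaling is needed.

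Finally I would take a union bound over all \math{\zz_i\in\cl N}. Since \math{|\cl N|\le(3/\delta)^d=\exp(d\ln(3/\delta))}, the probability that the desired dichotomy fails for \emph{some} \math{\zz_i\in\cl N} is at most
\mandc{
(3/\delta)^d\cdot\Big(2\exp(-\tfrac12 cd\ln(td))+\exp(-\tfrac{1}{16}\beta^2+\tfrac12 r(1+\ln 2))\Big),
}
which equals
\mandc{
2\exp\!\left(-\tfrac12 cd\ln(td)+d\ln(\tfrac{3}{\delta})\right)+\exp\!\left(-\tfrac{1}{16}\beta^2+\tfrac12 r(1+\ln 2)+d\ln(\tfrac{3}{\delta})\right),
}
matching the bound claimed in the lemma.

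There really is no hard step here: the lemma is a routine net-and-union-bound packaging of Lemma~\ref{lemma:dense-y}. The only subtlety worth flagging is the independence of each net point from \math{\Pi_2}, which is what lets us legitimately apply Lemma~\ref{lemma:dense-y} pointwise before union-bounding; the heavier lifting (the negative-association argument for the \math{\gamma_j} and the lower-tail bound for sums of Cauchys) has already been done upstream. The real work begins in the following step, where one must upgrade this statement from a net to \emph{every} \math{\yy\in B} by controlling how much \math{\norm{\Pi_2\transp(\cdot)}_1} can change between a point and its net approximation; the choice of \math{\delta} will need to be coordinated with the dilation bound of Theorem~\ref{theorem:dilation}.
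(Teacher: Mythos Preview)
Your proposal is correct and matches the paper's approach exactly: the paper's proof is the single sentence ``By applying the union bound to Lemma~\ref{lemma:dense-y},'' and you have simply spelled out that union bound carefully, including the observation that net points are independent of \math{\Pi_2}.
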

We are now ready to prove the main contraction bound.
    \begin{theorem}\label{theorem:contraction}
      Assume bounded dilation,
      for all \math{\xx\in\R^d}:
      \math{\norm{\Atilde\xx}_1\le\kappa\norm{\matA\xx}_1}.
      For
      \math{\beta\ge \frac{8\sqrt{d}\ln(td)}{1+\frac34\floor{\frac43 f(td)}}}
        and
      \math{r\ge (14 cd\ln d)\cdot e^{\frac43f(td)}},
      set the size of the
      \math{\delta}-net to 
  \mld{\delta=\frac{\sqrt{d}\ln(td)}{4\kappa\beta}.}
  Then, with probability at least 
  \math{1-2\exp\left(-\frac12 cd\ln(td)+d\ln(\frac{3}{\delta})\right)-
    \exp\left(-\frac{1}{16}\beta^2+\frac12 r(1+\ln 2)+d\ln(\frac{3}{\delta})\right)}, 
  \mld{\text{for all \math{\xx\in\R^d}:}\qquad\norm{\Atilde\xx}_1
    \ge \frac{\sqrt{d}\ln(td)}{4\beta}
    \norm{\matA\xx}_1.}
  \end{theorem}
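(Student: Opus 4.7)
The plan is a standard $\delta$-net argument that packages together the two cases (sparse vs.\ dense) already prepared in Lemmas~\ref{lemma:sparse} and~\ref{lemma:dense-all-y}. Step one is to show the contraction bound on every net-point, step two is to ``patch'' to arbitrary $\xx\in\R^d$ using the assumed dilation bound.

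First, I would fix a net-point $\zz_i\in\cl N$, write $\zz_i=\matA\xx_i$, and (by homogeneity of the desired inequality) normalize $\norm{\zz_i}_1=1$. The dichotomy from Lemma~\ref{lemma:dense-all-y} gives two cases. In the sparse case $\norm{\zz_i}_2\ge 1/\beta$, Lemma~\ref{lemma:sparse} immediately delivers $\norm{\Atilde\xx_i}_1\ge \sqrt{d}\ln(td)/(2\beta)$. In the dense case, Lemma~\ref{lemma:dense-all-y} guarantees $\norm{\Atilde\xx_i}_1\ge \norm{\Atilde_2\xx_i}_1=\norm{\Pi_2\transp\zz_i}_1\ge (1+\tfrac34\floor{\tfrac43 f(td)})/16$, and the hypothesis $\beta\ge 8\sqrt{d}\ln(td)/(1+\tfrac34\floor{\tfrac43 f(td)})$ is exactly what is needed to conclude the same lower bound $\sqrt{d}\ln(td)/(2\beta)$ here as well. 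Thus uniformly over $\cl N$,
\mld{
\norm{\Atilde\xx_i}_1\;\ge\;\frac{\sqrt{d}\ln(td)}{2\beta}\norm{\matA\xx_i}_1.
}

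Second, to extend the bound to arbitrary $\xx$, I normalize $\norm{\matA\xx}_1=1$ so that $\yy=\matA\xx$ lies on the unit sphere $B$, pick a net point $\zz=\matA\xx'\in\cl N$ with $\norm{\yy-\zz}_1=\norm{\matA(\xx-\xx')}_1\le\delta$, and combine the triangle inequality with the hypothesized dilation bound applied to $\xx-\xx'$:
\mld{
\norm{\Atilde\xx}_1\;\ge\;\norm{\Atilde\xx'}_1-\norm{\Atilde(\xx-\xx')}_1\;\ge\;\frac{\sqrt{d}\ln(td)}{2\beta}-\kappa\delta.
}
Plugging in the prescribed $\delta=\sqrt{d}\ln(td)/(4\kappa\beta)$ collapses the right-hand side to $\sqrt{d}\ln(td)/(4\beta)$, and undoing the homogenization gives the statement for all $\xx\in\R^d$.

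The failure probability is inherited directly from Lemma~\ref{lemma:dense-all-y} applied with this choice of $\delta$: the $d\ln(3/\delta)$ terms come from the union bound over the $(3/\delta)^d$ net-points, and the parameter ranges $r\ge 14cd\ln(td)\cdot e^{\frac43 f(td)}$ and $\beta$ are passed through unchanged. The only delicate point is the trade-off baked into the choice of $\delta$: it must be small enough that $\kappa\delta$ does not eat the net-level lower bound of $\sqrt{d}\ln(td)/(2\beta)$, yet $d\ln(1/\delta)$ must remain dominated by $\tfrac12 cd\ln(td)$ and by $\tfrac1{16}\beta^2$ in the two exponents of the failure probability. Both are automatic from the hypotheses, so no additional estimate is needed beyond the single-step net argument above.
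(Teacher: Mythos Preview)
Your proposal is correct and follows essentially the same approach as the paper's proof: the same sparse/dense split on net points via Lemmas~\ref{lemma:sparse} and~\ref{lemma:dense-all-y}, the same triangle-inequality patch using the assumed dilation bound, and the same arithmetic with the choice of~$\delta$. The only cosmetic difference is that you first establish the contraction bound uniformly on~$\cl N$ and then extend, whereas the paper starts from an arbitrary~$\xx$ and decomposes directly; also, your final remark about $d\ln(1/\delta)$ needing to be dominated in the exponents is not part of this theorem (the failure probability is simply stated, and its smallness is established later in Lemmas~\ref{lemma:choose-parameters} and~\ref{lemma:choose-parameters-2}).
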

    \begin{proof}
  Let \math{\yy=\matA\xx}. By homogeneity, we may assume that
  \math{\norm{\yy}_1=1}.
    Let
  \math{\cl N} be the \math{\delta}-net for the range of~\math{\matA}
  from Lemma~\ref{lemma:dense-all-y}, \math{|\cl N|\le (3/\delta)^d}.
  With probability as in the theorem statement, Lemma~\ref{lemma:dense-all-y}
      holds.
      Let
      \math{\yy=\zz+\yy-\zz}, where \math{\zz\in\cl N} and
        \math{\norm{\yy-\zz}_1\le\delta}.
  Since \math{\zz} is in the range of \math{\matA}, so is \math{\yy-\zz}, and so
  we may write \math{\zz=\matA\xx_1} and \math{\yy-\zz=\matA\xx_2},
  and \math{\yy=\matA\xx_1+\matA\xx_2}.
  Since \math{\matA} has full rank, \math{\xx=\xx_1+\xx_2}.

  Let us now consider \math{\Atilde\xx=\Atilde\xx_1+\Atilde\xx_2}. There
  are two cases. First,
  if \math{\norm{\matA\xx_1}_2=\norm{\zz}_2\ge 1/\beta},
  we apply Lemma~\ref{lemma:sparse} to conclude
  \mld{
    \norm{\Atilde\xx_1}_1\ge\frac{\sqrt{d}\ln(td)}{2\beta}.}
  Second, if \math{\norm{\matA\xx_1}_2=\norm{\zz}_2< 1/\beta}, then
  \math{\norm{\Atilde\xx_1}_1\ge \norm{\Pi_2\transp\zz}_1\ge f(td)/16}
  (by Lemma~\ref{lemma:dense-all-y}). We conclude that
  \mld{
    \norm{\Atilde\xx_1}_1\ge
    \min\left(\frac{1+\frac34\floor{\frac43 f(td)}}{16},\frac{\sqrt{d}\ln(td)}{2\beta}\right).
  }
  By the triangle inequality,
  \math{\norm{\Atilde\xx}_1\ge\norm{\Atilde\xx_1}_1-\norm{\Atilde\xx_2}_1}.
  Since \math{\norm{\Atilde\xx_2}_1\le\kappa\norm{\matA\xx_2}_1
    =\kappa\norm{\yy-\zz}_1\le\kappa\delta}, we have
  \mld{
    \norm{\Atilde\xx}_1\ge
    \min\left(\frac{1+\frac34\floor{\frac43 f(td)}}{16},\frac{\sqrt{d}\ln(td)}{2\beta}\right)-\kappa\delta.
    }
  The theorem follows from the choice of \math{\delta} and
  \math{\beta\ge \frac{8\sqrt{d}\ln(td)}{1+\frac34\floor{\frac43 f(td)}}}
\end{proof}
    We are now ready to prove the main theorem, for which we fix the parameters
    \math{t,r,\beta,\delta}, etc. Let \math{d\ge 4} and
    \math{t\ge 10}. The dilation
    \math{\kappa} is given
    in Theorem~\ref{theorem:dilation} and
    the \math{\delta}-net size is defined in
    Theorem~\ref{theorem:contraction}, where
    by setting 
    \math{\exp\left(-\frac12 cd\ln(td)+d\ln(\frac{3}{\delta})\right)
      \le \frac{1}{4t}} and
    \math{\exp\left(-\frac{1}{16}\beta^2+\frac12 r(1+\ln 2)+d\ln(\frac{3}{\delta})\right)\le \frac{1}{2t}}, the total failure probability is at most
    \math{\frac{1}{t}}. Also, in Theorem~\ref{theorem:contraction},
    \math{\beta\ge (8\sqrt{d}\ln(td))/f(td)}.
    Collecting all the constraints together, we have:
    \mld{
      \renewcommand{\arraystretch}{1.5}
      \begin{array}{rcl}
        \kappa&=&\frac32d\ln (td)+td\ln(trd)\\
        \delta&=&\frac{\sqrt{d}\ln(td)}{4\kappa\beta}\\
        \beta&\ge& \frac{8\sqrt{d}\ln(td)}{1+\frac34\floor{\frac43 f(td)}}\\
        r&=& (14 cd\ln (td))\cdot e^{\frac43 f(td)}\\
        r&\ge&(28\ln 4t+28d\ln(3/\delta))\cdot e^{\frac43 f(td)}\\
        \beta^2&\ge&16\ln 2t+8r(1+\ln 2)+16d\ln(3/\delta).
      \end{array}
      \label{eq:settings}
    }
Before we give the main contraction bound, we show how to pick the parameters
to satisfy \r{eq:settings}. The main challenge is to satisfy the inequality
constraints for \math{r} and \math{\beta^2}. We consider two cases
separately, \math{f(x)=0} and \math{f(x)\in O(\ln\ln(x))}.
\begin{lemma}\label{lemma:choose-parameters}
  Suppose \math{f(x)=0} and \math{d,t\ge 10}. Then, the
  following choices for \math{r} and \math{\beta} satisfy
  the requirements in 
  \r{eq:settings}:
  \mld{
    \begin{array}{rcl}
      r&=&80 d\ln(td)\\
       \beta&=&16\sqrt{d}\ln(td).
    \end{array}
  }
\end{lemma}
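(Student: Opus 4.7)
The plan is to directly verify that the choices $r = 80d\ln(td)$ and $\beta = 16\sqrt{d}\ln(td)$ satisfy all six conditions in \r{eq:settings} in the regime $f(td)=0$, $d,t\ge 10$. With $f(td)=0$, the floor $\lfloor\tfrac{4}{3}f(td)\rfloor$ vanishes and each factor $e^{(4/3)f(td)}$ collapses to $1$, so the list simplifies to: $\beta \ge 8\sqrt{d}\ln(td)$, $r = 14cd\ln(td)$, $r \ge 28\ln 4t + 28 d\ln(3/\delta)$, and $\beta^2 \ge 16\ln 2t + 8r(1+\ln 2) + 16d\ln(3/\delta)$. The first is immediate from $\beta = 16\sqrt{d}\ln(td)$, and the second is achieved by taking $c = 80/14$, which is compatible with the condition $c\ge 1$ coming from Lemma~\ref{lemma:lower-cauchy-2}.

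The technical core is obtaining a clean upper bound for $\ln(3/\delta)$ in terms of $\ln(td)$. Substituting $\kappa = \tfrac{3}{2}d\ln(td) + td\ln(trd)$, $\beta = 16\sqrt{d}\ln(td)$, and $r = 80 d\ln(td)$ into $\delta = \sqrt{d}\ln(td)/(4\kappa\beta)$ gives
\mld{
\frac{3}{\delta} \;=\; 288\,d\ln(td) + 192\,td\ln(trd),\qquad\text{with}\qquad trd = 80\, t d^2 \ln(td).
}
For $d,t\ge 10$, the second term dominates and $\ln(trd)$ is in turn controlled by a small multiple of $\ln(td)$, yielding a bound of the shape $\ln(3/\delta) \le C\ln(td)$ for an explicit constant $C$ (numerically $C < 2.7$ at the worst case $d=t=10$, with $C$ shrinking as $d$ or $t$ grow).

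With this bound in hand, the remaining two inequalities become concrete polynomial comparisons. For the third constraint, I would verify $80 d\ln(td) \ge 28\ln 4t + 28 C d\ln(td)$; this holds because $80 - 28C > 0$ and the $\ln 4t$ term is absorbed using $d\ge 10$. For the fourth, $\beta^2 = 256\, d\ln^2(td)$ must dominate $16\ln 2t + 8\cdot 80 d\ln(td)(1+\ln 2) + 16 C d\ln(td)$, whose leading behaviour is $O(d\ln(td))$; since $\beta^2$ is of order $d\ln^2(td)$ and $\ln(td)\ge \ln 100$, there is room to spare.

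The main obstacle is purely bookkeeping: the constants $80$ and $16$ were tuned to be just large enough to beat the $28$ and the $8(1+\ln 2)$ appearing on the right-hand sides, so one cannot afford to discard lower-order contributions too aggressively. There is no conceptual difficulty beyond tracking the constants through the algebra and confirming that the tight numerical margins remain positive uniformly over the range $d,t \ge 10$.
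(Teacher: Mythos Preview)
Your approach is correct and essentially identical to the paper's: both directly verify the constraints in \r{eq:settings} by first bounding $3/\delta$ (the paper does this via an intermediate bound $\kappa\le\tfrac{63}{20}td\ln(td)$, obtaining $3/\delta\le 605\,td\ln(td)$, which is equivalent to your computation), and then reducing each remaining inequality to an elementary calculus check in the variable $u=td\ge 100$. Your numerical claim $C<2.7$ is marginally off (the worst-case value at $d=t=10$ is about $2.71$), but the resulting margin $80-28C\approx 4.1$ is still positive and the argument goes through exactly as you outline.
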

\begin{proof}
  First, we use a crude bound on \math{r} to
  bound \math{\kappa}. Since \math{d,t\ge 10}:
  \math{
    r\le 8\cdot td\ln(td)\le (td)^2.
  }
  Using this crude bound,
  \mld{
    \kappa
    =
    \frac32\cdot d\ln (td)+td\ln(trd)
    \le
    \frac3{20}\cdot td\ln (td)+td\ln(td)^3
    =
    \frac{63}{20}\cdot td\ln (td).\label{eq:proof-constraints-1}
  }
  By the choice of
  \math{\beta}, \math{\delta=1/64\kappa}, hence
  \math{3/\delta\le 605\cdot td\ln (td)}. Therefore,
  \mld{28\ln 4t+28d\ln(3/\delta)\le 28\ln 4t+28d\ln(605\cdot td\ln (td))
    \le
    2.8\cdot d\ln(2u/5)+28d\ln(605u\ln u),}
  where \math{u=td}. By straightforward calculus,
  \math{2.8\ln(2u/5)+28\ln(605u\ln u)\le 80\ln u}, which proves
  \mld{
    r=80d\ln u\ge 2.8d\ln(2u/5)+28d\ln(605u\ln u)\ge
    28\ln 4t+28d\ln(3/\delta).
    }
  We now show the bound on \math{\beta^2=256d\ln^2 u}, where \math{u=td}.
  Using \math{d,t\ge 10} and the definition of \math{r},
  \eqan{
    16\ln 2t+8r(1+\ln 2)+16d\ln(3/\delta)
    &\le&
    16\ln 2t+14r+16d\ln(3/\delta)
    \\
    &\le&
    1.6\cdot d\ln(u/5)+1120\cdot d\ln u+16d\ln(605u\ln u),
  }
  where we used the bound on \math{3/\delta} from earlier.
  Again, by straightforward calculus, for \math{u\ge 100},
  one can show that the RHS is at most 
  \math{256d\ln^2 u},
  which proves the lower bound for \math{\beta^2}.
\end{proof}

\begin{lemma}\label{lemma:choose-parameters-2}
  For \math{q\ge 3},
  suppose \math{f(x)=\frac{3}{4q}\ln\ln x} and \math{d,t\ge 10} also satisfy
  \math{td \ge q^{1.17}}. Then, the
  following choices for \math{r} and \math{\beta} satisfy
  the requirements in 
  \r{eq:settings}:
  \mld{
    \renewcommand{\arraystretch}{1.5}
    \begin{array}{rcl}
      r&=&100 d\ln^{1+1/q}(td)\\
       \beta&=&\displaystyle\frac{16q\sqrt{d}\ln(td)}{\ln\ln(td)}.
    \end{array}
  }
\end{lemma}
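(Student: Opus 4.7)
My plan is to verify each of the six constraints in \r{eq:settings} one at a time, paralleling the proof of Lemma~\ref{lemma:choose-parameters}, but tracking the $q$-dependence carefully and using the hypothesis $td \geq q^{1.17}$ whenever a $\log q$ term threatens to dominate. The key simplification is that with $f(x) = \frac{3}{4q}\ln\ln x$, we have $\frac{4}{3}f(td) = \frac{1}{q}\ln\ln(td)$ and therefore $e^{\frac{4}{3}f(td)} = \ln^{1/q}(td)$, so the $r$-equation immediately reduces to $r = 14cd\ln(td)\cdot \ln^{1/q}(td)$, which matches the claimed $r = 100d\ln^{1+1/q}(td)$ upon taking $c = 100/14$.

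I would then bound $\kappa$ using the same crude bound $r \leq (td)^2$ as in Lemma~\ref{lemma:choose-parameters} to get $\kappa \leq c_1\cdot td\ln(td)$ for some absolute $c_1$, and substitute into $\delta = \sqrt{d}\ln(td)/(4\kappa\beta)$ to obtain $\ln(3/\delta) \leq \ln(td) + \ln\ln(td) + \ln q + O(1)$; the hypothesis $td \geq q^{1.17}$ yields $\ln q \leq \ln(td)/1.17$, so $\ln(3/\delta) \leq c_2\ln(td)$ for an explicit constant. Next I would handle the $\beta$-lower bound by splitting into cases based on whether $\lfloor \frac{1}{q}\ln\ln(td)\rfloor = 0$ (in which case $\ln\ln(td) < q$ and the bound is immediate from $2q \geq \ln\ln(td)$) or $\lfloor \frac{1}{q}\ln\ln(td)\rfloor \geq 1$ (in which case I use $\lfloor \frac{1}{q}\ln\ln(td)\rfloor \geq (\ln\ln(td) - q)/q$ and expand). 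The $r$-lower bound from the fifth line of \r{eq:settings} is then routine: after cancelling $e^{\frac{4}{3}f(td)}$, it reduces to $100d\ln(td) \geq 28\ln 4t + 28d\ln(3/\delta)$, which follows since $\ln(3/\delta) = O(\ln(td))$ and the constant $100$ leaves ample room.

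The main obstacle will be the $\beta^2$-lower bound, $\beta^2 \geq 16\ln 2t + 8r(1+\ln 2) + 16d\ln(3/\delta)$. Here $\beta^2 = 256 q^2 d\ln^2(td)/\ln^2\ln(td)$ while the dominant term on the right is $8r(1+\ln 2) \approx 1360\, d\ln^{1+1/q}(td)$, so after dividing by $d$ the constraint becomes
\mld{
  \frac{256\, q^2\, \ln^{1-1/q}(td)}{\ln^2\ln(td)} \;\gtrsim\; 1360.
}
This is precisely where the assumption $td \geq q^{1.17}$ is essential: it forces $\ln(td) \geq 1.17\ln q$, so even at the lower boundary of admissible $(t,d,q)$ the left-hand side is bounded below by an absolute constant large enough to absorb $1360$. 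I would verify this by noting that the function $L \mapsto L^{1-1/q}/\ln^2 L$ is increasing for $L \geq e^{2q/(q-1)}$ and then checking the inequality at $L = 1.17\ln q$ using elementary bounds; for $q \geq 3$ both cases $\ln(td)$ small (where $q^2$ dominates) and $\ln(td)$ large (where $\ln^{1-1/q}(td)$ dominates) give comfortable margins.

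Once all six constraints are verified, Theorem~\ref{theorem:contraction} and Theorem~\ref{theorem:dilation} apply with the chosen parameters, completing the proof. The additive $\ln q$ corrections in $\ln(3/\delta)$ are what force the particular exponent $1.17$ in the hypothesis $td \geq q^{1.17}$; tightening this exponent would require sharper bookkeeping but is not necessary for the asymptotic application.
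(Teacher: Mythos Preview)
Your overall plan is correct and follows the paper's proof closely: use the crude bound $r\le(td)^2$ to control $\kappa$, then verify each line of \r{eq:settings} with the substitution $e^{\frac43 f(td)}=\ln^{1/q}(td)$. But you have swapped which constraint is tight.

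You call the fifth line of \r{eq:settings} (the $r$-lower bound) ``routine'' with ``ample room.'' In the paper this is exactly where $td\ge q^{1.17}$ bites: after cancelling $e^{\frac43 f(td)}$ and carefully tracking the constant in $3/\delta\le 605q\,u\ln u/\ln\ln u$ (with $u=td$), the paper reduces the inequality to $24.1-28\ln q/\ln u>0$, and $td\ge q^{1.17}$ gives $28\ln q/\ln u\le 28/1.17\approx 23.93$. There is essentially no slack at the extremal pair $u=100$, $q\approx 51$. Your asymptotic estimate $\ln(3/\delta)\le c_2\ln(td)$ with $c_2\approx 1.85$ badly understates the constant at small $u$ because of the $\ln 605$ term; if you actually carry out this ``routine'' step you will find it is the bottleneck that fixes the exponent $1.17$.

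Conversely, the $\beta^2$-bound you flag as ``the main obstacle'' does not need $td\ge q^{1.17}$ in any essential way. The left side carries a factor $q^2$ (the paper just uses $q\ge3$ to get $\beta^2\ge 2304\,d\ln^2 u/(\ln\ln u)^2$), while the only $q$-dependence on the right is an additive $16d\ln q$, a lower-order term next to $8r(1+\ln2)\approx 1355\,d\ln^{1+1/q}u$. The inequality then follows from the one-variable fact that $2304\ln^2 u/(\ln\ln u)^2$ dominates $1355\ln^{4/3}u$ plus lower-order terms for $u\ge100$.

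Finally, your case split on $\lfloor\frac{1}{q}\ln\ln(td)\rfloor$ for the first $\beta$-lower bound is unnecessary. The paper uses the one-line inequality $1+\tfrac34\lfloor\tfrac43 f\rfloor\ge f$ (from $\lfloor x\rfloor\ge x-1$), which directly gives
\[
\frac{8\sqrt d\,\ln(td)}{1+\tfrac34\lfloor\tfrac43 f(td)\rfloor}\ \le\ \frac{8\sqrt d\,\ln(td)}{\tfrac{3}{4q}\ln\ln(td)}\ =\ \frac{32q\sqrt d\,\ln(td)}{3\ln\ln(td)}\ \le\ \beta.
\]
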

\begin{proof}
  Again, since \math{d,t\ge 10}, the same crude bound on \math{r} holds,
  \math{
    r\le 10\cdot td\ln^{1+1/q}(td)\le (td)^2
  } (because \math{q\ge 3}).
  Hence, we get the same bound
  for \math{\kappa}, \math{\kappa\le\frac{63}{20}\cdot td\ln (td).} Let \math{u=td}.
  Then,
  \mld{
    \frac{3}{\delta}
    =
    \frac{12\kappa\beta}{\sqrt{d}\ln(td)}
    \le
    605 q\frac{u\ln u}{\ln\ln u}.
  }
  We have that
  \eqan{
    &&r-(28\ln 4t+28d\ln(3/\delta))\cdot e^{\frac43 f(td)}\\
    &\ge&
    100d\ln^{1+1/q}u-
    \left(2.8\cdot d\ln(2u/5)+28d\ln\left(605 q\frac{u\ln u}{\ln\ln u}\right)
    \right)\ln^{1/q}u
    \\
    &=&
    d\ln^{1+1/q}u\left(100-\frac{2.8\ln(2u/5)+28\ln\left(605 \frac{u\ln u}{\ln\ln u}\right)}{\ln u}
    -
    28\frac{\ln q}{\ln u}\right)
    \\
    &\ge&
    d\ln^{1+1/q}u\left(24.1-28\frac{\ln q}{\ln u}\right).
  }
  Since \math{u\ge q^{1.17}}, the RHS is positive, proving the bound
  on \math{r}.
    Also, since
  \math{1+\frac34\floor{\frac43 f(td)}\ge f(td)},
  \mld{
    \frac{8\sqrt{d}\ln(td)}{1+\frac34\floor{\frac43 f(td)}}
    \le
    \frac{8\sqrt{d}\ln(td)}{\frac{3}{4q}\ln\ln(td)}
    \le
    \frac{16q\sqrt{d}\ln(td)}{\ln\ln(td)},
  }
  which is the first bound on \math{\beta}. For the second bound on
  \math{\beta}, using \math{q\ge 3},
  \math{\beta^2\ge(2304 d \ln^2(u))/(\ln\ln u)^2}, and
  \eqan{
    &&16\ln 2t+8r(1+\ln 2)+16d\ln(3/\delta)\\
    &\le&
    16\ln (u/5)+1355 d\ln^{1+1/q}u
    +
    16 d\ln\left(605 q\frac{u\ln u}{\ln\ln u}\right)
    \\
    &\le&
    1.6d\ln (u/5)+1355 d\ln^{4/3}u
    +
    16 d\ln\left(605 \frac{u\ln u}{\ln\ln u}\right)
    +
    13.68 d\ln u, 
  }
  where the last step uses \math{u\ge q^{1.17}}, \math{q\ge 3}
  and \math{d,t\ge 10}. 
  Therefore,
  \eqan{
    &&\beta^2-(16\ln 2t+8r(1+\ln 2)+16d\ln(3/\delta))\\
    &\ge&
    d\left(
    \frac{2304 \ln^2(u)}{\ln^2\ln u}
    -1.6\ln (u/5)
    -1355 \ln^{4/3}u
    -
    16 \ln\left(605 \frac{u\ln u}{\ln\ln u}\right)
    -
    13.68 \ln u
    \right)
  }
  The expression in parentheses is just a function of \math{u} growing
  asymptotically as \math{\ln^2(u)/\ln^2\ln u}. By straightforward calculus,
  this term is positive for \math{u\ge 100}.
\end{proof}
Lemmas~\ref{lemma:choose-parameters} and ~\ref{lemma:choose-parameters-2}
give sufficient conditions for satisfying all
the requirements in \r{eq:settings}. The requirements in  \r{eq:settings}
are enough to ensure that Theorems~\ref{theorem:dilation}
and~\ref{theorem:contraction} each hold with probability
at least \math{1-1/t}, so by a union bound, both theorems hold with
probability at least \math{1-2/t}. We therefore get the following theorem. 
\begin{theorem}[\math{\ell_1}-embedding]\label{theorem:main-embedding}
  Let \math{d,t\ge 10}. With probability at least
  \math{1-2/t}, for all \math{\xx\in\R^{d}},
  \begin{enumerate}[label={(\roman*)}]
  \item If \math{r\ge 80 d\ln (dt)}, the distortion is in
    \math{\Theta(d\ln d)}. Specifically,
      \mld{
    \frac{1}{64}\cdot \norm{\matA\xx}_1\le\norm{\Atilde\xx}_1\le
    \frac{63}{20}\cdot td\ln(td)\cdot \norm{\matA\xx}_1.
  }
    \item If \math{r\ge 100 d\ln^{1+1/q} (td)}, for \math{q\ge 3},
      the distortion is in
      \math{\Theta((d\ln d)/\ln\ln d)}.  Specifically, for \math{td\ge q^{1.17}},
      \mld{
    \frac{\ln\ln(td)}{64q}\cdot \norm{\matA\xx}_1\le\norm{\Atilde\xx}_1\le
    \frac{63}{20}\cdot td\ln(td)\cdot \norm{\matA\xx}_1.
       }
  \end{enumerate}
\end{theorem}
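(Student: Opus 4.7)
My plan is to assemble Theorem~\ref{theorem:main-embedding} as essentially a bookkeeping step that combines the dilation bound (Theorem~\ref{theorem:dilation}), the contraction bound (Theorem~\ref{theorem:contraction}), and the parameter-choice lemmas (Lemma~\ref{lemma:choose-parameters} for part~(\rn{1}) and Lemma~\ref{lemma:choose-parameters-2} for part~(\rn{2})). The parameter lemmas were designed precisely to certify that, under the stated lower bounds on $r$ and for the proposed $\beta$, every one of the six constraints collected in~\r{eq:settings} is satisfied; in particular, by the way the parameters are chosen there, the failure-probability bounds
\mand{
\exp\Bigl(-\tfrac12 cd\ln(td)+d\ln(3/\delta)\Bigr)\le \tfrac{1}{4t},
\qquad
\exp\Bigl(-\tfrac{1}{16}\beta^2+\tfrac12 r(1+\ln 2)+d\ln(3/\delta)\Bigr)\le \tfrac{1}{2t}
}
hold, so that Theorem~\ref{theorem:contraction} applies with overall failure probability at most $1/t$.

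For part~(\rn{1}), I would invoke Lemma~\ref{lemma:choose-parameters} with $f\equiv 0$ and the choices $r = 80\,d\ln(td)$, $\beta = 16\sqrt{d}\ln(td)$. The dilation bound from Theorem~\ref{theorem:dilation} then reads $\kappa = \tfrac32 d\ln(td)+td\ln(trd)$, which, using the crude estimate $r\le (td)^2$ established in the proof of Lemma~\ref{lemma:choose-parameters} and $d,t\ge 10$, is at most $\tfrac{63}{20}td\ln(td)$ (equation~\r{eq:proof-constraints-1}). The contraction bound from Theorem~\ref{theorem:contraction} reads $\|\Atilde\xx\|_1\ge \frac{\sqrt{d}\ln(td)}{4\beta}\|\matA\xx\|_1$, and substituting $\beta = 16\sqrt{d}\ln(td)$ collapses this to $\tfrac{1}{64}\|\matA\xx\|_1$. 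A union bound over the two events (dilation fails with probability $\le 1/t$, contraction fails with probability $\le 1/t$) yields the claimed $1-2/t$ success probability.

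Part~(\rn{2}) proceeds identically, now invoking Lemma~\ref{lemma:choose-parameters-2} with $f(x)=\tfrac{3}{4q}\ln\ln x$, $r = 100\,d\ln^{1+1/q}(td)$, and $\beta = \frac{16q\sqrt{d}\ln(td)}{\ln\ln(td)}$. The crude bound $r\le (td)^2$ still holds (since $q\ge 3$ and $d,t\ge 10$), so the dilation constant $\kappa$ is again at most $\tfrac{63}{20}td\ln(td)$, giving the same upper bound. The contraction bound $\frac{\sqrt{d}\ln(td)}{4\beta}$ becomes $\frac{\ln\ln(td)}{64q}$ after substitution, producing the advertised improvement in distortion at the cost of a $\ln^{1/q}$-factor in the embedding dimension.

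The only moments that require care, and where I expect the bulk of the verification effort, are checking that the specific numerical constants (the $80$ and $100$ in front of $r$, the $16$ and $16q/\ln\ln(td)$ in $\beta$, the $63/20$ in $\kappa$) actually satisfy the inequalities in \r{eq:settings}---this is exactly the content of Lemmas~\ref{lemma:choose-parameters} and~\ref{lemma:choose-parameters-2}, both already proved. The only other subtle point is the regime restriction $td\ge q^{1.17}$ in part~(\rn{2}), which is inherited verbatim from Lemma~\ref{lemma:choose-parameters-2} and ensures the calculus inequality that forces $r-(28\ln 4t+28d\ln(3/\delta))\cdot e^{4f(td)/3}\ge 0$. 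Once these parameter checks are in hand, the theorem follows immediately by a two-event union bound.
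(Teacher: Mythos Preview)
Your proposal is correct and matches the paper's proof essentially verbatim: the paper simply applies a union bound to Theorems~\ref{theorem:dilation} and~\ref{theorem:contraction}, plugs in the parameter choices from Lemmas~\ref{lemma:choose-parameters} and~\ref{lemma:choose-parameters-2}, and extracts the explicit upper bound $\kappa\le\tfrac{63}{20}td\ln(td)$ from~\r{eq:proof-constraints-1}. You have recovered precisely this argument, including the substitution of $\beta$ into the contraction bound to obtain the constants $1/64$ and $\ln\ln(td)/(64q)$.
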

\begin{proof}
  Apply a union bound to Theorems~\ref{theorem:contraction} (lower bound)
  Theorem~\ref{theorem:dilation} (upper bound)
  with Lemmas~\ref{lemma:choose-parameters}
  and~\ref{lemma:choose-parameters-2}.
  To get the specific upper bound, 
  use the bound
  on \math{\kappa} from \r{eq:proof-constraints-1}).
\end{proof}

\subsection{Proof of Theorem~\ref{theorem:l1-sampling}: \math{\ell_1} Sampling Based Embedding}
\label{proof:l1-sampling}
Using Lemma~12 in\cite{malik186} with \math{r_2=15\log(6n/\delta)}
gives,
with probability at least
\math{1-\delta/3} and for all \math{i\in[n]},
\mld{
  \frac12\norm{\matU_{(i)}}_1\le\lambda_i\le\frac32\norm{\matU_{(i)}}_1.
}
which means that \math{\lambda_i/\sum_{i\in[n]}\lambda_i\ge\frac13
  \norm{\matU_{(i)}}_1/\norm{\matU}_1} for all \math{i\in[n]}.
Condition on this event holding.
Let \math{\matD} be a diagonal matrix whose diagonal entries are chosen
independently as follows:
\mld{
  \matD_{ii}=
  \begin{cases}
    1/{p_i}&\text{with probability \math{p_i}}\\
      0&\text{otherwise}
  \end{cases}
}
Then \math{\norm{\Atilde\xx}_1=\norm{\matD\matA\xx}_1}, for
any \math{\xx\in\R^{d}}.
Since \math{\matU} is a basis, \math{\matA=\matU\matR}, hence
\math{\norm{\matD\matA\xx}_1=
  \norm{\matD\matU\zz}_1}, where \math{\zz=\matR\xx}. The sampling
lemma follows if we prove, for every \math{\xx\in\R^{d}}
\mld{
  (1-\varepsilon)  \norm{\matU\xx}_1\le \norm{\matD\matU\xx}_1\le
  (1+\varepsilon)  \norm{\matU\xx}_1
}
A slightly modified version of the \math{\ell_1} sampling lemma,
\cite[Lemma 5]{malik186} gives that for a
a fixed \math{\xx} and \math{\varepsilon\le1}
\mld{\norm{\matD\matU\xx}_1\le
  (1+\varepsilon/2)  \norm{\matU\xx},\label{eq:app:upper1}
}
  with a failure probability at most
  \math{\exp(-s\varepsilon^2/28\alpha)}.
  Let \math{B} be the \math{d}-dimensional unit \math{\ell_1}-ball in the
  range of \math{\matU},
  \math{B=\{\zz=\matU\xx|\norm{\zz}_1=1\}}. Let
  \math{\cl N=\{\zz_1,\ldots,\zz_K\}\subset B}
  be a \math{\frac16\varepsilon}-net for
  of size \math{|\cl N|\le\left(\frac{18}{\varepsilon}\right)^d}~\cite{BLM1989}.
  By a union bound, with probability at least
  \math{1-\exp(-\frac{s\varepsilon^2}{7\alpha}+d\ln \frac{18}{\epsilon})},~\r{eq:app:upper1}
  holds for every \math{\zz_i\in\cl N}. Since
  \math{s\ge 28\varepsilon^{-2}d\alpha(\ln(18/\varepsilon)+d^{-1}\ln(3/\delta))},
  the failure probability is at most \math{\delta/3}.
  Now consider any \math{\xx\in\R^{d}} and let \math{\zz=\matU\xx}.
  By homogeneity, we may assume
  that \math{\norm{\zz}_1=1}. We show that
  \math{\norm{\matD\zz}_1\le1+\varepsilon}. Let \math{\zz_*\in\cl N}
  be the closest point in \math{\cl N} to \math{\zz}, so
  \math{\norm{\zz-\zz_*}_1\le\varepsilon/3}. Then
  \math{\zz=\zz_*+\zz-\zz_*} and by the triangle inequality,
  \mld{
    \renewcommand{\arraystretch}{1.25}
        \begin{array}{rcl}
    \norm{\matD\zz}_1&\le& \norm{\matD\zz_*}_1+\norm{\matD(\zz-\zz_*)}_1\\
    &\le& (1+\varepsilon/2)+\norm{\zz-\zz_*}\norm{\matD\ww}_1\\
    &\le&(1+\varepsilon/2)+(\varepsilon/6)\norm{\matD\ww}_1,
    \end{array}
  }
  where \math{\ww=(\zz-\zz_*)/\norm{\zz-\zz_*}_1} is a vector
  in the range of
  \math{\matU}
  with unit \math{\ell_1}-norm. Iterating with
  \math{\ww},
  \mld{
    \renewcommand{\arraystretch}{1.5}
    \begin{array}{rcl}
    \norm{\matD\zz}_1&\le& (1+\varepsilon/2)(1+(\varepsilon/6)+
    (\varepsilon/6)^2+\cdots)\\
    &=&\displaystyle
    \frac{1+\varepsilon/2}{1-\varepsilon/6}\ \le\ 1+\varepsilon.
    \end{array}
    }
(The last inequality holds for \math{\varepsilon\le 1}.)
  Therefore, with probability at least \math{1-\delta/3},
  \math{\norm{\matD\matU\xx}_1\le (1+\varepsilon)\norm{\matU\xx}_1} for
  all \math{\xx\in\R^d}. We condition on this
  bound holding for all \math{\xx}.
  Again, using the lower bound side of the sampling lemma from
  \cite{malik186},
  \mld{\norm{\matD\matU\xx}_1\le
  (1+\varepsilon/2)  \norm{\matU\xx},\label{eq:app:upper1}
}
  with a failure probability at most
  \math{\exp(-s\varepsilon^2/28\alpha)}. Again applying a union bound
  over the \math{\frac16\varepsilon}-net, and for our choice of
  \math{s}, with probability at least \math{1-\delta/3},
  \math{\norm{\matD\zz_i}_1\ge (1-\varepsilon/2)} for all \math{\zz_i\in\cl N}.
  Now consider any \math{\xx}, \math{\zz=\matU\xx}. As before, pick
  \math{\zz_*\in\cl N} with \math{\zz=\zz_*+\zz-\zz_*} and
  \math{\norm{\zz-\zz_*}_1\le\varepsilon/6}. We show that
  \math{\norm{\matD\zz}_1\ge 1-\varepsilon}. Indeed, by the
  triangle inequality,
  \mld{
      \renewcommand{\arraystretch}{1.25}
    \begin{array}{rcl}  
    \norm{\matD\zz}_1&\ge&\norm{\matD\zz_*}_1-\norm{\matD(\zz-\zz_*)}_1\\
    &\ge& (1-\varepsilon/2)-(1+\varepsilon)\norm{\zz-\zz_*}_1,
    \end{array}
  }
  where the last inequality is because we conditioned on
  \math{\norm{\matD\zz}_1\le(1+\varepsilon)\norm{\zz}_1} for
  all \math{\zz} in the range of \math{\matU}. Since
  \math{\norm{\zz-\zz_*}_1\le\varepsilon/6}, 
  \math{\norm{\matD\zz}_1\ge1-\frac12\varepsilon-\frac16\varepsilon-
    \frac16\varepsilon^2\ge 1-\varepsilon}.

  Lastly, the embedding dimension \math{r} is sum of Bernoulli random
  variables with probability \math{p_i}, with
  \mld{\Exp[r]
    =\sum_{i\in[n]}p_i
    \le\sum_{i\in[n]}s\frac{\lambda_i}{\sum_{j\in[n]}\lambda_j}=s.
  }
  Therefore, by the Chernoff bound in Lemma~\ref{lemma:chernoff},
  \math{\Prob[r>2s]\le e^{-s/3}}.
  The full failure probability follows by applying
  a union bound to the three failure probabilities of \math{\delta/3}
  together with the failure probability for the embedding dimension.
  \qedsymb

\subsection{Proof of Lemma~\ref{lemma:constructing-U}: Constructing
A Well-Conditioned Basis}

The runtimes of all the algorithms follows from the runtimes of the
embeddings and the time to compute
\math{\matA\matR^{-1}\matC} from right to left. We prove the bounds on
\math{\alpha(\matU)}.

\paragraph{Part (\rn{1}).} 
\math{\matU=\matA\Atilde^{-1}}, where \math{\Atilde=(\matA\transp
  \Pi\Pi\transp\matA)^{1/2}}. 
Let \math{\matQ} be a basis for the range of
\math{\matA}, with \math{\matA=\matQ\matS}. 
We first bound
\math{\norm{\matU}_1}.
\eqar{
  \norm{\matU}_1
  &=&
  \norm{\matA\Atilde^{-1}}_1\nonumber\\
  &=&
  \sum_{i\in[d]}\norm{\matA(\Atilde^{-1})^{(i)}}_1\nonumber\\
  &=&
  \sum_{i\in[d]}\norm{\matQ\matS(\Atilde^{-1})^{(i)}}_1\nonumber\\
  &\le&
  \norm{\matQ}_1\sum_{i\in[d]}\norm{\matS(\Atilde^{-1})^{(i)}}_\infty\nonumber\\  
  &\le&
  d\norm{\matQ}_1\norm{\matS(\Atilde^{-1})}_\infty\nonumber\\  
  &\le&
  d\norm{\matQ}_1\norm{\matS(\Atilde^{-1})}_2\nonumber\\  
  &\le&
  d\norm{\matQ}_1\norm{\matS}_2\norm{\Atilde^{-1}}_2  
}
Using \r{eq:sig-courant-fisher} with \math{\varepsilon=\frac12},
\math{\tilde\sigma_d\ge\frac12\sigma_d}, which means that
\mld{\norm{\Atilde^{-1}}_2=1/\tilde\sigma_d\le \sqrt{2}/\sigma_d=\sqrt{2}\norm{\matA^{-1}}_2.}
Therefore, \math{\norm{\matU}_1\le
  \sqrt{2}d\norm{\matQ}_1\norm{\matS}_2\norm{\matA^{-1}}_2}.
We now give a lower bound on \math{\norm{\matU\xx}_1}.
\eqar{
  \norm{\matU\xx}_1
  &=&
  \norm{\matA\Atilde^{-1}\xx}_1\nonumber\\
  &\ge&
  \norm{\matA\Atilde^{-1}\xx}_2\nonumber\\
  &\buildrel (a) \over \ge&
       \sqrt{{\textstyle\frac23}}\cdot\norm{\Atilde\Atilde^{-1}\xx}_2\nonumber\\
  &=&
  \sqrt{{\textstyle\frac23}}\cdot\norm{\xx}_2\nonumber\\
  &\ge&
  \sqrt{{\textstyle\frac23}}\cdot\norm{\xx}_\infty.\label{eq:Ux-lower}
}
Therefore, from the definition of \math{\alpha(\matU)} in \r{eq:def-alp},
and using \r{eq:Ux-lower} with the upper bound on \math{\norm{\matU}_1},
\mld{
  \alpha(\matU)=
  \frac{\norm{\matU}_1}{\min_{\norm{\xx}_\infty=1}{\norm{\matU\xx}_1}}
  \le
  \frac{\sqrt{2}d\norm{\matQ}_1\norm{\matS}_2\norm{\matA^{-1}}_2}{\sqrt{\frac23}}
  \label{eq:alp-bound}
}
The bound in \r{eq:alp-bound} holds for any
\math{\matQ}. By homogeneity, we can choose
\math{\norm{\matQ}_1=d}, so we should pick the best possible such
\math{\matQ} that minimizes \math{\norm{\matS}_2}. This gives
\math{
  \alpha(\matU)\le \sqrt{3}d^2\norm{\matA^{-1}}_2\min_{\norm{\matQ}_1\le d}\norm{\matS}_2
=\sqrt{3}d^2\kappa_1(\matA)}.

\paragraph{Part (\rn{2}).} 
The result follows from a, by now, standard well-conditioned basis
construction given a \math{\text{poly}(d)}-distortion embedding.
Recall \math{\matU=\matA\matR^{-1}}, where \math{\Atilde=\matQ\matR} and
\math{\matQ} is orthogonal.
For the upper bound on
\math{\norm{\matU}_1}, we get
\eqar{
  \norm{\matU}_1
  &=&
  \norm{\matA\matR^{-1}}_1\nonumber\\
  &=&
  \sum_{i\in[d]}\norm{\matA(\matR^{-1})^{(i)}}_1\nonumber\\
  &\buildrel (a) \over \le&
  \sum_{i\in[d]}\norm{\Atilde(\matR^{-1})^{(i)}}_1\nonumber\\
 & =&
 \norm{\Atilde\matR^{-1}} _1\nonumber\\
  &=&
 \norm{\matQ\matR\matR^{-1}} _1\nonumber\\
  &=&
 \norm{\matQ}_1\nonumber\\
 &\buildrel (b) \over \le&
 \sqrt{rd}\cdot \norm{\matQ}_F\nonumber\\
 &\buildrel (c) \over =&
 \sqrt{rd}\cdot \sqrt{d}.\label{eq:standard-upper}
}
Above, in the derivation of \r{eq:standard-upper}
and below in the derivation of \r{eq:standard-lower}:
(a) uses \r{eq:getU-part2-isometry} which is assumed;
(b) is the standard relationship between \math{\ell_1} and \math{\ell_2}
entrywise-norms;
(c) is because \math{\matQ} is orthogonal, so \math{\norm{\matQ}_F=\sqrt{d}}
and \math{\norm{\matQ\xx}_2=\norm{\xx}_2}.
For a lower bound on \math{\norm{\matU\xx}_1}, we get
\mld{
  \norm{\matU\xx}_1
  =
  \norm{\matA\matR^{-1}\xx}_1
  \ {\buildrel (a) \over \ge}\ 
  \frac{1}{\kappa}\norm{\Atilde\matR^{-1}\xx}_1
  =
  \frac{1}{\kappa}\norm{\matQ\xx}_1
  \ {\buildrel (b) \over \ge}\ 
  \frac{1}{\kappa}\norm{\matQ\xx}_2
  \ {\buildrel (c) \over \ge}\ 
  \frac{1}{\kappa}\norm{\xx}_2
  \ge
  \frac{1}{\kappa}\norm{\xx}_\infty.\label{eq:standard-lower}
}
Dividing \r{eq:standard-upper} by \r{eq:standard-lower} and setting \math{\norm{\xx}_\infty=1}, we get
\math{\alpha(\matU)\le \kappa d\sqrt{r}}.

\paragraph{Part (\rn{3}).} 
The construction has three steps.
First, construct an embedding \math{\Atilde_1}
with distortion \math{\kappa\in O(d\ln d)}
with embedding dimension \math{r\in O(d\ln d)} as in
Theorem~\ref{theorem:L1-subspace}.
Second, using part (\rn{2}),
construct a well-conditioned basis \math{\matU_1} using \math{\Atilde_1}, and
then sample \math{O(d^{3.5}\ln^{1.5}d)} rows to get an embedding
\math{\Atilde_2} satisfying \r{eq:thm-L1-sampling-isometry} 
with \math{\varepsilon=\frac12} (constant distortion).
We now use the
methods in the proof of Theorem~5 in~\cite{DDHKM2009}.
Let \math{\Atilde_2=\tilde\matQ\matS}, where \math{\matQ} is an orthogonal
basis for the range of \math{\Atilde_2}.
The time to compute \math{\tilde\matQ} and \math{\matS} is
in \math{O(d^{5.5}\ln^{1.5}d)}.
Construct the quadratic form parameterized by the square invertible
matrix \math{\matG\in\R^{d\times d}}
for the John-ellipsoid of \math{\tilde\matQ}. Specifically,
let \math{B=\{\xx\in\R^d|\norm{\tilde\matQ_2\xx}_1\le 1}, and let
\math{\matG} be
such that if \math{\xx\transp\matG\transp\matG\xx\le 1}, then
\mld{
\xx\transp\matG\transp\matG\xx\le
\xx\transp\tilde\matQ\transp\tilde\matQ\xx\le
\sqrt{d}\cdot\xx\transp\matG\transp\matG\xx.
}
The time to compute \math{\matG} is in \math{rd^5\ln r} where \math{r\in
  O(d^{3.5}\ln^{1.5}d)} is the embedding dimension of
\math{\Atilde_2}, which is a runtime in \math{O(d^{8.5}\ln^{2.5}d)}.

Let 
\math{\tilde\matU=\tilde\matQ\matG^{-1}}.
Then, it is shown in Equation~(6) of
\cite{DDHKM2009} that for all \math{\xx\in\R^{d}},
\mld{
  \norm{\xx}_2\le\norm{\tilde\matU_2\xx}_1\le\sqrt{d}\cdot\norm{\xx}_2.
  \label{eq:isometry-Utilde}
}  
Let \math{\matU=\matA\matS^{-1}\matG^{-1}}. We bound \math{\alpha(\matU)}.
First, we bound \math{\norm{\matU}_1}:
\eqar{
  \norm{\matU}_1
  &=&
  \norm{\matA\matS^{-1}\matG^{-1}}_1\nonumber\\
  &=&
  \sum_{i\in[d]}\norm{\matA(\matS^{-1}\matG^{-1})^{(i)}}_1\nonumber\\
  &\buildrel (a)\over\le&
  c\sum_{i\in[d]}\norm{\Atilde_2(\matS^{-1}\matG^{-1})^{(i)}}_1\nonumber\\
  &=&
  c\norm{\Atilde_2\matS^{-1}\matG^{-1}}_1\nonumber\\
  &\buildrel (b)\over=&
  c\norm{\tilde\matU}_1\nonumber\\
  &=&
  c\sum_{i\in[d]}\norm{\tilde\matU\ee_i}_1\nonumber\\
  &\buildrel (c)\over\le&
  c\sqrt{d}\cdot\sum_{i\in[d]}\norm{\ee_i}_2\nonumber\\
  &=&
  c\cdot d^{1.5}.\label{eq:upper-ellipsoidal}
}
In the derivation of \r{eq:upper-ellipsoidal} above, and also for
the derivation of \r{eq:lower-ellipsoidal} below:
(a) is because, by construction, \math{\Atilde_2} is a constant
factor \math{\ell_1}-embedding for \math{\matA};
in (b) we used \math{\Atilde_2=\tilde\matQ} and
\math{\tilde\matU=\tilde\matQ\matG^{-1}};
in (c) we used \r{eq:isometry-Utilde}. We now
lower bound \math{\norm{\matU\xx}_1}:
\mld{
  \norm{\matU\xx}_1
  =
  \norm{\matA\matS^{-1}\matG^{-1}\xx}_1
  \ {\buildrel (a)\over\ge}\ 
  c\cdot\norm{\Atilde_2\matS^{-1}\matG^{-1}\xx}_1
  \ {\buildrel (b)\over=}\ 
  c\cdot\norm{\tilde\matU\xx}_1
  \ {\buildrel (c)\over\ge}\ 
  c\cdot\norm{\xx}_2
  \ge
  c\cdot\norm{\xx}_\infty.\label{eq:lower-ellipsoidal}
}
Dividing \r{eq:upper-ellipsoidal} by \r{eq:lower-ellipsoidal},
using the definition of
\math{\alpha(\matU)} and setting \math{\norm{\xx}_\infty=1} gives
\math{\alpha(\matU)\le d^{1.5}}.
\qedsymb

\subsection{Proof of Lemma~\ref{lemma:lewis-intro}:
  Sampling Using Lewis Weights}

The proof is entirely based on  
\cite{CP2015} and follows from the approximation guarantees provided by
approximating the leverage scores provided
in~Theorem~\ref{theorem:L2-leverage-intro} using the
methods from \cite{malik186}. First, we need the main Lewis weights
approximation lemma from~\cite{CP2015}.
For completeness, we give the details, which also serves to fix some constants,
and hence may be otherwise useful.
In step 3(a) of the algorithm, we approximate the leverage scores of
\math{\matW^{-1/2}\matA} using error parameter \math{\varepsilon=\frac14} to
choose the dimensions of \math{\matG} and \math{\Pi}. In~\cite{malik186} it
is shown that \math{|\tilde\tau_i-\tau_i|\le (\frac{\varepsilon}{1-\varepsilon}+2\varepsilon)\tau_i}, or
\mld{
  \frac{1}{6}\cdot\tau_i\le\tilde\tau_i\le \frac{11}{6}\cdot\tau_i,
}
which in the terminology of
\cite{CP2015} means that the \math{\tilde\tau_i} are a 6-approximation
of \math{\tau_i}, written \math{\tilde\tau_i\approx_{6}\tau_i}.
\begin{lemma}[See Proof of Lemma~2.4 in \cite{CP2015}]
  \label{lemma:lewis-weight-iterative}
  If the iterative algorithm to compute
  Lewis weights is run for \math{2\log_2\log_2n} steps with a 6-factor
  approximation of leverage scores in each iteration, then
  the resulting Lewis weights are a
  \math{6\cdot2^{1/\log_2n}}-factor approximation.
\end{lemma}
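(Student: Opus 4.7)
}
The plan is to track the multiplicative approximation factor \math{\alpha_k}, defined as the smallest number such that \math{w_i^{(k)}/w_i^* \in [\alpha_k^{-1},\alpha_k]} for every \math{i\in[n]}, where \math{w_i^*} denotes the true Lewis weights. Following the proof of Lemma~2.4 in \cite{CP2015}, the argument decomposes into three parts: (i) a one-step contraction inequality for \math{\alpha_k}, (ii) solving the resulting recursion, and (iii) bounding the initial error \math{\alpha_0} under the uniform initialization.

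For (i), I first consider the idealized update \math{w_i^{(k+1)} = \sqrt{w_i^{(k)}\,\tau_i(\matW_k^{-1/2}\matA)}} with \emph{exact} leverage scores. Using that \math{w_i^*} is a fixed point of this map by definition of Lewis weights, together with the monotonicity and scale invariance of \math{\tau_i(\matW^{-1/2}\matA)} in the diagonal entries of \math{\matW}, one shows that if \math{w^{(k)}/w^*\in[\alpha_k^{-1},\alpha_k]} entrywise then \math{\tau_i(\matW_k^{-1/2}\matA)/w_i^* \in [\alpha_k^{-1},\alpha_k]}; the square root in the update then yields \math{w_i^{(k+1)}/w_i^*\in[\alpha_k^{-1/2},\alpha_k^{1/2}]}, i.e.\ one iteration halves the logarithmic error. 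Substituting the 6-approximate leverage scores \math{\tilde\tau_i\approx_6\tau_i} inserts an extra multiplicative factor of at most \math{6} inside the square root, giving the recursion
\begin{equation}
\alpha_{k+1} \le \sqrt{6\,\alpha_k}.
\end{equation}

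For (ii), this recursion has fixed point \math{\alpha_\infty=6}, and unrolling it yields
\begin{equation}
\alpha_T \;\le\; 6\cdot(\alpha_0/6)^{1/2^T}\;\le\;6\cdot\alpha_0^{1/2^T}.
\end{equation}
For (iii), from the initialization \math{w_i^{(0)}=1} together with the basic Lewis-weight facts \math{w_i^*\in(0,1]} and \math{\sum_{i}w_i^*=d}, we get \math{\alpha_0\le n}. Plugging in \math{T=2\log_2\log_2 n} gives \math{2^T=(\log_2 n)^2} and hence
\begin{equation}
\alpha_0^{1/2^T}\;\le\; n^{1/(\log_2 n)^2}\;=\;2^{1/\log_2 n},
\end{equation}
so that \math{\alpha_T\le 6\cdot 2^{1/\log_2 n}}, which is precisely the advertised bound.

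The main technical obstacle is step (i): showing that one step of the exact Lewis iteration genuinely halves the logarithmic approximation error. This rests on the monotonicity of statistical \math{\ell_2}-leverage scores of \math{\matW^{-1/2}\matA} in the diagonal \math{\matW}, combined with the defining fixed-point relation \math{w_i^*=\tau_i((\matW^*)^{-1/2}\matA)}; the square root in the update is the crucial ingredient converting a perturbation by \math{\alpha} into one by \math{\sqrt{\alpha}}. Once this per-step contraction, amplified by \math{\sqrt{6}} owing to approximate leverage scores, is established, the remainder of the argument is routine unrolling and substitution of \math{T}.
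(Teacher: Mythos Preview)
Your overall outline matches Cohen--Peng's argument, and your recursion \(\alpha_{k+1}\le\sqrt{6\,\alpha_k}\) and its unrolling in step~(ii) are correct. There are, however, two gaps.

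\textbf{Step~(i) is stated incorrectly.} The intermediate claim that \(\tau_i(\matW_k^{-1/2}\matA)/w_i^*\in[\alpha_k^{-1},\alpha_k]\) is false in general: the leverage score \(\tau_i(\matW_k^{-1/2}\matA)=(w_i^{(k)})^{-1}\,a_i\transp(\matA\transp\matW_k^{-1}\matA)^{-1}a_i\) carries an extra factor \(w_i^*/w_i^{(k)}\in[\alpha_k^{-1},\alpha_k]\) on top of the \(\alpha_k^{\pm1}\) coming from the Loewner comparison of the middle matrix, so naively you only get \(\tau_i/w_i^*\in[\alpha_k^{-2},\alpha_k^2]\), which would not contract. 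The actual Cohen--Peng argument bounds the \emph{product} \(w_i^{(k)}\tau_i(\matW_k^{-1/2}\matA)=a_i\transp(\matA\transp\matW_k^{-1}\matA)^{-1}a_i\) directly; here the explicit \(w_i^{(k)}\) cancels, the quantity depends on \(\matW_k\) only through \(\matA\transp\matW_k^{-1}\matA\), and Loewner monotonicity alone gives the ratio in \([\alpha_k^{-1},\alpha_k]\) relative to \((w_i^*)^2\). Taking the square root then yields the desired halving. Your conclusion \(\alpha_{k+1}\le\sqrt{6\alpha_k}\) is right, but the route you describe does not get there.

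\textbf{Step~(iii) does not follow.} The facts \(w_i^*\in(0,1]\) and \(\sum_i w_i^*=d\) do \emph{not} imply \(\alpha_0=1/\min_i w_i^*\le n\). For instance, with \(d=1\) and rows \(a_1=1,\ a_2=\cdots=a_n=\epsilon\), the \(\ell_1\) Lewis weights are \(w_i^*=|a_i|/\sum_j|a_j|\), so \(\min_i w_i^*\approx\epsilon\) can be arbitrarily small while \(n\) is fixed. Thus \(\alpha_0\) is unbounded in terms of \(n\) alone, and your substitution \(\alpha_0^{1/2^T}\le n^{1/(\log_2 n)^2}\) is unjustified. In Cohen--Peng the bound on the number of iterations ultimately depends on \(\log\log\) of the initial ratio \(\max_i 1/w_i^*\), which is controlled only under an additional assumption (e.g.\ polynomially bounded bit complexity of \(\matA\), or an a~priori polynomial lower bound on the Lewis weights). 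You need to either state such an assumption explicitly, or give a separate argument bounding \(\alpha_1\) (after one step) by \(\mathrm{poly}(n)\) independent of \(\alpha_0\); the facts you list are not enough.
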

So the resulting Lewis weight estimates are approximately within
a factor of \math{6} from the true lewis weights. We now reformulate/apply
Theorem~2.3 in \cite{CP2015} to obtain a concrete coreset size that  gives
a relative error embedding.
\begin{theorem}[Theorem~2.3 of \cite{CP2015}]
  \label{theorem:lewis-general-beta}
  Let \math{w_i} be a \math{\beta}-approximation to the true Lewis
  weights \math{w_i^*} of a matrix \math{\matA}, that is:
  \math{
    \frac{1}{\beta}w_i^*\le w_i\le\beta w_i^*.
  }
  Use sampling probabilities \math{p_i=w_i/\sum_{j\in[n]}w_j} and
  construct a sampling matrix \math{\Pi\transp} with
  \math{r=2c\cdot \beta^2\varepsilon^{-2} d\ln(2c\beta^2 d\varepsilon^{-2})} 
  rows, where each row of \math{\Pi\transp} is chosen independently to be 
  \math{\ee_i\transp/p_i} with probability \math{p_i}. Let
  \math{\Atilde=\Pi\transp\matA}. Then, for all \math{\xx\in\R^d},
  \mld{
    (1+\varepsilon)^{-1}\norm{\matA\xx}\le \norm{\Atilde\xx}\le
      (1+\varepsilon)\norm{\matA\xx}.
  }
\end{theorem}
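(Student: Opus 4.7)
The plan is to follow the argument of Cohen--Peng, Theorem~2.3 in~\cite{CP2015}, tracking how the $\beta$-approximation factor propagates through the constants. The key structural fact about the true Lewis weights $w_i^*$ is the fixed-point identity $w_i^* = \tau_i(\matW^{*-1/2}\matA)$: after rescaling rows of $\matA$ by $1/\sqrt{w_i^*}$, the $\ell_2$ leverage scores of the rescaled matrix equal the Lewis weights themselves. This identity is what converts an $\ell_1$ sampling problem into an object that can be controlled by $\ell_2$ matrix concentration inequalities.

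First I would set up the change of variables $\matB = \matW^{*-1/2}\matA$ and record the two consequences of the Lewis identity that are used in~\cite{CP2015}: (i)~$\sum_i w_i^* = d$, and (ii)~for every $\xx \in \R^d$, $\norm{\matA\xx}_1$ can be bounded above and below by quantities of the form $\sum_i w_i^* \cdot |\matB_{(i)}\xx|^{?}$ in ways that enable a Bernstein-type argument on the row-sampled estimator. Then, using the assumed $\beta$-approximation $\tfrac{1}{\beta}w_i^* \le w_i \le \beta w_i^*$, I would show that the sampling probabilities $p_i = w_i/\sum_j w_j$ satisfy $p_i \ge \tfrac{1}{\beta^2}\cdot w_i^*/d$. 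This pointwise lower bound is the only property of the approximation used in the concentration step; every approximate weight sample can be ``charged'' to a true Lewis-weight sample at cost at most a $\beta^2$ blow-up in the number of rows needed.

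Next I would invoke the matrix-concentration step from~\cite{CP2015}: sampling $r$ rows i.i.d.\ with the rescaling $\ee_i\transp/p_i$ preserves $\norm{\matA\xx}_1$ to within $(1\pm\varepsilon)$ simultaneously for all $\xx$, provided $r \ge c_0 \cdot (\text{effective weight}) \cdot \varepsilon^{-2} d \ln(d/\varepsilon)$ for an absolute constant $c_0$. Substituting the $\beta^2$ blow-up into this bound yields the claimed sample size $r = 2c\beta^2\varepsilon^{-2} d\ln(2c\beta^2 d\varepsilon^{-2})$, where $c = c_0/2$ is the absolute constant $C_s$ appearing in Theorem~2.3 of~\cite{CP2015}. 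The final embedding guarantee for all $\xx\in\R^d$ then follows by combining the two-sided concentration with a standard $\delta$-net / triangle-inequality sweep over the unit ball in the range of $\matA$, exactly as in~\cite{CP2015}.

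The main obstacle is the matrix-concentration step, which relies on Lewis's $\ell_p$ change-of-density theorem to reduce $\ell_1$ norm preservation to an $\ell_2$-type operator inequality on the Lewis-rescaled matrix. Rather than reprove this machinery, I would cite Theorem~2.3 of~\cite{CP2015} as the black box and present this proof simply as verification that the approximation factor $\beta$ enters the sample complexity as $\beta^2$ (from the normalization of $p_i$) and that Lemma~\ref{lemma:lewis-weight-iterative} supplies the required $\beta = 6\cdot 2^{1/\log_2 n}$, which is an absolute constant for any $n \ge 2$, thereby reducing the sample complexity to $O(\varepsilon^{-2} d \ln(d/\varepsilon))$ as claimed in Lemma~\ref{lemma:lewis-intro}.
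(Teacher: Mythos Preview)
Your proposal is correct in its final conclusion and in identifying that the only role of the $\beta$-approximation is to provide a pointwise lower bound on the sampling probabilities, but the route you sketch is considerably heavier than what the paper actually does. You plan to re-enter the Cohen--Peng machinery (Lewis rescaling $\matB=\matW^{*-1/2}\matA$, matrix concentration, a $\delta$-net sweep) and track $\beta$ through it, only to retreat at the end and cite Theorem~2.3 of~\cite{CP2015} as a black box. The paper never opens the box at all: it simply recalls that the hypothesis of that theorem is the single inequality $\lambda_i\ge c\,\varepsilon^{-2}w_i^*\ln r$ with $r=\sum_i\lambda_i$, defines $\lambda_i=2c\beta\varepsilon^{-2}w_i\ln(2c\beta^2\varepsilon^{-2}d)$ explicitly, and then does two short algebraic checks: first that $r\le 2c\beta^2\varepsilon^{-2}d\ln(2c\beta^2\varepsilon^{-2}d)$ (using $w_i\le\beta w_i^*$ and $\sum_i w_i^*=d$), and second that $\lambda_i-c\varepsilon^{-2}w_i^*\ln r\ge 0$ (using $w_i\ge w_i^*/\beta$ and the bound on $r$ just obtained). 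Your intermediate step $p_i\ge\beta^{-2}w_i^*/d$ is equivalent in spirit, but you leave the circular dependence between $r$ and $\ln r$ implicit, whereas the paper resolves it by the direct two-line calculation $2\ln(2c\beta^2\varepsilon^{-2}d)-\ln r\ge 0$. What your approach buys is a clearer conceptual picture of why $\beta^2$ is the right penalty; what the paper's approach buys is a five-line proof with no appeal to concentration or nets.
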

Lemma~\ref{lemma:lewis-intro} follows by using
Theorem~\ref{theorem:lewis-general-beta} with \math{\beta} upper
bounded
by \math{12}. (\math{c} is the absolute constant \math{C_s} in
Theorem~2.3 of \cite{CP2015}).
\begin{proof}
  Let \math{\lambda_i=2c\beta\varepsilon^{-2}w_i\ln(2c\beta^2 \varepsilon^{-2}d)}.
  The \math{\lambda_i} are just rescaled versions of the \math{w_i}. Let
  \math{r=\sum_{i\in[n]}\lambda_i} and let \math{p_i=\lambda_i/r}. The \math{p_i}
  are exactly the sampling probabilities in the statement of the theorem.
  We prove that
  \mld{
    \lambda_i\ge \frac{c w_i^*}{\varepsilon^2}\ln r,\label{eq:theorem:lewis-sampling-1}}
  which means that the \math{\lambda_i} satisfy the conditions to apply
  Theorem~2.3
  in \cite{CP2015}, which gives that \math{r} samples suffice. Here,
  \eqar{
    r&=&\sum_{i\in[n]}\lambda_i\nonumber\\
    &=&2c\beta\varepsilon^{-2}\ln(2c\beta^2 \varepsilon^{-2}d)\sum_{i\in[n]}w_i
    \nonumber\\
    &\le&
    2c\beta\varepsilon^{-2}\ln(2c\beta^2 \varepsilon^{-2}d)
    \sum_{i\in[n]}\beta\cdot w_i^*\nonumber\\
    &=&
    2c\beta^2\varepsilon^{-2}d\ln(2c\beta^2 \varepsilon^{-2}d),
  }
  which proves the theorem. All that remains is to show \r{eq:theorem:lewis-sampling-1}. Let us consider \math{\lambda_i-cw_i^*\varepsilon^{-2}\ln r}:
  \eqar{
    \lambda_i-\frac{cw_i^*}{\varepsilon^2}\ln r
    &=&
    \frac{2c\beta}{\varepsilon^2}w_i\ln(2c\beta^2 \varepsilon^{-2}d)-
    \frac{cw_i^*}{\varepsilon^2}\ln r\nonumber\\
    &\ge&
    \frac{2c}{\varepsilon^2}w_i^*\ln(2c\beta^2 \varepsilon^{-2}d)-
    \frac{cw_i^*}{\varepsilon^2}\ln r\nonumber\\
    &=&
    \frac{c w_i^*}{\varepsilon^2}(2\ln(2c\beta^2 \varepsilon^{-2}d)-\ln r),
  }
  where in the middle inequality, we used
  \math{w_i\ge w_i^*/\beta}. To finish the proof, we show that the
  term in parentheses above is non-negative. Since
  \math{\ln r\le \ln(4c\beta^2\varepsilon^{-2}d\ln(c\beta \varepsilon^{-2}d))},
    we have that
    \mld{
      2\ln(2c\beta^2 \varepsilon^{-2}d)-\ln r
      \ge
      2\ln(2c\beta^2 \varepsilon^{-2}d)
      -
      \ln(2c\beta^2\varepsilon^{-2}d)
      -
      \ln\ln(2c\beta^2 \varepsilon^{-2}d)
      \ge 0.}
\end{proof}
When \math{n} is large, \math{\beta\approx 6} and
the coreset size is
\math{72c\cdot \varepsilon^{-2} d\ln(72c \varepsilon^{-2}d)}
as claimed in Lemma~\ref{lemma:lewis-intro}.
\qedsymb

\bibliographystyle{plain}
\bibliography{Local,mypapers}

\end{document}